\documentclass{article}

\usepackage{microtype}
\usepackage{graphicx}
\usepackage{subcaption}
\usepackage{booktabs} % for professional tables
\usepackage{hyperref}

\usepackage[accepted]{icml2026}

\usepackage{amsmath}
\usepackage{amssymb}
\usepackage{mathtools}
\usepackage{amsthm}
\usepackage{bm}
\usepackage{bbm}
\usepackage{listings}   
\usepackage{multirow}
\usepackage[table]{xcolor}
\usepackage{colortbl}
\usepackage[T1]{fontenc}

\definecolor{bestred}{RGB}{255,230,230}

\usepackage[capitalize,noabbrev]{cleveref}

\definecolor{keywordred}{RGB}{225, 50, 60}
\definecolor{funcblue}{RGB}{60, 100, 230}
\definecolor{commentgray}{RGB}{140, 140, 145}
\definecolor{lightgrayback}{RGB}{250, 250, 250}
\definecolor{stringpurple}{RGB}{148, 0, 211}
\definecolor{vsgreen}{RGB}{106, 153, 85}

\theoremstyle{plain}
\newtheorem{theorem}{Theorem}[section]
\newtheorem{proposition}[theorem]{Proposition}
\newtheorem{lemma}[theorem]{Lemma}

\theoremstyle{definition}

\newtheorem{assumption}[theorem]{Assumption}
\theoremstyle{remark}

\usepackage[disable,textsize=tiny]{todonotes}
\icmltitlerunning{Density-Guided Robust Counterfactual Explanations on Tabular Data under Model Multiplicity}

\begin{document}
 \twocolumn[
  \icmltitle{Density-Guided Robust Counterfactual Explanations on Tabular Data under Model Multiplicity}

% \twocolumn[
%   \icmltitle{Submission and Formatting Instructions for \\
%     International Conference on Machine Learning (ICML 2026)}

  % It is OKAY to include author information, even for blind submissions: the
  % style file will automatically remove it for you unless you've provided
  % the [accepted] option to the icml2026 package.

  % List of affiliations: The first argument should be a (short) identifier you
  % will use later to specify author affiliations Academic affiliations
  % should list Department, University, City, Region, Country Industry
  % affiliations should list Company, City, Region, Country

  % You can specify symbols, otherwise they are numbered in order. Ideally, you
  % should not use this facility. Affiliations will be numbered in order of
  % appearance and this is the preferred way.
  \icmlsetsymbol{equal}{*}

  \begin{icmlauthorlist}
    \icmlauthor{Jun Tan}{equal,csu}
    \icmlauthor{Qing Guo}{equal,csu}
    \icmlauthor{Zicheng Xu}{csu}
    \icmlauthor{Jinglin Li}{csu}
    \icmlauthor{QI Fang}{csu}
    \icmlauthor{Ning Gui}{csu}
    % \icmlauthor{Firstname7 Lastname7}{comp}
    %\icmlauthor{}{sch}
    % \icmlauthor{Firstname8 Lastname8}{sch}
    % \icmlauthor{Firstname8 Lastname8}{yyy,comp}
    %\icmlauthor{}{sch}
    %\icmlauthor{}{sch}
  \end{icmlauthorlist}

  \icmlaffiliation{csu}{School of Computer Science and Engineering, Central South University, Changsha, China}
  % \icmlaffiliation{comp}{Company Name, Location, Country}
  % \icmlaffiliation{sch}{School of ZZZ, Institute of WWW, Location, Country}

  \icmlcorrespondingauthor{QI Fang}{csqifang@csu.edu.cn}  
  \icmlcorrespondingauthor{Ning Gui}{ninggui@gmail.com}

  % You may provide any keywords that you find helpful for describing your
  % paper; these are used to populate the "keywords" metadata in the PDF but
  % will not be shown in the document
  \icmlkeywords{Machine Learning, ICML}

  \vskip 0.3in
]

% this must go after the closing bracket ] following \twocolumn[ ...

% This command actually creates the footnote in the first column listing the
% affiliations and the copyright notice. The command takes one argument, which
% is text to display at the start of the footnote. The \icmlEqualContribution
% command is standard text for equal contribution. Remove it (just {}) if you
% do not need this facility. 

% Use ONE of the following lines. DO NOT remove the command.
% If you have no special notice, KEEP empty braces:
\printAffiliationsAndNotice{\icmlEqualContribution}  % no special notice (required even if empty)
% Or, if applicable, use the standard equal contribution text:
% \printAffiliationsAndNotice{\icmlEqualContribution}

\begin{abstract}
Counterfactual explanations (CEs) are essential for actionable recourse, yet their reliability is often compromised in low-density regions, where classifiers exhibit high variance. 
Unlike existing methods that rely on expensive ensemble intersections to define stability, we propose \textit{DensityFlow}, a generative framework that constructs robust CEs by adhering to the high-confidence data manifold. Specifically, we model the counterfactual generation as continuous-time dynamics parameterized by Neural ODE, guided by a differentiable density score to actively avoid uncertain, low-density areas. This density score is learned via Noise Contrastive Estimation, effectively leveraging a $(K{+}1)$-way discriminator to estimate density ratios. For black-box settings, we introduce a local proxy distillation mechanism that aligns a lightweight surrogate with the target model strictly within the trajectory of CE generation, enabling efficient gradient-based optimization with minimal queries. Experiments demonstrate that \textit{DensityFlow} achieves superior validity under model multiplicity while significantly reducing query costs compared to ensemble-based baselines. Our implementation is available at \url{https://github.com/G-AILab/DensityFlow}.

% Counterfactual explanations(CEs) provide actionable recourse in algorithmic decision-making, yet they can be fragile when the deployed decision system is not unique or static. We show that such fragility concentrates in low-density regions near the decision boundary, where plausible models exhibit inconsistent behavior.
% From a data-support perspective, we characterize robustness under model multiplicity as a statistical uncertainty problem that concentrates in low-density regions. Based on this insight, we propose \textit{DensityFlow}, an end-to-end density-aware framework for robust counterfactual explanation. 
% DensityFlow parameterizes CEs as continuous-time Neural-ODE trajectories and biases solutions toward high-support regions via a density-guieded penalty. We also introduce query-efficient local distillation that aligns a differentiable proxy with black-box ensemble consensus under the trajectory-induced query distribution. Experiments show that DensityFlow yields more robust and effective CEs than existing methods while substantially reducing query complexity.

\end{abstract}

\section{Introduction}
%反事实解释（CE）在解释和审核机器学习系统方面发挥着核心作用，尤其是在信用审批和风险评估等高风险决策场景中~\cite{karimi2022survey, guidotti2024counterfactual}。给定一个分类器 $\mathcal{H}:\mathcal{X}\to\mathcal{Y}$、一个目标类别 $y^*$ 和一个满足 $\mathcal{H}(\bm{x})\neq y^*$ 的实例 $\bm{x}\in\mathcal{X}$，反事实解释是指一个扰动实例 $\bm{x}'\in\mathcal{X}$，使得 $\mathcal{H}(\bm{x}')=y^*$ 且 $\bm{x}'$ 最小化成本函数 $c$~\cite{kanamori2024learning}。然而，最小成本算法通常位于决策边界附近，因此对模型的微小变化或噪声扰动非常敏感，因而缺乏鲁棒性。在决策边界附近的值空间是稀疏的，稀疏对应低密度，这就导致在低密度区域的反事实解释通常，

% 

% 如图~\ref{fig:observation}(上图)所示，$h_1$ 和 $h_2$ 均构成合理且准确的决策边界。然而，CE 通常经过优化，以最小成本跨越决策边界。在数据支持有限的区域，优化或搜索过程更容易受到噪声或异常值的影响，导致得到的解成本较低，但缺乏足够的经验约束。因此，CE 最终可能落入低密度区域，该区域仅对 $h_1$ 有效。% 如图~\ref{fig:observation}(下图)所示，模型分歧在低密度区域尤为显著，预测结果的稳定性也较低。位于此类区域的 CE 可能满足一个或多个模型的有效性约束，但它们的跨模型迁移性和作为可靠参考的价值仍然有限。% 因此，我们认为应该区别对待高密度（可靠）区域和低密度（不确定）区域。在生成和优化反事实的过程中，明确地对这种区别进行建模，对于获得更可靠、更稳健的反事实至关重要。影响鲁棒性的一个关键因素是数据密度。最近的研究表明，类别密度较高的区域通常具有更强的鲁棒准确率，而低密度区域更容易出现脆弱性和对抗行为~\cite{shafahi2018adversarial, xiong2024all}。 

% However, minimum-cost solutions typically lie close to the decision boundary, making them highly sensitive to small model variations or noise perturbations, and therefore lacking robustness. 
% One key factor affecting robustness is data density. Recent study have shown that regions with higher class density are generally associated with stronger robust accuracy, whereas low‑density regions are more prone to fragility and adversarial behavior~\cite{shafahi2018adversarial, xiong2024all}.

\begin{figure}[t!]
    \centering
    \includegraphics[width=0.9\linewidth]{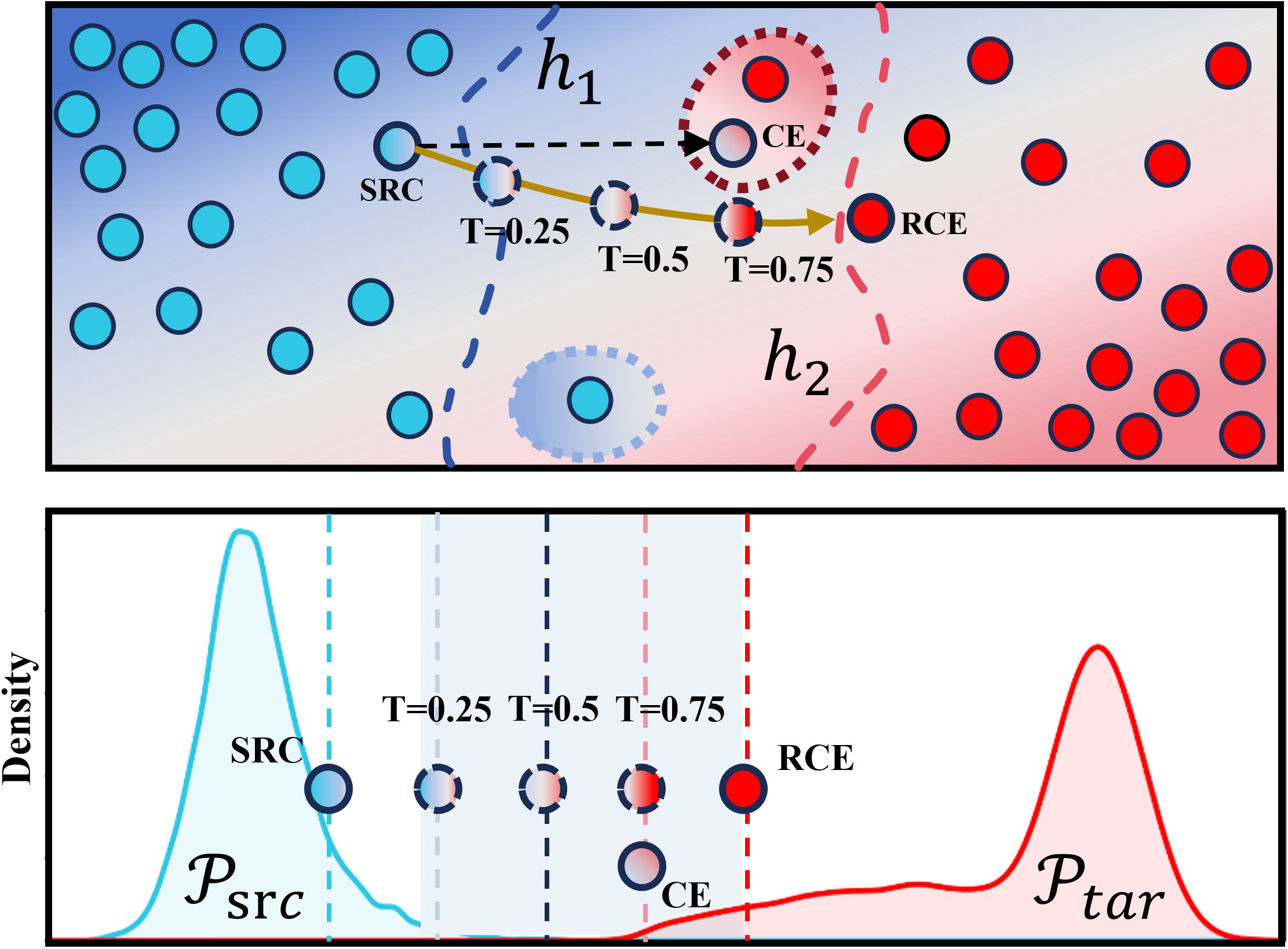}
    \caption{
Model disagreement in low-density regions along counterfactual trajectories. \textbf{Upper}: Plausible classifiers ($h_1$, $h_2$) may induce different decision boundaries in regions with limited data support, causing CEs to pass through areas where validity does not generalize across classifiers. \textbf{Lower}: The standard CE tends to fall into the tail of the $\mathcal{P}_{\rm tar}$. RCE transforms to the high-density region of $\mathcal{P}_{\rm tar}$, avoiding the uncertainty of the distribution tail.
}
\vspace{-0.1in}
    \label{fig:observation}
\end{figure}

Counterfactual explanations (CEs) identify the minimal or most plausible perturbations required to alter a model’s prediction for a specific instance to a desired outcome. Serving as a crucial mechanism for algorithmic recourse and transparency, CEs are indispensable in high-stakes decision-making scenarios, e.g., risk assessment~\cite{karimi2022survey}. Formally, given a classifier $h:\mathcal{X}\to\mathcal{Y}$, a target class $y^*$, and a query instance $\bm{x}\in\mathcal{X}$ where $h(\bm{x})\neq y^*$, a CE is defined as a perturbed instance $\bm{x}'\in\mathcal{X}$ such that $h(\bm{x}')=y^*$, minimizing a specific cost function $c(\bm{x}, \bm{x}')$.

% 加一句话 由于generative由于很强的建模能力，建模分布，收到了广泛的注视。由于强大的建模能力，VAE flow..工作，一些工作扩展到latent space去transform，举个例子，加个citation。
% Recently, generative models have been quickly adopted in this domain as they enable the conditional generation of minimally perturbed, label-flipping instances by learning the underlying data distribution (manifold). 
% Recently, generative models, with their powerful capability in modeling complex distributions, have been quickly adopted in this domain to enable the conditional generation of minimally perturbed, label-flipping instances.
% From this perspective, CEs can also be viewed as a constrained optimal transport problem to transport any sample $\bm{x}$ from the source distribution $\mathcal{P}_{\rm src}$ to a new point $\bm{x}'$ that belongs to the target distribution $\mathcal{P}_{\rm tar}$. Many generative-based CE methods have been proposed, e.g., VAE-based~\cite{pawelczyk2020learning}, diffusion-based~\cite{schrodi2024latent}, and flow-based~\cite{duong2023ceflow}. 

Recently, generative models have emerged as a powerful paradigm in this domain, leveraging the capacity to model complex distributions to enable the rapid conditional generation of counterfactuals. From a theoretical perspective, this process can be framed as a constrained optimal transport problem, aiming to transport a query instance $\bm{x}$ from the source distribution $\mathcal{P}_{\rm src}$ to the target distribution $\mathcal{P}_{\rm tar}$ with minimal cost. Various architectures have been proposed to instantiate this framework, including VAE-based~\cite{pawelczyk2020learning}, diffusion-based~\cite{schrodi2024latent}, and flow-based~\cite{duong2023ceflow}. However, by indiscriminately modeling the entire distribution, these methods remain vulnerable to low-confidence samples, which distort the learned manifold and degrade robustness.

 % Recent generative works () instantiate this view by learning a data prior or a latent representation for $\mathcal{P}_{\rm src}$ and $\mathcal{P}_{\rm tar}$, and then performing optimization in latent space to obtain $\bm{x}'$ that is both feasible and plausible under the learned data prior.

% 第三段话结合图来讲，强大的学习能力也会被噪声/离群的样本/无样本空间的影响，导致了无法实现RCE。然后开始讲fig1.我有两个类，空白区域和边界应该怎么划分，这就导致了长尾分布。对于鲁棒的RCE，不能学长尾分布。现有的非生成式的方法..MC和MM.需要非常大的开销或者对模型有一定的假设。仍然是不可解释的。

Fig.~\ref{fig:observation} illustrates the inherent risks of counterfactual generation in low-density regions. Since the data distribution is sparse between classes (Upper), the decision boundaries of plausible classifiers ($h_1, h_2$) often diverge significantly in these areas (the Rashomon effect\cite{breiman2001statistical}).
Standard CEs, driven by distance minimization, tend to settle in the long-tail regions of the target distribution $\mathcal{P}_{\rm tar}$ (Lower), where model consensus is weak. Thus, achieving robust counterfactuals (RCE) requires guiding the generation toward high-density regions. Although some non-generative methods address this by enforcing consensus across model ensembles~\cite{jiang2024recourse,stkepka2025counterfactual}, they are bottlenecked by high computational costs and poor scalability, often relying on heuristic constraints that do not fundamentally model the data distribution.

For generative RCEs, capturing the underlying data manifold is essential to ensure transport paths remain within reliable, high-density regions. Motivated by this, we introduce \textbf{DensityFlow}, a continuous-flow generator learned under class-wise density guidance. We quantify density by reformulating the $K$-class problem as a $(K{+}1)$-way discrimination task against uniform noise. Rather than learning density in isolation, we learn the density signal jointly with the predictive validity. This coupling prevents the model from overfitting to sparse outliers, yielding a smoother density landscape. Finally, this robust score guides the flow dynamics, steering trajectories through high-confidence regions while minimizing transport costs. Our contributions are as follows:

\noindent\textbf{Density-guided Neural ODE Framework.} We propose DensityFlow, a generative framework that ensures robust counterfactuals by constraining the generation process to the high-density data manifold. We implement this via a Density-guided Neural ODE, where the continuous flow dynamics are explicitly steered by a learned density potential to avoid uncertain regions.

\noindent\textbf{Robust Density Estimation via NCE.} We introduce a $(K{+}1)$-way formulation that leverages Noise Contrastive Estimation to learn a differentiable density score alongside the classification task. We theoretically prove that this score serves as an effective proxy for sample density, and empirically show that joint optimization with validity prevents overfitting to outliers.

\noindent\textbf{Trajectory-Aware Local Distillation.} For black-box settings, we design a query-efficient distillation strategy that aligns a lightweight surrogate with the target model only within the regions visited during generation. This approach minimizes redundant queries while maintaining high gradient fidelity in the trusted neighborhood.

Extensive experiments on synthetic and real-world datasets confirm that DensityFlow achieves state-of-the-art robustness and validity under model multiplicity, while requiring significantly fewer queries than existing methods.

\noindent\textbf{Conflict of interest disclosure:} No conflict of interest.

\section{Related Work}

\textbf{Counterfactual Explanations}, often framed as Algorithmic Recourse~\cite{ustun2019actionable}, have become a cornerstone of explainable AI~\cite{karimi2022survey}. Early works predominantly adopted a per-sample optimization paradigm, iteratively perturbing instances in the input space via gradient descent~\cite{wachter2017counterfactual,mothilal2020explaining}, constrained optimization~\cite{kanamori2020dace}, or Mixed-Integer Programming~\cite{parmentier2021optimal}. 

However, these methods often neglect the underlying data manifold, leading to unrealistic or out-of-distribution explanations. Consequently, recent research has pivoted towards generative models—including VAEs~\cite{pawelczyk2020learning, panagiotou2024tabcf}, diffusion models~\cite{jeanneret2022diffusion,madaan2023diffusion,schrodi2024latent}, and normalizing flows~\cite{dombrowski2021diffeomorphic,duong2023ceflow,wielopolski2024probabilistically}—to leverage learned data priors. By performing search or generation within a latent space, these approaches better ensure that the resulting CEs satisfy both feasibility and plausibility.

\textbf{Robust Counterfactual Explanations} are typically categorized into four perspectives: model changes, model multiplicity (MM), noisy execution, and input changes~\cite{jiang2024robust}. Given our focus, we restrict our discussion to MM-based robustness. 

Early research often formulated MM-robustness as a constrained optimization problem, employing Mixed-Integer Linear Programming (MILP) to enforce ensemble consensus~\cite{leofante2023counterfactual}. To handle heterogeneous architectures, subsequent works shifted toward probabilistic frameworks, using stability metrics~\cite{hamman2024robust} or entropic risk~\cite{noorani2025counterfactual}. In practical black-box settings where model internals are inaccessible~\cite{guidotti2024counterfactual}, existing methods rely on rule-based consensus~\cite{jiang2024recourse} or stochastic retraining~\cite{stkepka2025counterfactual}. However, these iterative methods suffer from prohibitive query costs. While some methods implicitly consider data support via prototype retrieval (e.g., FACE~\cite{poyiadzi2020face}) or VAE latents~\cite{pawelczyk2020counterfactual}, they fail to explicitly utilize differentiable density scores to steer generation, leaving CEs vulnerable to low-density outliers—a gap that our work addresses.

\section{Problem Statement}
\label{sec:problem_statement}

\subsection{Preliminaries and Density Definition}
\label{sec:preliminaries}

\paragraph{Notations.} 
Consider a supervised classification task on an input space $\mathcal{X} \subseteq \mathbb{R}^d$ and a label space $\mathcal{Y} = \{1, \dots, K\}$. Let $h: \mathcal{X} \to \mathcal{Y}$ denote a trained classifier which we aim to explain. Given a query instance $\bm{x} \in \mathcal{X}$ with the original predicted label $y = h(\bm{x})$, the goal of Counterfactual Explanations (CEs) is to find a perturbed instance $\bm{x}'$ such that the prediction flips to a target class $y^* \neq y$, while maintaining proximity to $\bm{x}$.

\paragraph{Class-Conditional Density.}
A critical aspect of robust counterfactual generation is the alignment with the underlying data distribution. Let $p_{\text{data}}(\bm{x})$ denote the marginal data density. However, relying solely on $p_{\text{data}}(\bm{x})$ is insufficient for CEs, as high-density regions may belong to the original class $y$, leading to trivial or adversarial solutions. Here, we define the \textit{target class-conditional density} as $p(\bm{x} | y^*)$. This metric quantifies how well an instance $\bm{x}$ conforms to the manifold of the target class $y^*$. We assume access to a density estimator(discussed in the later section) that approximates this likelihood. 
Formally, a valid and robust counterfactual $\bm{x}'$ should satisfy:
\begin{equation}
    \bm{x}' \in \mathcal{X}_{\text{trust}}(y^*) := \{ \bm{x} \in \mathcal{X} \mid p(\bm{x} | y^*) \geq \tau \cdot p_{\text{ref}}\}
\label{equ:density-trust}
\end{equation}

where $p_{\text{ref}}$ denotes the reference distribution term, which serves as the background distribution for NCE-based trust-region construction. This definition ensures that the generated counterfactuals are semantically grounded in the distribution of the target class. The threshold $\tau$ explicitly controls the trade-off between robustness and feasibility. A high $\tau$ restricts the search to the varying modes (core) of the class distribution, guaranteeing high robustness and model agreement, but potentially increasing the transport cost $c(\bm{x}, \bm{x}')$ or making the counterfactual unreachable due to the overly strict manifold constraints.
In practice, we parameterize the threshold $\tau$ via the data--noise sampling ratio, i.e., $N_{\mathrm{noise}}/N_{\mathrm{data}}$ for calibration. We provide a detailed discussion in Appendix~\ref{sec:discussion_of_noise}.

\subsection{RCE as Density-Constrained Optimal Transport}
\label{sec:rce_formulation}
 We study RCE under model multiplicity (MM). Similar to recent work, such as~\cite{jiang2024robust}, we consider a finite set of plausible predictors $\mathcal{M} = (h_j)_{j=1}^m$, where $m$ denotes the number of candidate models, and each $h_j : \mathcal{X} \rightarrow \mathcal{Y}$ is a well-trained predictor for the same classification task, but different models may induce different decision boundaries. Accordingly, MM-based robustness asks whether a counterfactual remains valid across this model set, rather than only for one fixed predictor.

From a generative perspective, we reframe the CE problem as an Optimal Transport (OT) task. Let the query instance be a Dirac measure $\mu_0 = \delta_{\bm{x}}$ and the target be the conditional distribution $\mu_{y^*} = p(\cdot | y^*)$. The goal is to transport mass from $\mu_0$ to $\mu_{y^*}$ with minimal cost. To avoid the "long-tail" fragility discussed earlier, we explicitly enforce that the transported sample must reside within the high-density manifold. We formally define this Density-Guided RCE problem as:
\begin{equation}
\label{eq:rce_problem}
\begin{aligned}
\bm{x}^*_{\text{RCE}} = \mathop{\arg\min}_{\bm{x}'} \quad & c(\bm{x}, \bm{x}') \\
\text{s.t.} \quad & \mathbb{E}_{\mathcal{M}}[h(\bm{x}')] = y^*, & (\text{Validity}) \\
& \bm{x}' \in \mathcal{X}_{\text{trust}}(y^*) & (\text{Density})
\end{aligned}
\end{equation}
Here, $\mathcal{X}_{\text{trust}}(y^*)$ acts as a density barrier. In practice, directly solving Eq.~\eqref{eq:rce_problem} as a static constrained optimization is challenging. Instead, we propose to model the continuous transport trajectory via a Neural ODE. In this formulation, the density constraint is naturally incorporated as a guidance potential, steering the flow dynamics to terminate strictly within the reliable support of the target class.

\begin{figure*}[t]
    \centering
    \includegraphics[width=1.0\linewidth]{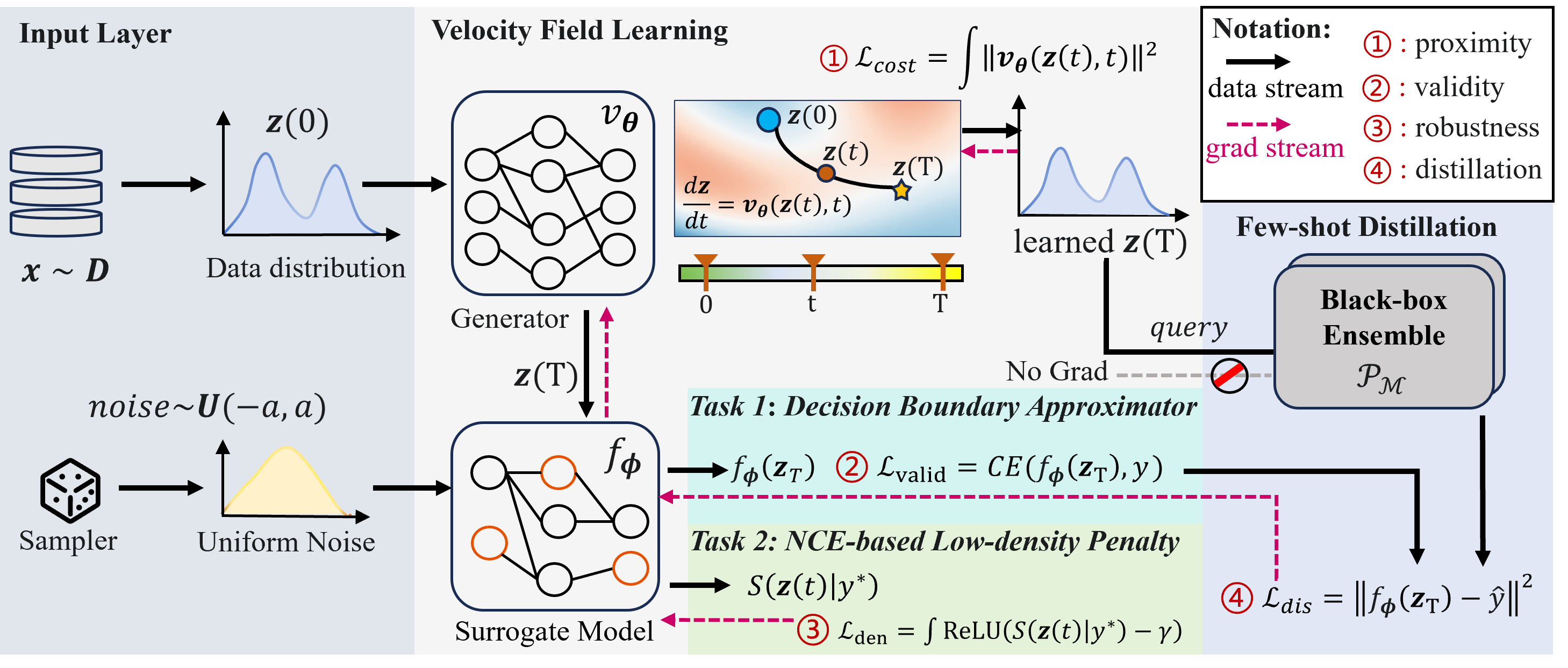}
    \caption{Overview of DensityFlow. The framework couples two primary components:
(i) A unified surrogate ($f_\phi$) via Noise Contrastive Estimation to model the high-density trust regions and decision boundaries. (ii) A Neural ODE Generator ($v_\theta$) produces a continuous trajectory $\bm{z}(t)$ that transforms input $\bm{x}$ into a valid counterfactual. The optimization integrates kinetic regularization for minimal cost, density penalties for robustness, and a local distillation strategy to align with black-box ensembles efficiently.}
    \label{fig:overview}
\end{figure*}

\section{Proposed Method}
\label{sec:method}

To provide a tractable solution to the density-constrained optimization in Eq.~\eqref{eq:rce_problem}, we propose DensityFlow, a coupled framework illustrated in Fig.~\ref{fig:overview}. The framework consists of two primary components jointly optimized to ensure robustness and validity:
(i) A NCE-based surrogate ($f_\phi$):  we introduce a unified surrogate that simultaneously performs classification and NCE-based density estimation. This surrogate provides differentiable gradients for both target validity and density constraints, steering the trajectory $v_\theta$ away from low-density regions.
(ii) A velocity field $v_\theta$ that generates continuous trajectories while minimizing kinetic energy to obtain counterfactual samples at minimal cost. 
To allow for effective usage in other classifiers, we also design an effective local distillation strategy for alignment.

% To address the optimization problem in Eq.~\eqref{eq:rce_problem}, we design DensityFlow as a dual-component framework. As illustrated in Fig.~\ref{fig:overview}, the architecture jointly optimizes (i) a Neural ODE velocity field $v_\theta$ that generates continuous trajectories while minimizing kinetic energy to obtain counterfactual samples at minimal cost, and (ii) a unified surrogate model $f_\phi$. We introduce an NCE-based density estimator that both performs density estimation and provides differentiable validity guidance for the trajectory generation process. To allow for effective usage in other classifiers, we design a local distillation strategy that only aligns the boundary areas of our trust region. Key designs are illustrated in the following section.

% To allow for effective usage in other classfiers, we design a  a query-efficient local distillation strategy to align the boundary of our trust region surrogates with the target ensemble.

\subsection{NCE-based Conditional Density Estimation}
\label{sec:nod_proxy}

To effectively constrain CEs within the trust regions $\mathcal{X}_{\text{trust}}(y^*)$, we need to explicitly estimate the class-conditional density $p(\bm{x}|y^*)$ across the input space $\mathcal{X} \subseteq \mathbb{R}^d$.

We address this by reformulating density estimation into a Noise Contrastive Estimation (NCE) framework~\cite{gutmann2012noise}, which learns the density ratio between the data distribution and a known reference noise distribution. Specifically, we design a surrogate classifier $f_\phi: \mathcal{X} \to \mathbb{R}^{K+1}$ that operates on an augmented label space $\{1, \dots, K, K{+}1\}$, where the first $K$ indices correspond to semantic classes and the $(K{+}1)$-th index represents an auxiliary noise class. We construct the training set by mixing the original source data $\mathcal{D}_{\text{src}}$ with noise samples drawn from a parametric distribution $p_{\text{noise}}(\bm{x})$ (e.g., Uniform or Gaussian). Since $p_{\text{noise}}$ is analytically known, distinguishing real data from noise allows the classifier to implicitly model the underlying data density. 
The specific impact of the noise distribution is discussed in detail in Sec.~\ref{sec:ablation_study} and Appendix~\ref{sec:discussion_of_noise}.
The surrogate $f_\phi$ is trained to minimize the Cross-Entropy loss over this $(K{+}1)$-way discrimination task:
\begin{equation}
\label{eq:proxy_loss}
\begin{split}
    \mathcal{L}_{\text{surrogate}}(\phi) = &- \mathbb{E}_{(\bm{x}, y) \sim \mathcal{D}_{\text{src}}} \left[ \log \frac{e^{z_y(\bm{x})}}{\sum_{j=1}^{K+1} e^{z_j(\bm{x})}} \right] \\
    &- \mathbb{E}_{\bm{x} \sim p_{\text{noise}}} \left[ \log \frac{e^{z_{K+1}(\bm{x})}}{\sum_{j=1}^{K+1} e^{z_j(\bm{x})}} \right]
\end{split}
\end{equation}

% The $f_\phi$ generates the classifier output $z_k(\bm{x}) ,\quad \forall k\in (K{+}1)$ for the whole value space. When $z_k^*(\bm{x})$ is the maximum value in the $K+1$ classes, $f_\phi{(\bm{x})}$ denotes that sample $\bm{x}$ is more likely to belong to the $k^*$ class. According to the NCE framework, let $f^*$ be the optimal classifier $f^*$  minimizing Eq.~\eqref{eq:proxy_loss}, we can have the following proposition. 

$f_\phi$ produces logits $z_k(\bm{x})$ for $k \in \{1, \dots, K{+}1\}$ over the input space $\mathcal{X}$. The model predicts the class $k^*$ corresponding to the maximum logit value. Under the NCE framework, assuming $f^*$ is the optimal minimizer of Eq.~\eqref{eq:proxy_loss}, we derive the following proposition:

\begin{proposition}[Relation to Density Ratio]\label{prop:den_ratio}For any $\bm{x} \in \mathcal{X}$ and target class $k \in \{1, \dots, K\}$, the optimal density score satisfies:

\begin{equation}
    z^*_{k}(\bm{x}) - z^*_{K+1}(\bm{x}) = \log p(\bm{x} | k) + Const
    \label{eq:prop1}
\end{equation}

where $z^*$ denotes the optimal logits, and $Const = \log \frac{P(y=k)}{P(y=K+1)} - \log p_{\text{noise}}(\bm{x})$ depends on class priors and the noise density.\end{proposition}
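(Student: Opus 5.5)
The plan is to read the surrogate objective in Eq.~\eqref{eq:proxy_loss} as the population cross-entropy of a $(K{+}1)$-way classifier under a joint distribution over $(\bm{x},y)$, and then use the standard fact that cross-entropy is minimised pointwise by the true posterior. Concretely, I would introduce a mixture in which label $k\in\{1,\dots,K\}$ has prior $P(y=k)$ and conditional density $p(\bm{x}\mid k)$. The noise label $K{+}1$ has prior $P(y=K{+}1)$ and density $p(\bm{x}\mid K{+}1)=p_{\text{noise}}(\bm{x})$. The priors are fixed by the sampling proportions $N_{\text{data}}$ and $N_{\text{noise}}$ together with the class frequencies in $\mathcal{D}_{\text{src}}$. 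Up to a positive normalising factor, the two expectations in Eq.~\eqref{eq:proxy_loss} then combine into $-\mathbb{E}_{(\bm{x},y)}[\log \mathrm{softmax}(z(\bm{x}))_y]$ under this mixture. If the two terms are weighted equally rather than by the sampling ratio, that only rescales the effective $P(y=K{+}1)$, which is absorbed into $Const$.

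Next I would condition on $\bm{x}$. The risk becomes $\int q(\bm{x})\,\mathrm{CE}\big(P(\cdot\mid\bm{x}),\mathrm{softmax}(z(\bm{x}))\big)\,d\bm{x}$, where $q$ is the mixture marginal. Assuming a sufficiently expressive function class, I would minimise this pointwise. Gibbs' inequality, i.e.\ the nonnegativity of the KL divergence, gives the minimiser $\mathrm{softmax}(z^*(\bm{x}))_j=P(y=j\mid\bm{x})$ for every $j$ and $q$-almost every $\bm{x}$. Since softmax is invariant to adding a common constant, $z^*$ is determined only up to an $\bm{x}$-dependent shift $c(\bm{x})$, so $z^*_j(\bm{x})=\log P(y=j\mid\bm{x})+c(\bm{x})$.

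Taking the difference of logits cancels $c(\bm{x})$ and the evidence $q(\bm{x})$. Bayes' rule then gives
\[
z^*_k(\bm{x})-z^*_{K+1}(\bm{x})=\log\frac{P(y=k)\,p(\bm{x}\mid k)}{P(y=K{+}1)\,p_{\text{noise}}(\bm{x})},
\]
which expands to $\log p(\bm{x}\mid k)+\log\frac{P(y=k)}{P(y=K+1)}-\log p_{\text{noise}}(\bm{x})$. This is exactly Eq.~\eqref{eq:prop1} with the stated $Const$. For uniform noise on a bounded domain, $\log p_{\text{noise}}$ is genuinely constant, which justifies the name.

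The main obstacle is the support and identifiability issue. The pointwise identification holds only where $q(\bm{x})>0$, and the logarithm requires $p_{\text{noise}}(\bm{x})>0$. I would therefore state the result on $\mathrm{supp}(p_{\text{noise}})\supseteq\mathrm{supp}(p_{\text{data}})$, which is the reason for choosing uniform or Gaussian noise. Where $p(\bm{x}\mid k)=0$, the identity should be read in the extended sense, with $z^*_k-z^*_{K+1}=-\infty$ approached only as an infimum. I would also make explicit the realisability assumption that the minimiser is attained within the model class.
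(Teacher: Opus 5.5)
Your proposal is correct and follows essentially the same route as the paper's proof. The paper's Lemma~\ref{lem:optimal_logit} likewise argues that population cross-entropy is minimized when $\mathrm{softmax}(z^*(\bm{x}))$ equals the true posterior, equates $e^{z_i^*-z_j^*}$ with the posterior ratio via Bayes' rule, and then specializes to $i=k$, $j=K{+}1$, with uniform noise making $\log p_{\text{noise}}$ constant. Your extra care about the $\bm{x}$-dependent shift invariance of the logits, the effective priors induced by how the two terms of Eq.~\eqref{eq:proxy_loss} are weighted, and the restriction to the support of the mixture (including the $-\infty$ case and realizability) makes explicit assumptions the paper leaves implicit, without changing the argument.
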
\textit{Proof Sketch.} Since we employ a {Uniform Distribution} for the noise class, $p_{\text{noise}}(\bm{x})$ is constant over the input space. Consequently, the logit difference $z^*_{k}(\bm{x}) - z^*_{K+1}(\bm{x})$ becomes linearly proportional to the log-likelihood $\log p(\bm{x} | k)$. A detailed derivation is provided in Appendix~\ref{app:proof_den_ratio}.

\textbf{Noise as a trust reference.} The uniform noise also serves as an effective baseline for defining the trust region $\mathcal{X}_{\text{trust}}$. The ratio of noise to data samples acts as a hyperparameter controlling the strictness of this region: an excessive noise prior leads to an overly conservative boundary, while insufficient noise fails to filter out low-density outliers.

\subsection{Density-Guided Flow Optimization}
\label{sec:density_regularized_objectives}

We optimize the generator $v_\theta$ using an alternating strategy: in each iteration, we first refine the surrogate $f_\phi$ via Eq.~\eqref{eq:proxy_loss}, and subsequently update $v_\theta$ to construct optimal counterfactual trajectories. The generation process is driven by three coupled objectives: minimizing transport cost, ensuring validity, and enforcing manifold constraints.

\noindent\textbf{Cost Estimation via Augmented Dynamics.}
To strictly minimize the continuous transport effort, we adopt the state-augmented ODE framework~\cite{lipman2023flow}. We extend the state space to $\tilde{\bm{z}}(t) = [\bm{z}(t), e(t)]^\top$, where $e(t)$ represents the accumulated kinetic energy. The augmented dynamics evolve as:
\begin{equation}
\frac{d\tilde{\bm{z}}(t)}{dt} =
\begin{bmatrix}
v_\theta(\bm{z}(t), t) \\
\| v_\theta(\bm{z}(t), t) \|^2
\end{bmatrix},
\quad
\tilde{\bm{z}}(0) =
\begin{bmatrix}
\bm{x} \\
0
\end{bmatrix}
\label{eq:aug_ode_cost}
\end{equation}
By integrating to $T$, the final state yields the exact transport cost $e(T) = \int_0^T \| v_\theta(\bm{z}(t), t) \|^2 dt$, enabling direct backpropagation through the ODE solver to minimize $\mathcal{L}_{\text{cost}} \triangleq e(T)$.

\noindent\textbf{Differentiable Surrogate Guidance.}
Since the black-box target is non-differentiable, we leverage the surrogate $f_\phi$ to provide two distinct guidance signals for the generator: (i) Validity proxy: The classification head ensures the generated $\bm{z}(T)$ flips to the target class $y^*$. (ii) Density critic: The logit difference $S(\bm{z} | y^*) = z_{y^*} - z_{K+1}$ acts as a density critic.

As derived in Proposition~\ref{prop:den_ratio}, the gradient of this score serves as an unbiased estimator of the data density gradient: $\nabla_{\bm{z}} S(\bm{z} | y^*) = \nabla_{\bm{z}} \log p(\bm{z} | y^*)$.
As shown in Fig.~\ref{fig:flow_vis}, using this gradient as guidance yields trajectories that avoid low-support regions (dashed) and instead track the high-density manifold (solid).

\begin{figure}[t]
    \centering
    \includegraphics[width=0.85\linewidth]{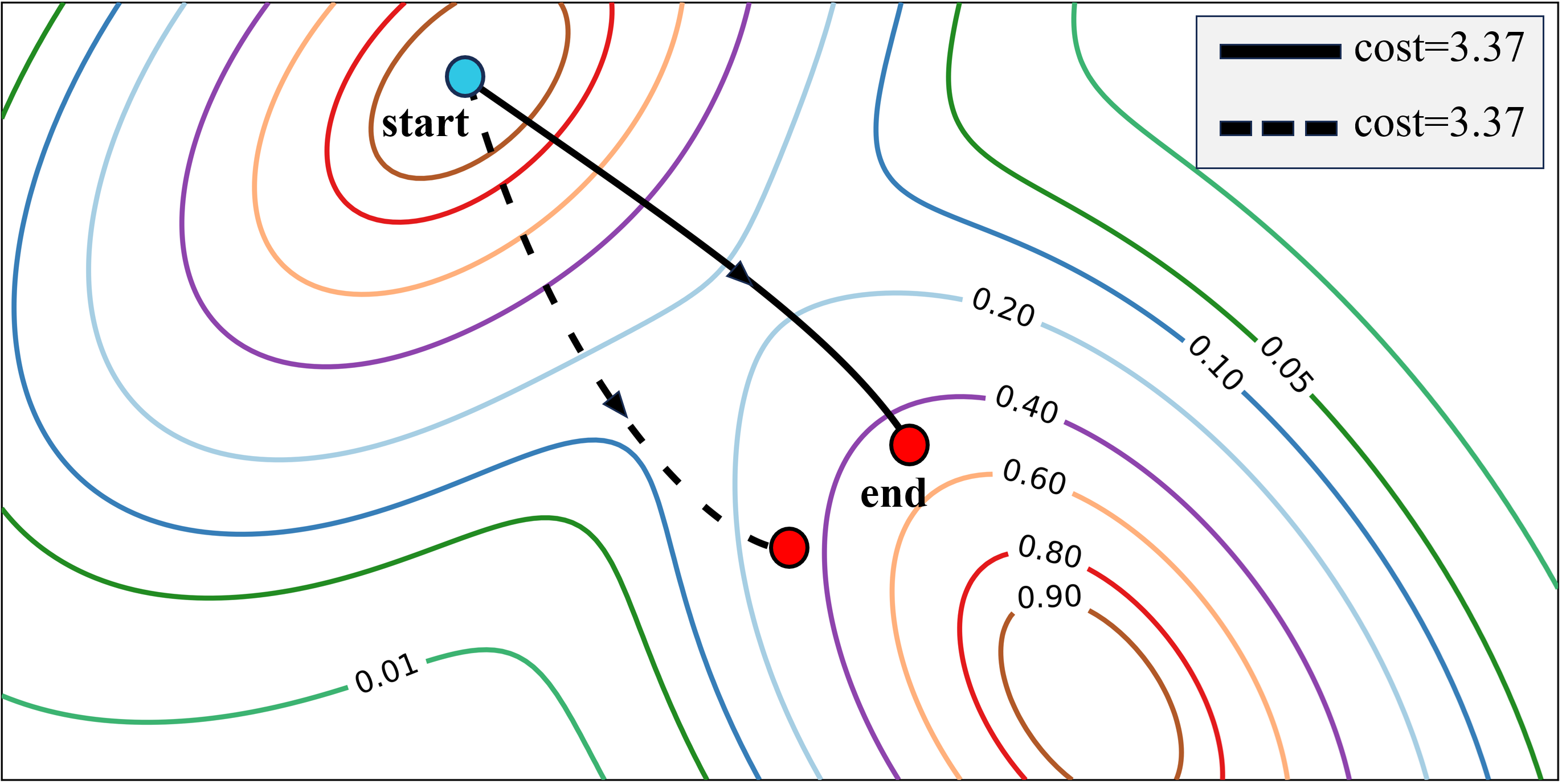} 
    \caption{\textbf{Trajectory comparison on a density surface.} The standard distance-minimization approach (dashed) traverses low-density regions, leading to fragile CEs. In contrast, DensityFlow (solid), guided by the density gradient $\nabla S$, follows the high-probability manifold to ensure robustness.}
    \label{fig:flow_vis}
\end{figure}

\noindent\textbf{Composite Generator Objective.}
Formally, to strictly enforce the trust region constraint throughout the generation process, the density penalty can be formulated as a path integral: 
\begin{equation}
\mathcal{L}_{\text{den}} = \int_0^T \text{ReLU}( \log \tau - S(\bm{z}(t)|y^*)) dt 
\end{equation}
However, evaluating the score at every solver step incurs significant computational overhead. In practice, we find that enforcing the constraint only at the endpoint is sufficient to pull the entire trajectory into the valid manifold due to the smoothness of Neural ODEs. Thus, for computational efficiency, we adopt the endpoint penalty in our implementation (see Appendix~\ref{app:endpoint-vs-path} for comparison).
The final objective for the generator is:
\begin{equation}
\begin{split}
    \mathcal{L}(\theta) = \;&  \underbrace{\mathcal{L}_{\text{CE}}(f_\phi(\bm{x}'), y^*)}_{\text{Validity}}
    + \lambda_{\text{cost}} \underbrace{c_{\text{cost}}(T)}_{\text{Proximity}} \\
    & + \lambda_{\text{den}} \underbrace{\mathbb{E}_{\bm{x}}\left[\mathcal{L}_{\text{den}}(\bm{x}')\right]}_{\text{Robustness}}
\end{split}
\label{eq:objective}
\end{equation}

\subsection{Query-Efficient Local Distillation}
\label{sec:distill}

In real-world applications, the target system is often a heterogeneous black-box ensemble $\mathcal{M}$~\cite{jiang2024robust}. Since its architecture differs from our surrogate $f_\phi$, their decision boundaries naturally differ. To generate valid counterfactuals, we must align $f_\phi$ with the ensemble.

A major challenge is that we do not know \textit{where} the decision boundaries differ. A naive approach is to align $f_\phi$ globally across the entire input space. However, this is computationally intractable. In high-dimensional spaces, we lack prior knowledge of where the boundaries diverge. Without guidance, finding these misalignment regions requires blindly querying the black box across the vast volume of the input space, leading to prohibitive costs. To solve this, we propose a query-efficient strategy that aligns $f_\phi$ strictly within the regions visited by our generator. We dynamically sample a support set $\mathcal{D}_\theta$ from the terminal states of the current flow: $\mathcal{D}_\theta = \{(\bm{x}, \bar{y}) \mid \bm{x} \sim \bm{z}(T)\}$, where $\bar{y}$ is the ensemble consensus. We then fine-tune $f_\phi$ to minimize the local distillation loss:
\begin{equation}
    \mathcal{L}_{\text{dis}}(\phi) \triangleq \mathbb{E}_{(\bm{x}, \bar{y}) \sim \mathcal{D}_\theta} \left[ \| \sigma(z_{y^*}(\bm{x})) - \bar{y} \|^2 \right]
    \label{eq:distill_loss}
\end{equation}

where $\sigma(\cdot)$ denotes the softmax probability for the target class. Eq.~\eqref{eq:distill_loss} ensures $f_\phi$ provides accurate gradients specifically along the generation path.

The key advantage is that density guidance tells us exactly where to align.Without density constraints, a generator might explore useless low-density areas (outliers), forcing us to waste queries aligning boundaries in regions that do not matter. In contrast, our density gradient $\nabla S$ acts as a guide, steering the trajectory strictly towards high-density data manifolds. As a result, we only query the black box in meaningful, high-confidence regions. By avoiding wasted queries on "empty" or low-density spaces, we achieve accurate alignment with significantly lower cost.

\section{Experiments}

To validate the effectiveness of DensityFlow, we designed experiments to answer three key research questions: \textbf{RQ1 (Robustness \& Efficiency):} Does DensityFlow achieve superior counterfactual robustness while maintaining low query costs? \textbf{RQ2 (Density Effectiveness):} Can the proposed density score $S(\bm{x}|y^*)$ effectively serve as an indicator for identifying trustworthy regions?
\textbf{RQ3 (Alignment Reliability):} Does local boundary alignment via proxy distillation enable reliable optimization for heterogeneous black-box ensembles? Due to page constraints, additional ablation studies and visualizations are reported in Appendix~\ref{sec:additional_results}.

% To evaluate the performance of our DensityFlow, we conducted experiments to address the following questions: \textbf{Q1.} Whether DensityFlow achieves CE robustness while maintaining comparable costs; \textbf{Q2.} Can the density score $S(\bm{x}|y^*)$ serve as an indicator for trust regions? \textbf{Q3.} Does local boundary alignment via proxy distillation enable reliable optimization for heterogeneous black-box ensembles? 
% Due to page limits, additional results are reported in Appendix~\ref{sec:additional_results}.

\subsection{Experiment Settings}
\noindent\textbf{Datasets.} Our method is evaluated on eight datasets: four synthetic datasets (Moons, Circles, Spirals, and Chessboard) and four tabular real-world datasets (Adult, Compas, HELOC, and Blood), covering diverse domains, including finance, justice, and medicine. We randomly split all datasets into training, validation, and test sets with a 6:2:2 ratio. Prior to training, we apply Z-score standardization to the input features to ensure consistent feature scaling, which simplifies the construction of the noise. The noise samples are drawn from a uniform distribution bounded within a hypercube $[-C, C]^d$. We set the noise-to-data sampling ratio to $\tau = N_{\text{noise}}/N_{\text{data}} = 0.2$. To ensure the noise fully covers the standardized data support, the bound is defined as $C = \beta \cdot \max_{x\in\mathcal{D}_{\mathrm{train}}}\|x\|_{\infty}$, where the maximum absolute feature value is scaled by a safety factor $\beta=1.2$. 

% We conduct evaluations on eight datasets: four synthetic benchmarks (Moons, Circles, Spirals, Chessboard) to visualize decision boundaries, and four real-world tabular datasets (Adult, Compas, HELOC, Blood) covering diverse high-stakes domains including finance, justice, and medicine. All datasets are randomly split into training, validation, and test sets with a 6:2:2 ratio.

\noindent\textbf{Evaluation Models(Target Ensemble).}  
To simulate a realistic \textit{Model Multiplicity (MM)} scenario characterized by architectural heterogeneity, we instantiate the target system $\mathcal{M}$ using seven diverse classifiers: KNN~\cite{cover1967nearest}, SVM~\cite{cortes1995support}, Random Forest~\cite{breiman2001random}, MLP, XGBoost~\cite{chen2016guestrin}, CatBoost~\cite{prokhorenkova2018catboost}, and TabNet~\cite{arik2021tabnet}. Each model represents a distinct inductive bias (e.g., distance-based, tree-based, deep learning). We fine-tune their hyperparameters on the validation set to ensure strong individual performance (see Table~\ref{tab:dataset_property_acc} for accuracy statistics). This diversity ensures that $\mathcal{M}$ provides a rigorous testbed for consensus-based explanation. Detailed settings and results are presented in Table~\ref{tab:hyperparameter_range} \&~\ref{tab:classification_accuracy} in the Appendix.

\begin{table}[htbp]
\centering
\caption{Dataset statistics and average model accuracy.}
\small
\label{tab:dataset_property_acc}
\begin{tabular}{lccc}
\toprule
\textbf{Dataset} & \textbf{Rows} & \textbf{Features} & \textbf{Avg. Accuracy} \\
\midrule
Moons      & 3000   & 2  & 0.9998 (.000) \\
Circles    & 3000   & 2  & 0.9989 (.000) \\
Spirals    & 3000   & 2  & 0.9859 (.004) \\
Chessboard & 3000   & 2  & 0.9992 (.001) \\
\midrule
Adult      & 48842  & 12 & 0.8579 (.001) \\
Compas     & 7214   & 9  & 0.6753 (.003) \\
HELOC      & 10459  & 23 & 0.7106 (.002) \\
Blood      & 748    & 4  & 0.7994 (.004) \\
\bottomrule
\end{tabular}
\end{table}

% \begin{table}[ht]
% \centering
% \caption{Classification accuracy on different models (mean $\pm$ std).}
% \label{tab:classification_accuracy_partial}
% \resizebox{0.9\linewidth}{!}{
% \begin{tabular}{lccc}
% \toprule
% \textbf{Dataset} 
% & \textbf{MLP} & \textbf{SVM} & \textbf{CATBoost} \\
% \midrule
% Adult
% & $0.86 \pm 0.00$
% & $0.85 \pm 0.00$
% & $0.87 \pm 0.00$ \\
% Blood
% & $0.76 \pm 0.00$
% & $0.79 \pm 0.00$
% & $0.81 \pm 0.01$ \\
% Compas
% & $0.68 \pm 0.00$
% & $0.68 \pm 0.00$
% & $0.68 \pm 0.00$ \\
% HELOC
% & $0.71 \pm 0.01$
% & $0.71 \pm 0.00$
% & $0.72 \pm 0.00$ \\
% \bottomrule
% \end{tabular}
% }
% \label{tab:classfier-acc}
% \end{table}

\noindent\textbf{Baselines.} We compare our DensityFlow against four state-of-the-art robust recourse methods :  Product\_mip~\cite{leofante2023counterfactual}, CeFlow~\cite{duong2023ceflow}, Argument~\cite{jiang2024recourse}, and BetaRCE~\cite{stkepka2025counterfactual}. 
Please note that \textit{Product\_mip} and \textit{CeFlow} are designed for white-box; therefore, we use a well-tuned MLP instead of the white-box model and then evaluate it on $\mathcal{M}$. \textit{BetaRCE} is designed for handling mild changes in a black-box model; here, we use $\mathcal{M}$ instead of the so-called ‘admissible model space’. The \textit{Argument} is closest to our setting, as it considers recourse under MM and supports a heterogeneous black-box scenario. However, this method differs from ours in that it does not generate counterfactuals. Instead, it assumes the availability of one counterfactual per model and focuses on post-hoc selection via an argumentation framework to ensure consistency and validity. In contrast, DensityFlow is a generation framework that directly optimizes multi-model validity during generation. For the detailed parameter settings of DensityFlow and the baseline methods, please refer to Appendix~\ref{app:baselines}. 

\begin{table*}[t]
\centering
\caption{Performance comparison of all methods on the black-box ensemble $\mathcal{M}$. 
\textbf{Cost}($\downarrow$) is the $\ell_2$ distance between $\bm{x}$ and $\bm{x'}$, lower the better; \textbf{Validity}($\uparrow$) is the average proportion of valid CEs in $\mathcal{M}$, higher the better.}
\label{tab:RQ1}
\setlength{\tabcolsep}{3pt}
\resizebox{\textwidth}{!}{
\begin{tabular}{lcccccccccc}
\toprule
\textbf{Dataset}
& \multicolumn{2}{c}{\textbf{Product\_mip}}
& \multicolumn{2}{c}{\textbf{CeFlow}}
& \multicolumn{2}{c}{\textbf{BetaRCE}}
& \multicolumn{2}{c}{\textbf{Argument}}
& \multicolumn{2}{c}{\textbf{DensityFlow}} \\
\cmidrule(lr){2-3}
\cmidrule(lr){4-5}
\cmidrule(lr){6-7}
\cmidrule(lr){8-9}
\cmidrule(lr){10-11}
& \textbf{Cost} & \textbf{Validity}
& \textbf{Cost} & \textbf{Validity}
& \textbf{Cost} & \textbf{Validity}
& \textbf{Cost} & \textbf{Validity}
& \textbf{Cost} & \textbf{Validity} \\
\midrule
Moons
& 1.214$\pm$.463 & 0.492$\pm$.005
& 1.194$\pm$.252 & \underline{0.991$\pm$.004}
& 0.992$\pm$.082 & 0.789$\pm$.004
& \underline{0.890$\pm$.109} & 0.951$\pm$.042
& \textbf{0.867$\pm$.101} & \textbf{0.997$\pm$.002}\\
Circles
& \textbf{0.512$\pm$.117} & 0.637$\pm$.014
& 1.195$\pm$.087 & 0.845$\pm$.027
& \underline{0.605$\pm$.162} & 0.762$\pm$.031
& 0.699$\pm$.097 & \underline{0.991$\pm$.001}
& 0.683$\pm$.008 & \textbf{0.994$\pm$.001}\\
Spirals
& 1.262$\pm$.510 & 0.482$\pm$.003
& 0.895$\pm$.104 & 0.877$\pm$.074
& \textbf{0.220$\pm$.084} & 0.631$\pm$.060
& \underline{0.356$\pm$.034} & \underline{0.943$\pm$.067}
& 0.487$\pm$.139 & \textbf{0.972$\pm$.005}\\
Chessboard
& 0.940$\pm$.629 & 0.332$\pm$.022
& \underline{0.804$\pm$.071} & 0.575$\pm$.008
& \textbf{0.729$\pm$.408} & 0.622$\pm$.235
& 1.114$\pm$.028 & \underline{0.959$\pm$.013}
& 1.088$\pm$.152 & \textbf{0.964$\pm$.011} \\
\midrule
Adult
& 3.725$\pm$.006 & 0.687$\pm$.001
& 3.959$\pm$.463 & 0.691$\pm$.064
& 2.610$\pm$.101&  \underline{0.752$\pm$.010}
& \underline{1.916$\pm$.025} & 0.659$\pm$.011
& \textbf{1.597$\pm$.194} & \textbf{0.901$\pm$.052} \\
Compas
& 1.631$\pm$.017 & 0.479$\pm$.005
& \textbf{0.995$\pm$.282} & 0.385$\pm$.058
& 1.410$\pm$.005&  0.592$\pm$.003
& 1.542$\pm$.001 & \underline{0.610$\pm$.006}
& \underline{1.176$\pm$.017} & \textbf{0.729$\pm$.067} \\
HELOC
& 3.148$\pm$.709 & 0.444$\pm$.044
& 2.707$\pm$.085 & 0.542$\pm$.023
& \textbf{1.640$\pm$.122} & \textbf{0.765$\pm$.016}
& 3.091$\pm$.026 & 0.662$\pm$.018
& \underline{1.812$\pm$.599} & \underline{0.757$\pm$.035} \\
Blood
& \textbf{1.454$\pm$.038} & 0.378$\pm$.018
& 1.556$\pm$.142 & 0.414$\pm$.060
& 1.573$\pm$.045 & 0.285$\pm$.192
& 1.548$\pm$.063 & \underline{0.509$\pm$.007}
& \underline{1.527$\pm$.401} & \textbf{0.662$\pm$.046} \\
\bottomrule
\end{tabular}
}
\end{table*}

\noindent\textbf{Parameters.}
Unless otherwise specified, we train for 800 epochs with a batch size of 64 using AdamW, with learning rates $\eta_g=10^{-3}$ (generator $v_\theta$) and $\eta_\phi=10^{-4}$ (surrogate $f_\phi$).
The objective weights were selected via grid search from the candidate sets: $\lambda_{\rm cost} \in \{0.2, 0.4, 0.6\}$, and $\lambda_{\rm den} \in \{0.0, 0.1, 0.3\}$.
Neural ODE integration uses the adaptive \texttt{dopri5} solver on $t\in[0,1]$ with 100 evaluation time points and tolerances $(\texttt{atol},\texttt{rtol})=(10^{-3},10^{-3})$ for training and $(10^{-4},10^{-4})$ for testing. This adaptive step-size mechanism automatically adjusts to local error tolerances, ensuring finer sampling in regions with complex density gradients and preventing the trajectories from deviating from the density guidance. All experiments were conducted on a server with NVIDIA GeForce RTX~4090 GPUs, AMD EPYC 7513, and 512~GB RAM. Additional details are provided in Appendix~\ref{app:Parameter}.

\noindent\textbf{Metrics.} 
Similar to previous studies~\cite{jiang2024recourse, noorani2025counterfactual}, we focus on two metrics: (i) Cost: the $\ell_2$ distance between the counterfactual and the corresponding original instance; (ii) Validity: the proportion of generated counterfactuals that are successfully classified as the target class by the ensemble models. All results are reported as averages over 5 independent runs (seeds 0$\sim$4).

\subsection{RQ1: Performance under Model Multiplicity}
First, we compare the CEs learned by DensityFlow with the baselines on eight synthetic and real-world datasets. Table~\ref{tab:RQ1} presents the validity and costs evaluated across the black-box ensemble $\mathcal{M}$. As shown in the results, DensityFlow consistently achieves high validity while maintaining competitive costs. Specifically, on real-world tabular datasets such as Adult and Compas, our method surpasses the strongest baseline by a significant margin in validity(e.g., 0.901 vs. 0.752 on Adult), demonstrating its effectiveness in navigating complex decision boundaries. On synthetic datasets like Moons and Circles, DensityFlow achieves near-perfect validity ($>99\%$). This increase in robustness does not sacrifice proximity. On Adult, DensityFlow achieves a cost of $1.597$, compared to $3.959$ for CeFlow and $1.916$ for Argument, effectively balancing the validity-cost trade-off.

Product\_mip relies on access to internal model parameters and relational verification, which restricts its applicability in strictly black-box scenarios. BetaRCE is designed for a specific target model and can be less straightforward to apply to heterogeneous black-box ensembles. While CeFlow promotes manifold adherence, it lacks an explicit mechanism to bypass low-density regions, frequently generating CEs that reside within ‘fuzzy’ decision boundaries. Argument performs post-hoc selection over a candidate set; when candidates are not support-aware, the resulting trade-off between robustness and proximity can be suboptimal. In contrast, DensityFlow embeds robustness optimization directly into the Neural ODE dynamics. By leveraging $S(\bm{x}|y^*)$ guidance, trajectories autonomously avoid low-density regions. This approach enhances robustness on complex manifolds without a significant increase in modification costs. In summary, DensityFlow achieves higher robust validity across the ensemble on most datasets while keeping costs competitive with the strongest baselines.

\subsection{RQ2: Correlation between Density and Robustness}
To empirically validate Eq.~\eqref{eq:prop1}, we quantify the ensemble uncertainty using Mutual Information (MI), a standard measure for epistemic uncertainty~\cite{smith2018understanding}. MI is defined as the difference between the entropy of the mean prediction and the mean entropy of individual predictions:
\begin{equation}
    \text{Uncertainty}(\bm{x}) \triangleq \mathbb{H}[\bar{p}(y|\bm{x})] - \frac{1}{M}\sum_{m=1}^M \mathbb{H}[p_m(y|\bm{x})]
\end{equation}
where $\mathbb{H}[\cdot]$ denotes Shannon entropy and $\bar{p}$ is the ensemble mean.
A high MI value indicates significant uncertainty in the ensemble's findings and low reliability. We visualize the correlation between the density score $S(\bm{x}|y^*)$ and MI.
\begin{figure}[ht]
    \centering
    \includegraphics[width=\linewidth]{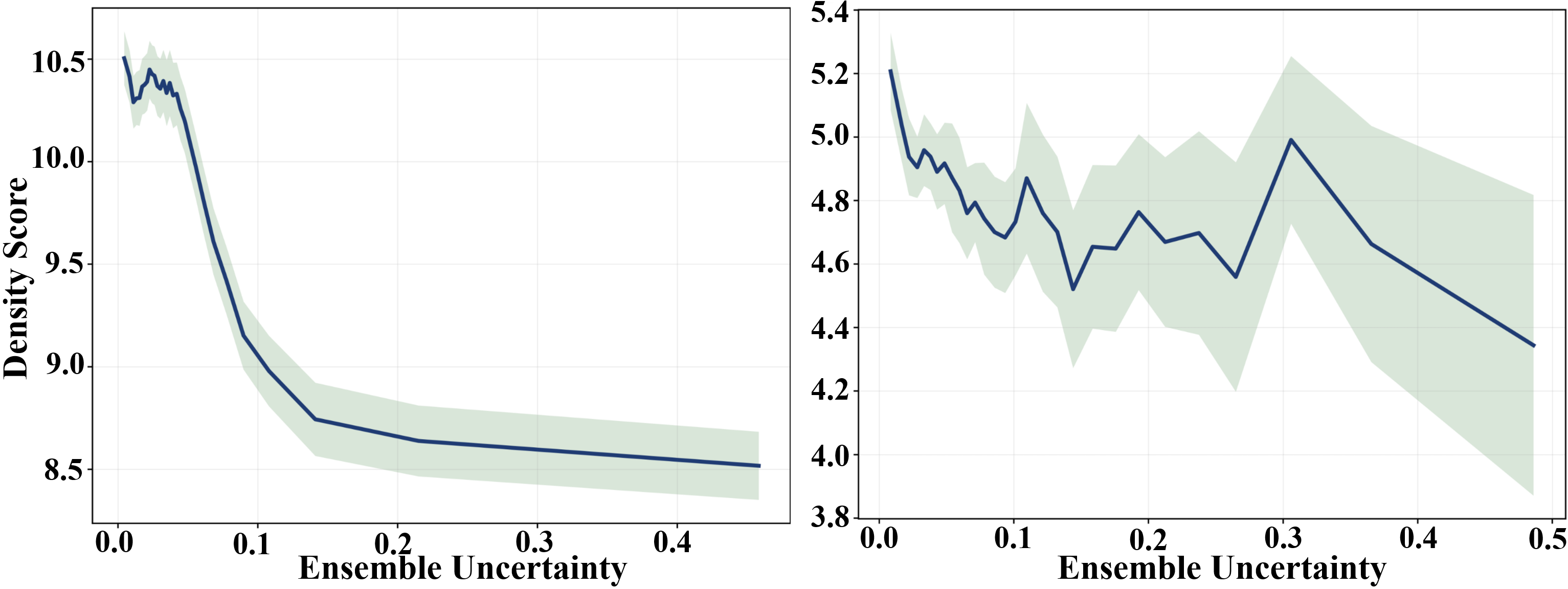}
    \caption{Correlation between density score $S(\bm{x}|y^*)$ and ensemble uncertainty (measured by MI) on Adult (left) and HELOC (right) datasets. The bands represent 95\% confidence intervals. 
}
    \label{fig:score_MI}
\end{figure}
Fig.~\ref{fig:score_MI} illustrates the relationship between density score and model uncertainty. On the Adult dataset, we observe a clear relationship: as the density score decreases (towards low-density tails), ensemble disagreement rises sharply, confirming that $S(\bm{x}|y^*)$ effectively tracks the safe consensus regions.
For the HELOC dataset, the curves exhibit local oscillations, but the overall trend remains consistent, which is normal given the inherent complexity of the dataset (such as noise and outliers).
Crucially, samples with the high density scores consistently align with minimal model disagreement. This demonstrates that, even in noisy real-world scenarios where classifiers struggle to generalize, our density scores serve as a reliable indicator, guiding CEs to the trust region.

\begin{figure*}[t]
    \centering

    \begin{subfigure}[t]{0.24\textwidth}
        \centering
        \includegraphics[width=\textwidth]{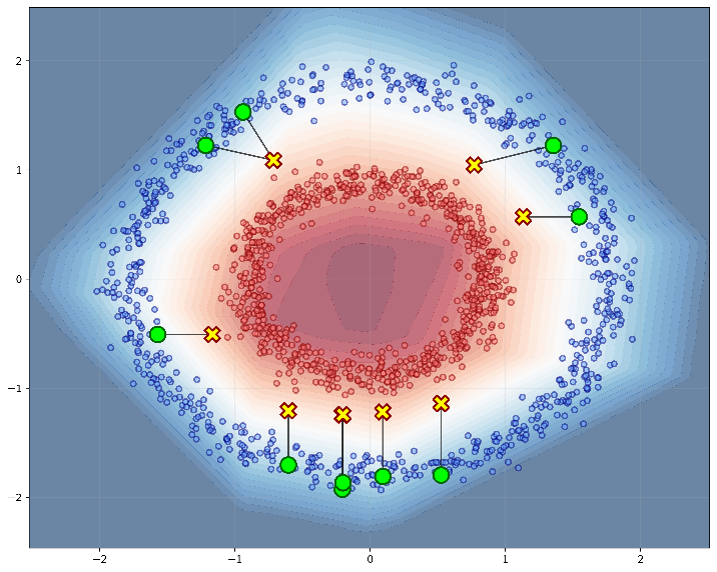}
        \caption{Product\_mip, Validity=0.637}
    \end{subfigure}
    \hfill
    \begin{subfigure}[t]{0.24\textwidth}
        \centering
        \includegraphics[width=\textwidth]{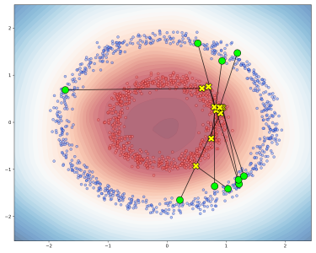}
        \caption{CeFlow, Validity=0.845}
    \end{subfigure}
    \hfill
    \begin{subfigure}[t]{0.24\textwidth}
        \centering
        \includegraphics[width=\textwidth]{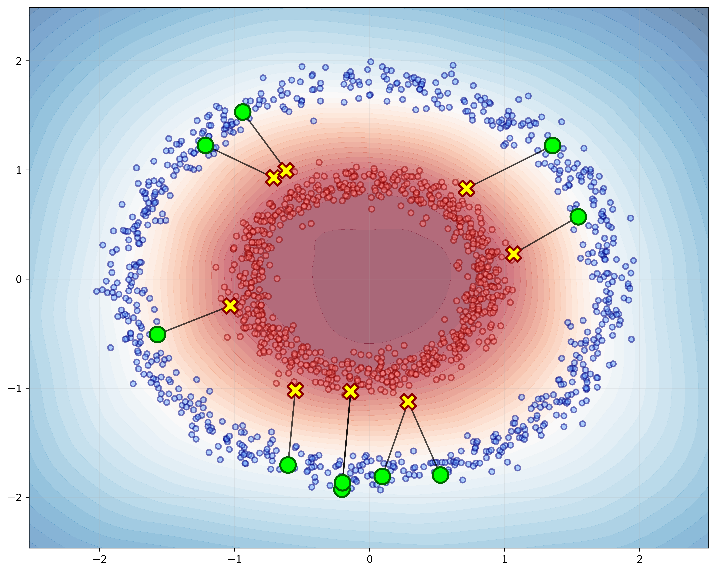}
        \caption{Argument, Validity=0.991}
    \end{subfigure}
    \hfill
    \begin{subfigure}[t]{0.24\textwidth}
        \centering
        \includegraphics[width=\textwidth]{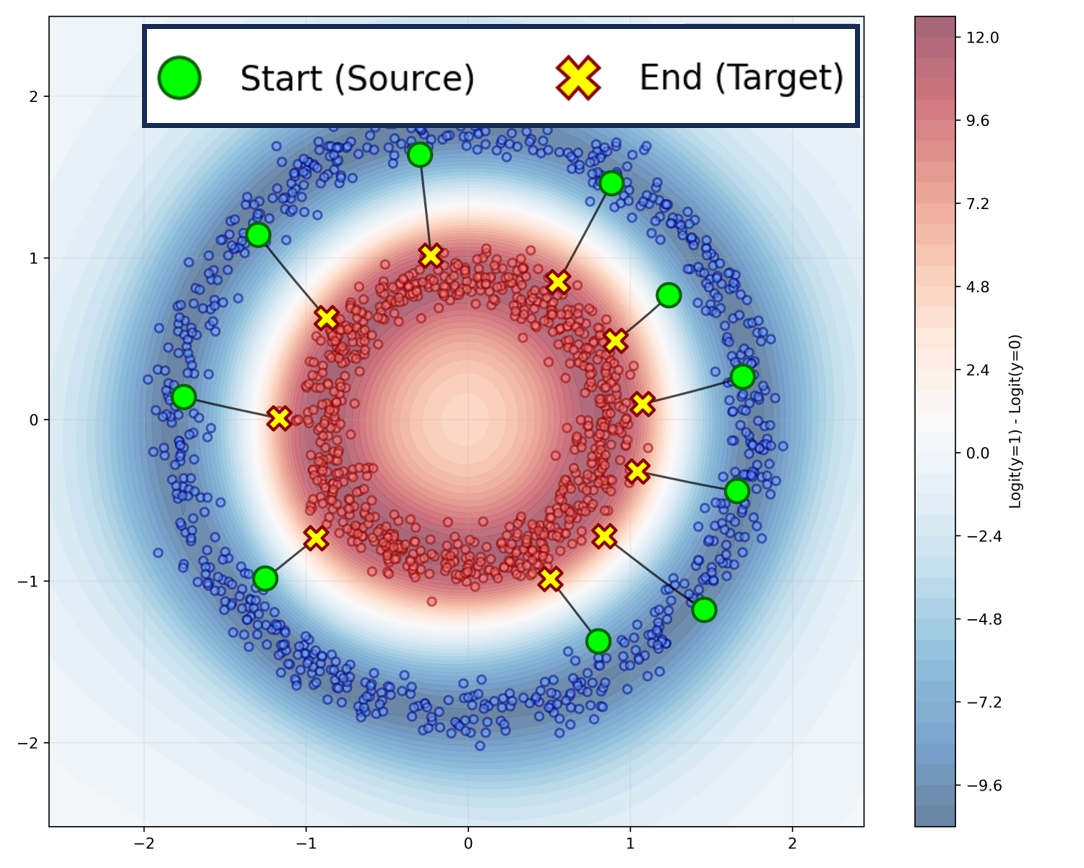}
        \caption{DensityFlow, Validity=0.994}
    \end{subfigure}

    \begin{subfigure}[t]{0.24\textwidth}
        \centering
        \includegraphics[width=\textwidth]{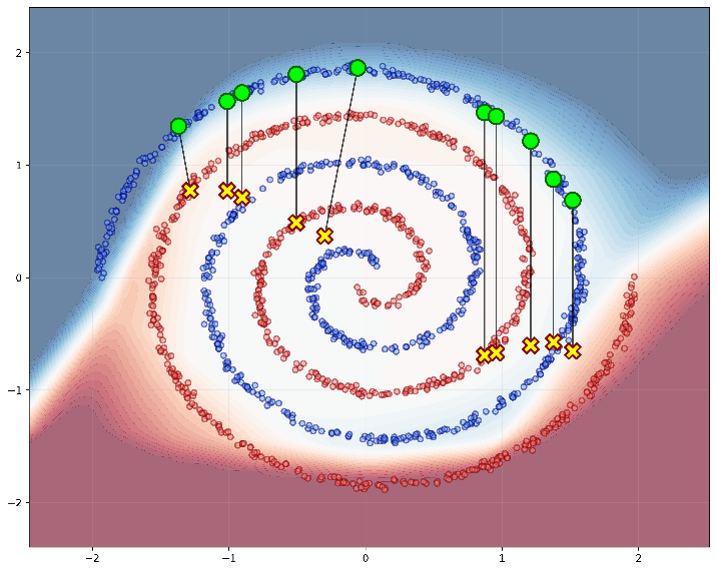}
        \caption{Product\_mip, Validity=0.482}
    \end{subfigure}
    \hfill
    \begin{subfigure}[t]{0.24\textwidth}
        \centering
        \includegraphics[width=\textwidth]{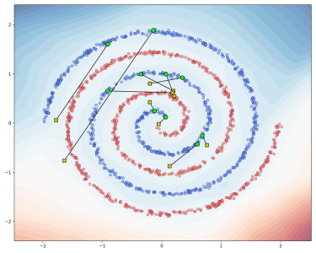}
        \caption{CeFlow, Validity=0.877}
    \end{subfigure}
    \hfill
    \begin{subfigure}[t]{0.24\textwidth}
        \centering
        \includegraphics[width=\textwidth]{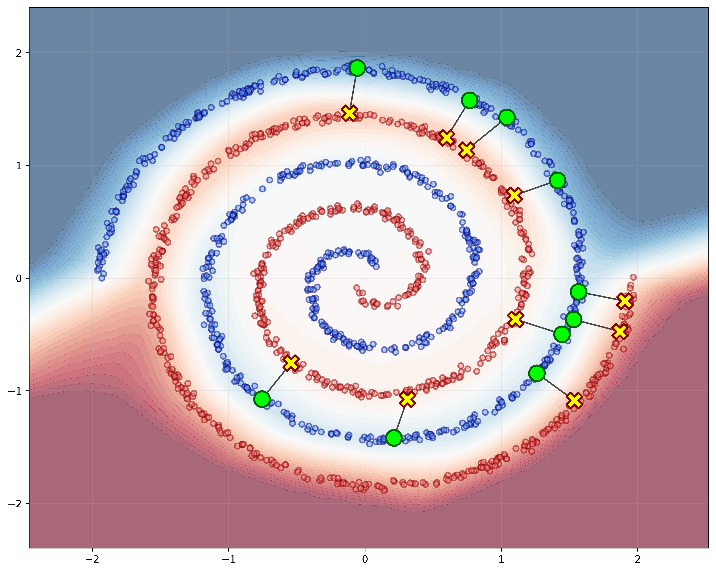}
        \caption{Argument, Validity=0.943}
    \end{subfigure}
    \hfill
    \begin{subfigure}[t]{0.24\textwidth}
        \centering
        \includegraphics[width=\textwidth]{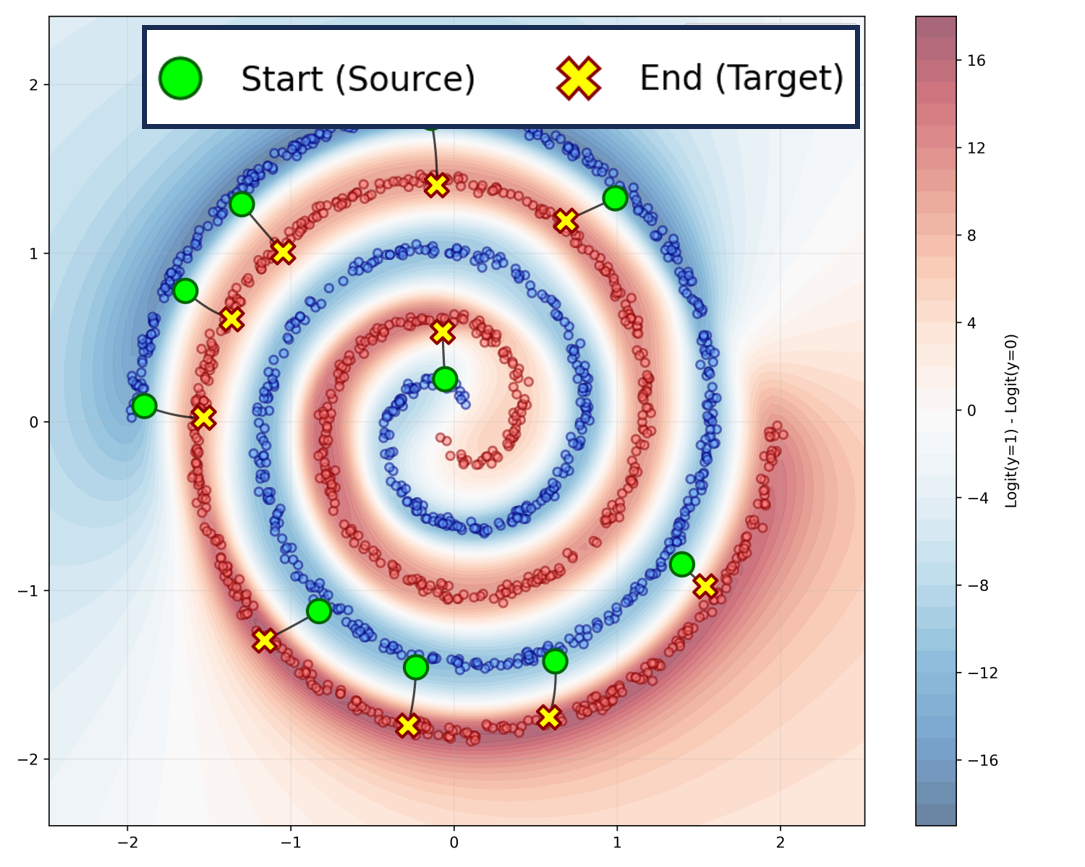}
        \caption{DensityFlow, Validity=0.972}
    \end{subfigure}

    \caption{Visualization results on Circles and Spirals. For more visualizations, please refer to Appendix~\ref{app:visualization}.}
    \label{fig:visualization}
\end{figure*}

\subsection{RQ3: Query Efficiency for Black-Box Ensemble}
We evaluate the query efficiency of DensityFlow by measuring the total number of black-box ensemble queries required during the optimization or training phase. We compare our method with black-box applicable baselines, Argument and BetaRCE. The query count is strictly incremented once for each invocation of the black-box model during training or search, excluding inference-time evaluations.

\begin{figure}[t]
    \centering
    \includegraphics[width=\linewidth]{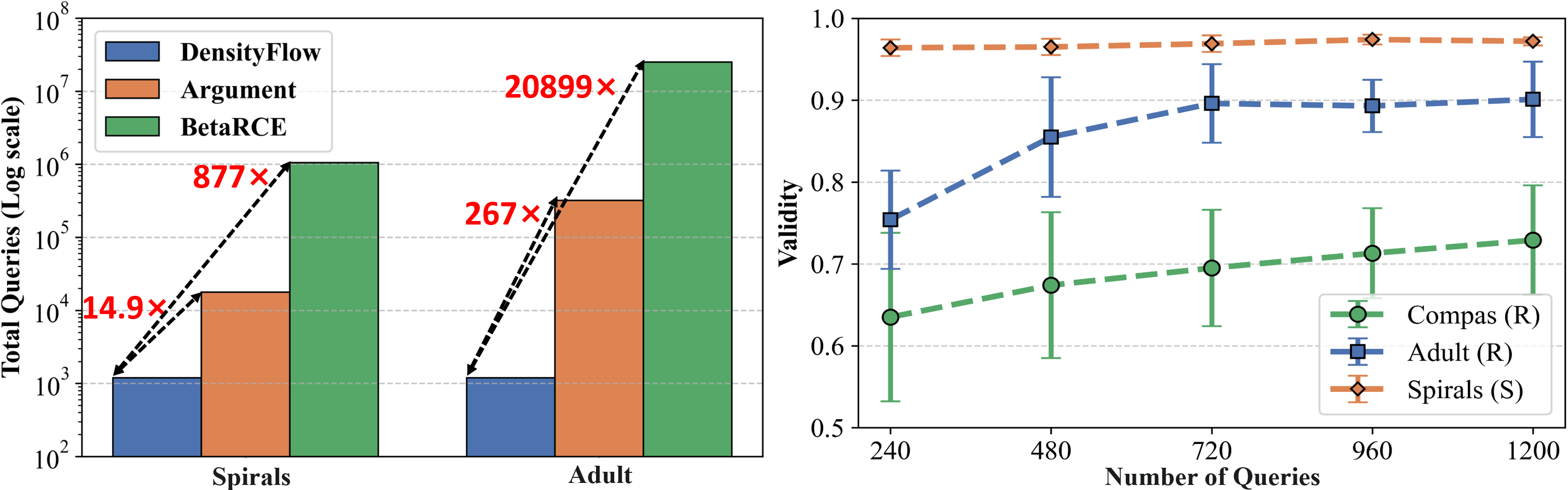}
    \caption{\textbf{Left:} Comparison of total query costs with state-of-the-art black-box methods, Argument and BetaRCE, on Spirals and Adult dataset (log scale). \textbf{Right:} Impact of the internal distillation budget on validity. The curve shows that our method achieves high performance even with a restricted query budget.}
    \label{fig:queries_all}
\end{figure}

As shown in Fig.~\ref{fig:queries_all} (left), DensityFlow requires fewer queries to obtain valid CEs. Baselines typically rely on iterative queries to search for valid candidates or to verify robustness, whereas DensityFlow confines black-box interaction to a focused local surrogate distillation phase. Fig.~\ref{fig:queries_all} (right) illustrates that the validity of CEs saturates rapidly with respect to the number of distillation queries. This behavior indicates that the local decision boundary can be effectively approximated with a small query budget. We emphasize, however, that due to differences in mechanism and scenario across methods, these results should not be interpreted as an overall judgment of the superiority of one paradigm over another.  In summary, the designed distillation strategy allows DensityFlow to operate effectively in query-restricted black-box scenarios.

\begin{table}[ht]
\centering
\caption{Ablation Study. \textit{w/o Density} denotes the variant with the NCE module removed, and \textit{w/o Distill} indicates the exclusion of the distillation alignment module.}
\label{tab:ablation_study}
\resizebox{0.9\linewidth}{!}{
\begin{tabular}{lccc}
\toprule
\textbf{Dataset} 
& \textbf{Densityflow} & \textbf{w/o Density} & \textbf{w/o Distill} \\
\midrule
\textbf{Adult}
& $0.901 \pm .052$
& $0.815 \pm .132$
& $0.767 \pm .044$ \\
\textbf{Blood}
& $0.662 \pm .046$
& $0.495 \pm .033$
& $0.531 \pm .004$ \\
\textbf{Compas}
& $0.729 \pm .067$
& $0.642 \pm .052$
& $0.698 \pm .054$ \\
\textbf{HELOC}
& $0.757 \pm .036$
& $0.718 \pm .095$
& $0.734 \pm .038$ \\
\bottomrule
\end{tabular}
}
 \end{table}
\subsection{Ablation Study}
\label{sec:ablation_study}
Table~\ref{tab:ablation_study} shows removing the density term consistently reduces validity across all datasets, highlighting the importance of density guidance in discouraging low-density regions for RCEs. Removing distillation further degrades performance, with a particularly severe collapse on Blood. Notably, even without the distillation alignment, DensityFlow still achieves the best results on three of the four real-world datasets.
We also explore replacing the NCE with off-the-shelf density proxies (e.g., kNN or LOF) under fixed settings and observe improvements over removing the density signal in Appendix~\ref{app:density-proxies}.

\begin{figure}
    \centering
    \includegraphics[width=0.8\linewidth]{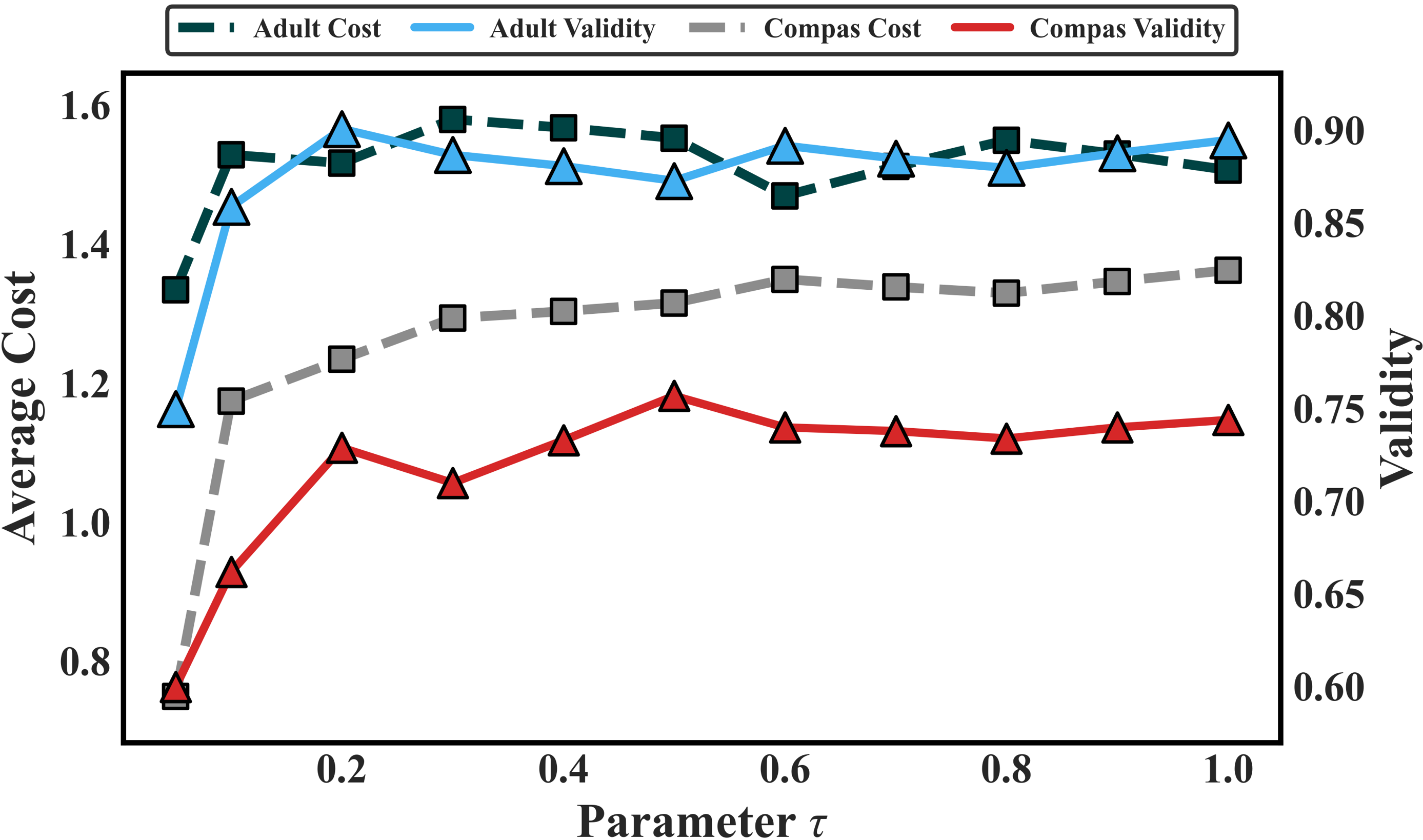}
    \caption{Validity (solid) and Cost (dashed) for varying $\tau$. In the low-ratio regime, sparse noise leads to performance degradation. However, performance rapidly converges to a stable plateau as $\tau$ increases, indicating insensitivity to large noise.}
    \label{fig:tau_sensitivity}
\end{figure}

\noindent\textbf{Impact of the noise ratio $\tau$.} In this part, we analyze the noise ratio $\tau$. As shown in Fig.~\ref{fig:tau_sensitivity}, varying $\tau$ reveals a trade-off  between validity and cost. At low ratios, both validity and cost drop sharply (e.g., Compas cost falls to $\approx 0.75$). This happens because sparse noise cannot clearly define the data boundary. Without this boundary, the generator takes shortcuts through low-density regions, creating adversarial examples instead of valid counterfactuals.
Increasing $\tau$ tightens the trust region, forcing the generator to traverse further into the high-density manifold. This naturally improves validity but incurs a higher transport cost. While performance stabilizes after a certain point in our experiments, the optimal $\tau$ is not a universal constant; it depends on the intrinsic sparsity of the dataset and can be tuned to balance the desired robustness against the acceptable cost. We further analyze other hyperparameters in Appendix~\ref{sec:trade_off}.

% As shown in Fig.~\ref{fig:tau_sensitivity}, performance stabilizes quickly. At very low ratios ($\tau < 0.2$), both validity and cost drop sharply (e.g., Compas cost falls to $\approx 0.75$). This happens because sparse noise cannot clearly define the data boundary. Without this boundary, the generator takes shortcuts through low-density regions, creating adversarial examples instead of valid counterfactuals.
% Once $\tau$ exceeds $0.2$, the cost increases as the density constraints kick in, and validity levels off. This confirms that as long as there is enough noise to cover the background, increasing $\tau$ further has little effect, making the method robust to parameter tuning. 

\subsection{Case Visualization on Synthetic Datasets}
Finally, we compare the counterfactual generated by DensityFlow with the baselines on two synthetic datasets and provide a visual analysis. As illustrated in Fig.~\ref{fig:visualization}, DensityFlow generates smooth trajectories that adhere to the data manifold. NCE explicitly models the gaps between manifolds and sparse regions in the feature space, which appear as white or light-colored areas. 

\section{Conclusion}
We presented DensityFlow, a framework that redefines robust counterfactual generation as a density-guided optimal transport problem. We couple Neural ODEs and NCE with joint optimization to guide trajectories away from low-density regions, navigating the trade-off between robustness and cost. To address the efficiency challenges in black-box settings, we introduced a localized alignment strategy that distills the target models within the high-density region, achieving low query complexity. Extensive experiments show that DensityFlow excels in both effectiveness and robustness while maintaining low query complexity.
% We propose DensityFlow, which uses density-guided Neural ODEs to generate robust counterfactuals that avoid low-density regions prone to model disagreement. Furthermore, we introduce a query-efficient local distillation strategy to enable effective optimization in black-box ensemble settings. Extensive experiments demonstrate that DensityFlow excels in both effectiveness and robustness while maintaining low query complexity.

% 在embedding层面实现
% \noindent\textbf{Limitation and future work.}
% Scaling the density-score component beyond tabular data is non-trivial; future work will explore implementing density estimation at the embedding level (e.g., leveraging pre-trained representations) to scale to high-dimensional modalities, as well as studying reliability criteria for the local proxy used in black-box settings, especially under distribution shift.

\noindent\textbf{Limitations and future work.} First, scaling the density-score component to very high-dimensional data is non-trivial. A feasible strategy to mitigate this is employing feature selection (e.g., via LassoNet~\cite{lemhadri2021lassonet}) to isolate the most predictive features before generating CEs. However, this approach may lead to causal coupling: discarded `non-predictive' features might be physically or causally coupled to the features modified by the counterfactual. For example, drastically altering a predictive feature like `pH' without properly adjusting a filtered, non-predictive feature like `conductivity' can yield out-of-distribution CEs. Furthermore, because our framework relies on learning class-conditional support, its density guidance can be weakened by extreme class imbalance or noisy labels. Finally, unlike discovering rare but interesting “edge cases” (e.g., CFKD~\cite{bender2023towards}), this paper focuses on the robustness of the interpretation. Future work will explore implementing density estimation at the embedding level (e.g., leveraging pre-trained representations) for complex modalities and establishing reliability criteria for the local proxy under severe distribution shifts.

% Acknowledgements should only appear in the accepted version.

\section*{Impact Statement}
Approaches that steer CEs away from low-density regions of the data can, in principle, make recommendations more consistent and less likely to rely on extreme or unlikely feature configurations. At the same time, data support is not a perfect stand-in for legitimacy: some meaningful cases are genuinely rare, and some subpopulations may be sparsely represented in historical data. For these reasons, we view distribution-aware guidance as a complement to, rather than a replacement for, application-specific constraints and careful evaluation, including reporting outcomes for tail cases and relevant subgroups when counterfactuals are used in high-stakes settings.

% In the unusual situation where you want a paper to appear in the
% references without citing it in the main text, use \nocite
% \nocite{langley00}

\bibliography{reference}
\bibliographystyle{icml2026}

%%%%%%%%%%%%%%%%%%%%%%%%%%%%%%%%%%%%%%%%%%%%%%%%%%%%%%%%%%%%%%%%%%%%%%%%%%%%%%%
%%%%%%%%%%%%%%%%%%%%%%%%%%%%%%%%%%%%%%%%%%%%%%%%%%%%%%%%%%%%%%%%%%%%%%%%%%%%%%%
% APPENDIX
%%%%%%%%%%%%%%%%%%%%%%%%%%%%%%%%%%%%%%%%%%%%%%%%%%%%%%%%%%%%%%%%%%%%%%%%%%%%%%%
%%%%%%%%%%%%%%%%%%%%%%%%%%%%%%%%%%%%%%%%%%%%%%%%%%%%%%%%%%%%%%%%%%%%%%%%%%%%%%%
\newpage
\appendix
\onecolumn
\section{Omitted Proof and Additional Derivations}

\subsection{Detailed Proofs of Density Scoring (Proposition~\ref{prop:den_ratio})}
\label{app:proof_den_ratio}

This section provides the rigorous derivation connecting our surrogate classification objective to the target class-conditional density.

\paragraph{Restatement of Proposition~\ref{prop:den_ratio}.}
\textit{
Let $f^*$ be the optimal classifier minimizing the population risk of Eq.~\eqref{eq:proxy_loss}. Assuming the class priors are $\pi_k = P(y=k)$ for semantic classes and $\pi_{K+1}$ for the noise class. For any $\bm{x} \in \mathcal{X}$ and target class $y^* \in \{1, \dots, K\}$, the optimal score difference satisfies:
\begin{equation}
    \Delta z^*(\bm{x}) \triangleq z^*_{y^*}(\bm{x}) - z^*_{K+1}(\bm{x}) = \log p(\bm{x} | y^*) + C
\end{equation}
where $C$ is a constant determined by the priors and the noise density.
}

\subsubsection{Proof of Proposition~\ref{prop:den_ratio}}

We first establish the form of the optimal discriminator output.

\begin{lemma}[Optimal Logit Relationship]
\label{lem:optimal_logit}
Consider a classification task where samples are drawn from a mixture distribution with priors $\pi_k$ and class-conditional densities $p_k(\bm{x})$. Minimizing the cross-entropy loss forces the optimal logits $z_k^*(\bm{x})$ to satisfy the following relation for any pair of classes $i, j$:
\begin{equation}
    z_i^*(\bm{x}) - z_j^*(\bm{x}) = \log (\pi_i p_i(\bm{x})) - \log (\pi_j p_j(\bm{x}))
\end{equation}
\end{lemma}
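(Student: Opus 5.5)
The plan is to write the population cross-entropy risk as an expectation over $\bm{x}$ under the mixture marginal, and then minimize it pointwise. Let $p_{\mathrm{mix}}(\bm{x}) = \sum_{k=1}^{K+1} \pi_k p_k(\bm{x})$ and let $\eta_k(\bm{x}) = \pi_k p_k(\bm{x})/p_{\mathrm{mix}}(\bm{x})$ be the Bayes posterior. The risk then becomes
\begin{equation*}
\mathcal{R}(z) = -\int p_{\mathrm{mix}}(\bm{x}) \sum_{k} \eta_k(\bm{x}) \log \mathrm{softmax}_k(z(\bm{x}))\, d\bm{x}.
\end{equation*}
In the setting of Eq.~\eqref{eq:proxy_loss}, $p_{K+1}=p_{\text{noise}}$, and $\pi_{K+1}$ is fixed by the noise-to-data sampling ratio.

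Since the logit function $z(\cdot)$ is unrestricted (we assume a sufficiently expressive model class), we can minimize the integrand separately at each $\bm{x}$ with $p_{\mathrm{mix}}(\bm{x})>0$. At such a point, write $q=\mathrm{softmax}(z(\bm{x}))$. The integrand is the cross-entropy $H(\eta,q)=H(\eta)+\mathrm{KL}(\eta\,\|\,q)$. By Gibbs' inequality it is minimized exactly when $q=\eta(\bm{x})$.

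Next we invert the softmax. The equality $\mathrm{softmax}(z^*)=\eta$ determines $z^*$ only up to an additive constant $c(\bm{x})$ shared by all coordinates: $z_k^*(\bm{x}) = \log \eta_k(\bm{x}) + c(\bm{x})$. Taking differences removes both $c(\bm{x})$ and the normalizer $p_{\mathrm{mix}}(\bm{x})$, which gives
\begin{equation*}
z_i^* - z_j^* = \log(\pi_i p_i(\bm{x})) - \log(\pi_j p_j(\bm{x})),
\end{equation*}
as claimed. For Proposition~\ref{prop:den_ratio}, we then set $i=y^*$ and $j=K+1$ and use that the uniform noise density is constant on $[-C,C]^d$.

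The main obstacle is technical rather than algebraic. First, the logarithms must be finite, which requires $\eta_i,\eta_j>0$. We will therefore state the result on the set where $p_i(\bm{x})p_j(\bm{x})>0$; for the noise class this holds throughout the bounding box, because the uniform noise has full support there. Second, pointwise minimization of the integral must be justified. The minimizer is determined only $p_{\mathrm{mix}}$-almost everywhere, so the identity is claimed almost everywhere, or everywhere if we assume continuous densities and continuous logits. Finally, we will note that the lemma describes the idealized population optimum over all measurable $z$, not the output of finite-sample training.
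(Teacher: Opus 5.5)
Your proposal is correct and follows essentially the same route as the paper: the population cross-entropy is conditional entropy plus $\mathrm{KL}(\eta\,\|\,\mathrm{softmax}(z))$, so the optimum matches the Bayes posterior, and the shared normalizer cancels in logit differences. The paper does this cancellation via the ratio $D_i^*/D_j^* = e^{z_i^*-z_j^*}$; you do it by inverting the softmax up to an additive constant. Your explicit treatment of pointwise minimization over an unrestricted logit class and of the almost-everywhere qualification adds rigor the paper omits. One refinement: a finite minimizer exists at $\bm{x}$ only when every $\eta_k(\bm{x})>0$, not just $\eta_i$ and $\eta_j$. Where some other class has zero posterior, read the identity along minimizing sequences or allow $z_k^*=-\infty$ for that class.
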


\begin{proof}[Proof of Lemma~\ref{lem:optimal_logit}]
The population risk for cross-entropy is equivalent to minimizing the Kullback-Leibler (KL) divergence between the true posterior $P(Y=k|\bm{x})$ and the predicted probability $D_k(\bm{x}) = \text{softmax}_k(\bm{z}(\bm{x}))$.
The true posterior is given by Bayes' rule:
\begin{equation}
    P(Y=k|\bm{x}) = \frac{\pi_k p_k(\bm{x})}{\sum_{m} \pi_m p_m(\bm{x})}
\end{equation}
The optimal classifier $f^*$ matches these probabilities, i.e., $D_k^*(\bm{x}) = P(Y=k|\bm{x})$.
Since $D_k^*(\bm{x}) \propto e^{z_k^*}$, the ratio of probabilities for any two classes $i, j$ is:
\begin{equation}
    \frac{D_i^*(\bm{x})}{D_j^*(\bm{x})} = e^{z_i^* - z_j^*}
\end{equation}
Equating this to the ratio of true posteriors:
\begin{equation}
    e^{z_i^* - z_j^*} = \frac{P(Y=i|\bm{x})}{P(Y=j|\bm{x})} = \frac{\pi_i p_i(\bm{x})}{\pi_j p_j(\bm{x})}
\end{equation}
Taking the natural logarithm of both sides yields the lemma.
\end{proof}

\noindent \textbf{Derivation of the Density Score.}
Applying Lemma~\ref{lem:optimal_logit} with class $i = y^*$ (target) and class $j = K{+}1$ (noise):

\begin{align}
    z_{y^*}^*(\bm{x}) - z_{K+1}^*(\bm{x}) &= \log(\pi_{y^*} p_{y^*}(\bm{x})) - \log(\pi_{K+1} p_{K+1}(\bm{x})) \nonumber \\
    &= \log p_{y^*}(\bm{x}) - \log p_{K+1}(\bm{x}) + \log \frac{\pi_{y^*}}{\pi_{K+1}}
\end{align}

Substituting the definitions $p_{y^*}(\bm{x}) = p(\bm{x} | y^*)$ and $p_{K+1}(\bm{x}) = p_{\text{noise}}(\bm{x})$:
\begin{equation}
    z_{y^*}^*(\bm{x}) - z_{K+1}^*(\bm{x}) = \log p(\bm{x} | y^*) \underbrace{- \log p_{\text{noise}}(\bm{x}) + \log \frac{\pi_{y^*}}{\pi_{K+1}}}_{Const}
\end{equation}
Since we employ a Uniform Noise distribution, $p_{\text{noise}}(\bm{x})$ is constant over the domain $\mathcal{X}$. Consequently, the entire underbraced term becomes a constant. This confirms that the logit difference is strictly linear with the target log-density. \qed

\subsection{Robustness v.s Data Density}
\label{sec:density_robustness_link}
Our framework builds on the low-density separation principle~\cite{chapelle2005semi}, which implies that classifier disagreement (epistemic uncertainty) is concentrated in regions with low probability mass~\cite{lakshminarayanan2017simple}. We formalize this relationship as follows:

\begin{assumption}[Density-Variance Correlation]
\label{ass:density_variance}
Let $\sigma^2_{\mathcal{M}}(\bm{x})$ denote the predictive variance across a set of plausible models $\mathcal{M}$. For a target class $y^*$, we assume an inverse correlation between class-conditional density and model disagreement:

\begin{equation}
    p(\bm{x}_1 | y^*) > p(\bm{x}_2 | y^*) \implies \sigma^2_{\mathcal{M}}(\bm{x}_1) < \sigma^2_{\mathcal{M}}(\bm{x}_2)
\end{equation}
\end{assumption}
Under this assumption, constraining generated samples to the high-density manifold $\mathcal{X}_{\text{trust}}(y^*; \tau)$ inherently minimizes epistemic uncertainty, thereby theoretically bounding the robust error.

We provide a theoretical justification for using density maximization to improve the robustness of CEs against model multiplicity. Let $\mathcal{H} \sim \mathcal{P}_{\mathcal{M}}$ denote a random classifier drawn from the distribution of plausible models. For a generated counterfactual $\bm{x}'$, we define its \textit{robustness} as the probability that a random model predicts the target class $y^*$:
\begin{equation}
    \text{Rob}(\bm{x}') \triangleq \Pr_{\mathcal{H} \sim \mathcal{P}_{\mathcal{M}}} [\mathcal{H}(\bm{x}') = y^*]
\end{equation}
Let $s_{\mathcal{H}}(\bm{x}')$ be the model's confidence score for $y^*$ (e.g., softmax output), and $t$ be the decision threshold.
We define the mean validity $\mu(\bm{x}') \triangleq \mathbb{E}_{\mathcal{H}}[s_{\mathcal{H}}(\bm{x}')]$ and the model disagreement $\sigma^2(\bm{x}') \triangleq \mathrm{Var}_{\mathcal{H}}[s_{\mathcal{H}}(\bm{x}')]$.

We aim to maximize $\text{Rob}(\bm{x}') = \Pr[s_{\mathcal{H}}(\bm{x}') > t]$. Assuming the counterfactual is valid on average (i.e., $\mu(\bm{x}') > t$), we apply Cantelli's inequality (one-sided Chebyshev's inequality):
\begin{equation}
    \Pr(s_{\mathcal{H}}(\bm{x}') \le t) \le \frac{\sigma^2(\bm{x}')}{\sigma^2(\bm{x}') + (\mu(\bm{x}') - t)^2}
\end{equation}
The lower bound for robustness is therefore:
\begin{equation}
    \label{eq:robust_lower_bound}
    \text{Rob}(\bm{x}') \ge 1 - \underbrace{\frac{\sigma^2(\bm{x}')}{\sigma^2(\bm{x}') + (\mu(\bm{x}') - t)^2}}_{\text{Failure Risk Term}}
\end{equation}

Eq.~\eqref{eq:robust_lower_bound} indicates that to tighten the robustness lower bound, one must minimize the failure risk. Assuming $\mu(\bm{x}')$ remains stable, this is primarily achieved by reducing the predictive variance $\sigma^2(\bm{x}')$.
By Assumption~\ref{ass:density_variance}, $\sigma^2(\bm{x}')$ is inversely correlated with the class-conditional density $p(\bm{x}' | y^*)$.
Since maximizing our derived score difference $z_{y^*} - z_{K+1}$ is equivalent to maximizing this density (as proven in Appendix~\ref{app:proof_den_ratio}), our method implicitly suppresses predictive variance, thereby theoretically enhancing the robustness guarantee.

\section{Implementation Details}

\subsection{Dataset Description}
\label{sec:dataset_description}
\noindent\textbf{Synthetic datasets.}
We utilize four synthetic 2D datasets to evaluate the performance of our framework, particularly for visualizing decision boundaries and counterfactual paths. Each dataset consists of $n=3,000$ instances with $d=2$ numerical features, which are normalized using \texttt{StandardScaler} to ensure consistent feature scaling.

\begin{itemize}
    \item \textbf{Moons}: This dataset is generated using the \texttt{make\_moons} function from \textit{scikit-learn}. It forms two interleaving half-circle shapes with a Gaussian noise level of 0.05, providing a classic non-linearly separable classification task.
    
    \item \textbf{Spirals}: This dataset comprises two intertwined spirals with 2.5 rotations. It is constructed using polar coordinates where the radius is proportional to the angle, with added Gaussian noise to create overlapping regions.
    
    \item \textbf{Circles}: This dataset consists of two concentric circles generated via trigonometric transformations. The outer circle has a radius of 1.0, while the inner circle is scaled by a factor of 0.5, with a noise level of 0.05.
    
    \item \textbf{Chessboard}: This dataset represents a $3 \times 3$ grid of clusters where labels are assigned based on the parity of the grid indices. This configuration creates a highly fragmented decision boundary, testing the model's ability to navigate discrete cluster-based regions.
\end{itemize}

The Python implementations for generating these synthetic datasets are illustrated in Figure \ref{fig:synth_gen_code}.

\begin{figure}[ht]
\centering
\begin{lstlisting}[language=Python, basicstyle=\small\ttfamily, frame=single, breaklines=true, columns=fixed, keepspaces=true]
# 1. Intertwined Spirals Generation
def make_intertwined_spirals(n_samples=3000, noise=0.1, n_rotations=2.5):
    n_per_class = n_samples // 2
    theta = np.sqrt(np.random.rand(n_per_class)) * n_rotations * 2 * np.pi
    r = theta / (n_rotations * 2 * np.pi) * 5
    X0 = np.column_stack([r * np.cos(theta), r * np.sin(theta)])
    X1 = np.column_stack([r * np.cos(theta + np.pi), r * np.sin(theta + np.pi)])
    return np.vstack([X0, X1]) + noise * np.random.randn(n_samples, 2)

# 2. Concentric Circles Generation
def make_concentric_circles(n_samples=3000, noise=0.05, factor=0.5):
    angles = 2 * np.pi * np.random.rand(n_samples // 2)
    X_outer = np.column_stack([np.cos(angles), np.sin(angles)])
    X_inner = factor * np.column_stack([np.cos(angles), np.sin(angles)])
    return np.vstack([X_outer, X_inner]) + noise * np.random.randn(n_samples, 2)

# 3. Chessboard Grid Generation
def make_Chessboard(n_samples=3000, n_clusters=3, noise=0.1):
    X, y = [], []
    for i in range(n_clusters):
        for j in range(n_clusters):
            center = [(i + 0.5) / n_clusters * 6 - 3, (j + 0.5) / n_clusters * 6 - 3]
            X.append(center + noise * np.random.randn(n_samples // 9, 2))
            y.append(np.full(n_samples // 9, (i + j) % 2))
    return np.vstack(X), np.hstack(y)
\end{lstlisting}
\caption{Implementation details for synthetic dataset generation logic.}
\label{fig:synth_gen_code}
\end{figure}

\noindent\textbf{Real-world datasets.}
% 用footnote引用超链接，不需要讲数据集是如何划分的，给出数据集的描述即可，多少个样本，多少特征，是数值型数据集，离散型数据集，还是混合类型数据集。
In addition to synthetic data, we evaluate our method on four widely used real-world tabular datasets: \textit{Adult}, \textit{Blood}, \textit{Compas}, and \textit{HELOC}.
All datasets are used for classification tasks and consist of numerical, categorical, or mixed-type features.

\begin{table}[htbp]
\centering
\caption{Statistics of real-world datasets.}
\label{tab:real_dataset_stats}
\begin{tabular}{lccccc}
\toprule
\textbf{Dataset} & \textbf{\#Rows} & \textbf{\#Features} & \textbf{Numerical} & \textbf{Categorical} & \textbf{Feature type} \\
\midrule
Adult   & 48,842 & 12 & 4  & 8 & Mixed \\
Blood   & 748    & 4  & 4  & 0 & Numerical \\
Compas  & 7,214  & 9  & 0  & 9 & Categorical \\
HELOC   & 10,459 & 23 & 23 & 0 & Numerical \\
\bottomrule
\end{tabular}
\end{table}

In Table~\ref{tab:real_dataset_stats}, \#Rows denotes the total number of samples, \#Features represents the total number of features, and the columns \textit{Numerical} and \textit{Categorical} indicate the number of numerical and categorical features, respectively. The \textit{Feature type} column summarizes the data type: numerical, categorical, or mixed. Below is a detailed introduction to each dataset:

\begin{itemize}
    \item \textbf{Adult~\footnote{https://archive.ics.uci.edu/ml/datasets/adult}}: This is a binary classification task to predict whether an individual earns more than \$50,000 per year based on features such as occupation, marital status, and education.
    
    \item \textbf{Blood~\footnote{https://archive.ics.uci.edu/ml/datasets/blood+transfusion+service+center}}: This dataset supports a binary classification task to forecast whether a blood donor will make another donation, based on their historical donation records.
    
    \item \textbf{Compas~\footnote{https://github.com/propublica/compas-analysis.git}}: A binary classification dataset designed to predict if a defendant will reoffend within two years, using demographic and judicial information.
    
    \item \textbf{HELOC~\footnote{https://community.fico.com/s/explainable-machine-learning-challenge}}: A dataset for binary classification that evaluates credit risk by analyzing applicants’ financial and credit-related information.
    
\end{itemize}

\noindent\textbf{Data Preparation}
We split each dataset into numerical and categorical features. Numerical features are imputed with the median value computed on the training split and then standardized feature-wise using z-score normalization. For categorical variables, instead of one-hot encoding, we adopt the continuous representation used in TabRep~\cite{si2025tabrep}: a category \(k_i\) of a \(K\)-way feature is encoded as
\[
\mathbf{E}(k_i)=\left[\cos\!\left(\frac{2\pi k_i}{K}\right),\ \sin\!\left(\frac{2\pi k_i}{K}\right)\right].
\]
This yields a low-dimensional continuous representation for mixed tabular inputs. During inference, numerical features are mapped back to the original scale, and each categorical feature is decoded to the nearest valid category on the circle.

\subsection{Evaluation Models}
We trained seven classical and strong classifiers, including kNN~\cite{cover1967nearest}, SVM~\cite{cortes1995support}, Random Forest~\cite{breiman2001random}, MLP, XGBoost~\cite{chen2016guestrin}, CatBoost~\cite{prokhorenkova2018catboost} and TabNet~\cite{arik2021tabnet}, with hyperparameters tuned on the validation set. The search ranges for each model are summarized in Table~\ref{tab:hyperparameter_range}, and all models used 3-fold cross-validation for parameter selection. These classifiers collectively constitute the target model ensemble $\mathcal{M}$. The complete classification results are presented in Table~\ref{tab:classification_accuracy}, these classifiers exhibited almost identical classification performance.

\begin{table}[ht]
\centering
\caption{Hyperparameter search ranges for evaluation models.}
\label{tab:hyperparameter_range}
\begin{tabular}{lll}
\toprule
\textbf{Model} & \textbf{Parameter} & \textbf{Search Range} \\
\midrule
kNN & $n\_neighbors$ & [3, 5, 7, 9] \\
    & $weights$ & [`uniform', `distance'] \\
\midrule
MLP & $hidden\_layer\_sizes$ & [(64,64), (128,64), (128,64,64), (128,64,64,128)] \\
    & $alpha$ & [$10^{-4}$, $10^{-3}$, $10^{-2}$] \\
\midrule
SVM & $C$ & [1, 10, 100] \\
    & $gamma$ & [0.1, 1, 10, 100] \\
    & $Kernel$ & RBF  \\
\midrule
Random Forest & $n\_estimators$ & [100, 300] \\
              & $max\_depth$ & [3, 5, 10] \\
              & $min\_samples\_split$ & [2, 5] \\
\midrule
XGBoost & $n\_estimators$ & [100, 300] \\
        & $max\_depth$ & [3, 5, 7] \\
        & $learning\_rate$ & [0.03, 0.1] \\
\midrule
CatBoost & $depth$ & [4, 6, 8] \\
         & $l2\_leaf\_reg$ & [3, 5, 7] \\
         & $learning\_rate$ & [0.03, 0.1] \\
\midrule
TabNet 
       & $mask\_type$ & `sparsemax'  \\
       & $max\_epochs$ & [200,500]\\
       & $learning\_rate$ & [0.005, 0.03]\\
\bottomrule
\end{tabular}
\end{table}

\begin{table}[ht]
\centering
\caption{Classification accuracy on different models in five different runs(mean $\pm$ std).}
\label{tab:classification_accuracy}
\resizebox{\textwidth}{!}{
\begin{tabular}{lccccccc}
\toprule
\textbf{Dataset} 
& \textbf{kNN} & \textbf{MLP} & \textbf{SVM} & \textbf{RF} & \textbf{XGB} & \textbf{CAT} & \textbf{TabNet} \\
\midrule
Moons
& $1.0000 \pm 0.0000$
& $1.0000 \pm 0.0000$
& $1.0000 \pm 0.0000$
& $1.0000 \pm 0.0000$
& $1.0000 \pm 0.0000$
& $1.0000 \pm 0.0000$
& $0.9989 \pm 0.0016$\\
Circles
& $1.0000 \pm 0.0000$
& $0.9989 \pm 0.0016$
& $1.0000 \pm 0.0000$
& $0.9967 \pm 0.0000$
& $0.9967 \pm 0.0000$
& $1.0000 \pm 0.0000$
& $1.0000 \pm 0.0000$\\
Spirals
& $1.0000 \pm 0.0000$
& $0.9633 \pm 0.0027$
& $1.0000 \pm 0.0000$
& $0.9983 \pm 0.0000$
& $0.9967 \pm 0.0000$
& $1.0000 \pm 0.0000$
& $0.9428 \pm 0.0225$\\
Chessboard
& $1.0000 \pm 0.0000$
& $0.9967 \pm 0.0047$
& $1.0000 \pm 0.0000$
& $1.0000 \pm 0.0000$
& $0.9983 \pm 0.0000$
& $0.9994 \pm 0.0000$
& $1.0000 \pm 0.0000$\\
\midrule
Adult
& $0.8357 \pm 0.0000$
& $0.8553 \pm 0.0007$
& $0.8499 \pm 0.0002$
& $0.8650 \pm 0.0000$
& $0.8753 \pm 0.0000$
& $0.8755 \pm 0.0010$
& $0.8485 \pm 0.0016$\\
Compas
& $0.6500 \pm 0.0000$
& $0.6766 \pm 0.0045$
& $0.6798 \pm 0.0000$
& $0.6824 \pm 0.0027$
& $0.6826 \pm 0.0000$
& $0.6826 \pm 0.0010$
& $0.6784 \pm 0.0086$\\
HELOC
& $0.6955 \pm 0.0000$
& $0.7098 \pm 0.0085$
& $0.7094 \pm 0.0000$
& $0.7177 \pm 0.0006$
& $0.7132 \pm 0.0000$
& $0.7200 \pm 0.0021$
& $0.7084 \pm 0.0014$\\
Blood
& $0.8200 \pm 0.0000$
& $0.7622 \pm 0.0031$
& $0.7933 \pm 0.0000$
& $0.8222 \pm 0.0063$
& $0.8133 \pm 0.0000$
& $0.8067 \pm 0.0109$
& $0.7778 \pm 0.0083$\\
\bottomrule
\end{tabular}
}
\end{table}

\subsection{Parameter Setting of DensityFlow}
\label{app:Parameter}
Unless otherwise specified, we train DensityFlow with 800 epochs and a batch size of 64.
We optimize the generator $v_\theta$ and the surrogate $f_\phi$ using AdamW with weight decay $10^{-4}$, using learning rates $\eta_g=10^{-3}$ and $\eta_\phi=10^{-4}$, respectively.
We apply a cosine annealing schedule with $\eta_{\min}=10^{-6}$ and a linear warm-up for the first $20$ epochs, and maintain an exponential moving average (EMA) of model parameters with decay $0.999$.

For the Neural ODE, we integrate over $t\in[0,1]$ using an adaptive Dormand--Prince solver (\texttt{dopri5}) with tolerances $(\texttt{rtol},\texttt{atol})=(10^{-3},10^{-3})$ during training and $(10^{-4},10^{-4})$ at test time.
The noise distribution is defined over a bounded domain $[-C, C]^d$, which also serves as the search space for the generator. The bound $C$ is set adaptively to cover the support of the training data: $C=\max\{2.0,\;\beta\cdot\max_{x\in\mathcal{D}_{\mathrm{train}}}\|x\|_{\infty}\}$, here we set $\beta=1.2$, further discussion on $C$ and $\beta$ are presented in Appendix~\ref{sec:discussion_of_noise}.

For NCE-based density learning, we use a $(K\!+\!1)$-class discriminator (semantic classes plus one noise class) and sample uniform noise with ratio $r=0.2$; the noise classification term is weighted by $\lambda_{\rm noise}=1.0$.
The objective weights were selected via grid search from the candidate sets: $\lambda_{\rm cost} \in \{0.2, 0.4, 0.6\}$, and $\lambda_{\rm den} \in \{0.0, 0.1, 0.3\}$, where the density term is applied as $\mathrm{ReLU}(log\tau -S(\cdot | y^*))$.

For proxy distillation (fine-tuning), we run $20$ epochs with inner updates $3$ (\texttt{FT\_INNER\_ITER}=3), using AdamW with learning rates $10^{-4}$ for the proxy and $5\times 10^{-5}$ for the generator. During inference, we employ a parallel stochastic search strategy. Specifically, we generate a batch of $N=30$ candidate trajectories by injecting Gaussian perturbations ($\sigma=0.05$) into the initial states. The final counterfactual is derived via a validity-constrained optimization step, where we select the candidate that minimizes the transport cost among all successful trials.

\noindent\textbf{Implementation of the surrogate $f_\phi$.} In our implementation, $f_\phi$ is realized by two neural proxy classifiers (\texttt{C1} and \texttt{C2}) that are trained jointly with the generator. Both proxies are $(K{+}1)$-class networks (\texttt{C\_OUT\_DIM}=3 in the binary case): two semantic classes plus an additional noise class for NCE-style density learning. During training, the proxies are updated using cross-entropy on real samples and on uniformly sampled noise points $\tilde{\bm{x}}\sim \mathrm{Unif}([-c,c]^d)$ labeled as the noise class, with noise ratio $r$ and weight $\lambda_{\mathrm{noise}}$. For generator updates, we use a worst-case aggregation over the two proxies: both the target-class loss and the density score are computed under each proxy and then combined by taking the maximum, which encourages counterfactuals to remain valid and non-noise-like under proxy disagreement.
After training, we optionally distill the proxies toward the black-box ensemble by matching the ensemble's mean predicted probability on a small set of generated query points, and continue optimizing the generator using the distilled proxy.

\subsection{Baseline Methods}
\label{app:baselines}
We summarize the baseline methods and their key assumptions, and describe how we adapt them to our experimental setting for a fair comparison.

\begin{itemize}

   % \item \textbf{TABCF} \cite{panagiotou2024tabcf} is a transformer-based variational autoencoder tailored for generating counterfactual explanations on mixed-type tabular data. TABCF employs a differentiable Gumbel-Softmax architecture to handle categorical features precisely, overcoming feature-type biases in existing methods. It processes numerical and categorical columns jointly in a latent space, enabling the generation of valid, proximal, and sparse counterfactuals while enhancing interpretability for black-box models.
   
   \item \textbf{CEFlow}~\cite{duong2023ceflow} is a robust and efficient counterfactual explanation framework for tabular data using normalizing flows. CEFlow treats mixed-type features (continuous and categorical) via invertible and differentiable flows, enabling fast and stable perturbation addition in latent space. It addresses limitations of VAE-based methods by avoiding sampling randomness, producing more consistent and data-manifold-aligned counterfactuals with improved validity and proximity.

    \item \textbf{Product-MIP}~\cite{leofante2023counterfactual} refers to a relational verification approach that constructs a single "product" neural network combining multiple models under multiplicity, then generates guaranteed robust counterfactuals via mixed-integer programming (MIP). It encodes all equivalent models into one superstructure, enabling exact verification of validity across the model set while analyzing computational complexity for ReLU networks.
   
   \item \textbf{Argumentative Ensembling}~\cite{jiang2024recourse} is a computational argumentation-based approach for robust recourse under model multiplicity. It integrates predictions and counterfactual explanations from multiple equally-performing models simultaneously, using argumentation semantics to aggregate decisions and ensure robustness. The method satisfies desirable properties like counterfactual validity and non-triviality, accommodating user preferences while minimizing accuracy trade-offs in ensembling.

   \item \textbf{BetaRCE}~\cite{stkepka2025counterfactual} is a post-hoc method providing probabilistic guarantees on counterfactual robustness to model changes. BetaRCE uses a Bayesian-inspired Beta distribution to estimate and shift base counterfactuals toward higher stability regions in feature space, allowing user-defined probability bounds ($\delta$, $\alpha$). It operates model-agnostically on top of any base method, preserving plausibility and proximity while outperforming prior robust approaches under natural model updates.

   % \item \textbf{T-COL} \cite{wang2024tcolgeneratingcounterfactualexplanations} is a feature-based method that generates counterfactual explanations adaptable to general user preferences and variable machine learning systems. It introduces a \textit{Tree-based Conditions Optional Links} (T-COL) structure that utilizes local greedy trees and linking trees to select optimal feature combinations from prototype cases. By mapping abstract user preferences to concrete properties like sparsity, proximity, and centrality, T-COL provides a model-agnostic solution that ensures high feasibility and validity. Additionally, it enhances robustness by targeting high-confidence regions, ensuring explanations remain valid across frequent model updates or replacements. 

\end{itemize}

\subsection{Baselines Setting}
Here, we show the hyper-parameter settings of the baseline methods used in this paper.

\begin{itemize}

    \item \textbf{Product-MIP} uses an MILP formulation to generate counterfactual explanations robust to model multiplicity. The neural networks consist of two hidden layers with 128 units each and a two-dimensional softmax output layer. The models are optimized using the AdamW optimizer with a learning rate of $10^{-3}$, a batch size of 128, and trained 50 epochs. All numerical features are normalized to $[0,1]$ and categorical features are label-encoded. Counterfactuals are obtained by minimizing the $\ell_1$ distance subject to MILP constraints, solved using Gurobi.

    \textbf{CEFlow} uses a conditional normalizing flow $f_\theta$ to model tabular data and generate counterfactuals via an invertible latent-space perturbation. The flow model is optimized with AdamW using a learning rate of $10^{-3}$, a batch size of 32, and the negative log-likelihood loss for 500 epochs. Following \citet{duong2023ceflow}, the hyperparameter $\alpha$ is selected by searching over a range of values; we search over $\alpha\in\{0.5,0.6,0.7,0.8\}$.

    \item \textbf{BetaRCE} uses GrowingSpheres as the generator of base CEs and the fine-tuned black-box ensemble as the 'admissible model space' for its statistical verification algorithm. We run the RCE algorithm with parameter $\beta=[0.5,0.7,0.9]$ and distance threshold $\delta=[0.7, 0.8, 0.9]$.
    
    \item \textbf{Argumentative Ensembling} selects counterfactuals robust to model multiplicity via a bipolar argumentation framework. The well-tuned black-box ensemble $\mathcal{M}$ is used for calling. KDTree nearest-neighbor search~\cite{brughmans2024nice} is applied with $k=[3,5,7]$, and the BAF semantics are either $s$-preferred or $d$-preferred. Numerical features are scaled to $[0,1]$ and categorical features are one-hot encoded. 
    
\end{itemize}

\subsection{Pseudo-Code}

\begin{algorithm}[ht]
\caption{Training of DensityFlow}
\label{alg:densityflow_training}
\small
\textbf{Input:} training set $\mathcal{D}$; black-box ensemble $\mathcal{M}$; generator $v_\theta$; surrogate $f_\phi$; training epochs $T$; local distillation epochs $T_{\mathrm{dis}}$.

\textbf{Output:} trained generator $v_\theta$ and distilled surrogate $f_{\phi'}$.

\begin{enumerate}
    \item Initialize the generator parameters $\theta$ and surrogate parameters $\phi$.
    
    \item \textbf{Phase I: Surrogate--generator joint training.}
    \begin{enumerate}
        \item For $t=1,\dots,T$:
        \begin{enumerate}
            \item Sample a mini-batch $(\bm{x}, y)$ from $\mathcal{D}$ and choose target labels $y^* \neq y$.
            \item Update the surrogate $f_\phi$ using the NCE-based objective in Eq.~\eqref{eq:proxy_loss}.
            \item Starting from each query $\bm{x}$, solve the augmented ODE in Eq.~\eqref{eq:aug_ode_cost} to obtain the terminal state $\bm{x}'$ and the transport cost.
            \item Update the generator $v_\theta$ using the composite objective in Eq.~\eqref{eq:objective}.
        \end{enumerate}
    \end{enumerate}

    \item \textbf{Phase II: Query-efficient local distillation.}
    \begin{enumerate}
        \item Initialize $f_{\phi'} \leftarrow f_\phi$.
        \item For $t=1,\dots,T_{\mathrm{dis}}$:
        \begin{enumerate}
            \item Generate terminal states $\bm{x}' \sim \bm{z}_\theta(T)$.
            \item Query the ensemble on these states to obtain $\bar{y}=\mathcal{M}(\bm{x}')$.
            \item Form the local support set $\mathcal{D}_\theta=\{(\bm{x}',\bar{y})\}.$
            \item Update the local surrogate $f_{\phi'}$ using the distillation loss in Eq.~\eqref{eq:distill_loss}.
            \item Refine the generator again using Eq.~\eqref{eq:objective}, with $f_{\phi'}$ replacing $f_\phi$.
        \end{enumerate}
    \end{enumerate}

    \item Return $v_\theta$ and $f_{\phi'}$.
\end{enumerate}
\end{algorithm}

\begin{algorithm}[ht]
\caption{Inference of DensityFlow for a query instance}
\label{alg:densityflow_inference}
\small
\textbf{Input:} query instance $\bm{x}$; target label $y^*$; trained generator $v_\theta$; distilled surrogate $f_{\phi'}$; number of trials $N$.

\textbf{Output:} counterfactual $\bm{x}^*$.

\begin{enumerate}
    \item Construct $N$ perturbed initial states around $\bm{x}$.
    \item For each trial $n=1,\dots,N$:
    \begin{enumerate}
        \item Solve the augmented ODE in Eq.~\eqref{eq:aug_ode_cost} from the $n$-th initial state.
        \item Obtain the terminal point $\bm{x}'_n$ and its transport cost.
        \item Evaluate whether $\bm{x}'_n$ satisfies the target validity condition using the distilled surrogate $f_{\phi'}$.
    \end{enumerate}
    \item Define the valid index set $    \mathcal{V}=\{\,n \in \{1,\dots,N\}\mid \arg\max f_{\phi'}(\bm{x}_n') = y^*\,\}.
$
    \item Return
    \[
    \bm{x}^* =
    \begin{cases}
    \bm{x}'_{n^*}, & n^*=\arg\min_{n\in\mathcal{V}} cost(\bm{x},\bm{x}_n'),\quad \text{if } \mathcal{V}\neq\emptyset,\\[4pt]
    \bm{x}'_{n^*}, & n^*=\arg\min_{1\le n\le N} cost(\bm{x},\bm{x}_n'),\quad \text{otherwise.}
    \end{cases}
    \]
\end{enumerate}
\end{algorithm}

\section{Additional Results}
\label{sec:additional_results}

\subsection{Endpoint penalty vs. Path penalty}
\label{app:endpoint-vs-path}
Recalling Sec.~\ref{sec:nod_proxy}, our framework learns a noise-aware $(K{+}1)$-way proxy $f_\phi$ that simultaneously supports (i) NCE-based support scoring through an explicit noise class and (ii) differentiable validity signals via local distillation. Given the conditional density score $S(\cdot | y^*)$, we incorporate a rectified density barrier $\mathrm{ReLU}( log\tau-S(\cdot | y^*))$ into the generator objective (Eq.~\eqref{eq:objective}). A natural design question is \textit{whether this penalty should be applied only at the terminal counterfactual (endpoint) or aggregated along the full ODE trajectory (path)?}

Let $z:[0,1]\to\mathcal{X}$ denote the continuous counterfactual trajectory produced by the ODE generator, with $z(0)=\bm{x}$ and $z(1)=\bm{x}'$. We compare two variants that differ only in how the one-sided low-support penalty is evaluated:

\begin{align}
\mathcal{L}_{\mathrm{den}}^{\mathrm{end}}(z) &= \mathrm{ReLU}\!\left(\gamma - S(z(1) | y^*)\right) \\
\mathcal{L}_{\mathrm{den}}^{\mathrm{path}}(z) &= \int_{0}^{1}\mathrm{ReLU}\!\left( \gamma - S(z(t) | y^*)\right)\,dt
\end{align}

where $\gamma = \log \tau$ is the margin parameter. $\mathcal{L}_{\mathrm{den}}^{\mathrm{path}}$ is approximated by uniform time discretization with $T$ steps in implementation. All settings were fixed across the two variants, changing only the penalty mode. Following the main paper, we evaluate on the test points that are correctly classified by all black-box models in the ensemble $\mathcal{M}$, generating a counterfactual for each point.

\begin{table}[ht]
\centering
\small
\caption{Endpoint vs.\ path penalty across datasets.}
\label{tab:endpoint-vs-path-all}
\setlength{\tabcolsep}{5pt}
\begin{tabular}{lcccccc}
\toprule
\multirow{2}{*}{\textbf{Dataset}} &
\multicolumn{2}{c}{\textbf{Cost} } &
\multicolumn{2}{c}{\textbf{Validity} } \\
\cmidrule(lr){2-3}\cmidrule(lr){4-5}\cmidrule(lr){6-7}
 & Endpoint & Path & Endpoint & Path  \\
\midrule
Adult & 1.597$\pm$.194 & 1.636$\pm$.173 & 0.901$\pm$.052 & 0.873$\pm$.046 \\
Compas& 1.176$\pm$.017 & 1.781$\pm$.026 & 0.729$\pm$.067 & 0.715$\pm$.059 \\
HELOC & 1.812$\pm$.599 & 1.824$\pm$.524 & 0.757$\pm$.035 & 0.744$\pm$.028 \\
Blood & 1.527$\pm$.401 & 1.519$\pm$.318 & 0.662$\pm$.046 & 0.718$\pm$.084 \\
\bottomrule
\end{tabular}
\end{table}

Table~\ref{tab:endpoint-vs-path-all} reports results on four datasets. Endpoint and path variants are nearly indistinguishable across all metrics. We therefore adopt the endpoint form in the main experiments for computational efficiency. This behavior is consistent with the structure of our objective: the generator is trained under a minimum-action formulation with an explicit kinetic-energy term (Eq.~\eqref{eq:aug_ode_cost}), which favors smooth, short trajectories. At the same time, the noise-aware $(K{+}1)$-way surrogate provides a strong support signal through the conditional score $S(\cdot | y^*)$.

\subsection{Do Density Signals Help? Plug-in Density Proxies vs.\ NCE}
\label{app:density-proxies}

To isolate the role of density information, we compare several plug-in density proxies. For baselines, we define a scalar score $S_{\mathrm{den}}(\bm{x})$ (where higher values indicate higher density) and use it as a regularizer. Below we define the proxies used in Table~\ref{tab:app-density-proxy-plus-nce}. Unless stated otherwise, all proxies are computed in the same feature space $\phi(\cdot)$.

\paragraph{(i) Discriminator-based proxy (\texttt{disc2}).}
We train a binary discriminator $d_\psi(\phi(\bm{x})) \in (0,1)$ to distinguish in-distribution samples $\bm{x} \sim \mathcal{D}_{\mathrm{train}}$ from synthetic noise samples $\tilde{\bm{x}} \sim \mathcal{N}$ (constructed by sampling within the normalized feature range). The density score is defined as
\begin{equation}
S_{\mathrm{den}}^{disc2}(\bm{x}) \;=\; \log \frac{d_\psi(\phi(\bm{x}))}{1-d_\psi(\phi(\bm{x}))}
\end{equation}
so higher values indicate higher in-distribution likelihood. In practice, this corresponds to the logit of the discriminator output.
We implement this discriminator as a lightweight MLP and train it jointly (or as a separate pretraining stage) depending on the experiment.

\paragraph{(ii) kNN proxy (\texttt{knn\_reg}).}
We compute a local-neighborhood statistic based on $k=20$ nearest neighbors in $\mathcal{D}_{\mathrm{train}}$:
\begin{equation}
r_k(\bm{x}) \;=\; \frac{1}{k}\sum_{\bm{x}_j \in \mathrm{kNN}(\phi(\bm{x}),\,\phi(\mathcal{D}_{\mathrm{train}}))} \left\lVert \phi(\bm{x})-\phi(\bm{x}_j)\right\rVert_2
\end{equation}
We then map it to a density-like score via a monotone transform, e.g.,
\begin{equation}
S_{\mathrm{den}}^{knn\_reg}(\bm{x}) \;=\; -\, r_k(\bm{x})
\end{equation}
so larger scores correspond to denser regions (smaller neighbor radius). Our implementation uses \texttt{sklearn.neighbors} to query kNN distances.

\paragraph{(iii) Local Outlier Factor proxy (\texttt{lof\_reg}).}
We use the LOF score $\mathrm{LOF}(\bm{x})$ computed against $\mathcal{D}_{\mathrm{train}}$:
\begin{equation}
S_{\mathrm{den}}^{\texttt{lof\_reg}}(\bm{x}) \;=\; -\,\mathrm{LOF}(\phi(\bm{x}))
\end{equation}
where larger values indicate less outlierness (higher local density).
We compute LOF using the standard implementation in \texttt{sklearn.neighbors.LocalOutlierFactor}.

Finally, we incorporate these signals into the main optimization objective. For the plug-in baselines (where a higher $S_{\mathrm{den}}$ implies higher density), the penalty takes the form:
\begin{equation}
\mathcal{L}_{\mathrm{den}}(\bm{x}) \;=\; \max\bigl(0,\, log \tau - S_{\mathrm{den}}(\bm{x})\bigr)
\end{equation}
For our method, we use the proposed score $S(\bm{x}|y^*)$ with the penalty $\mathrm{ReLU}\bigl(\log \tau - S(\bm{x}|y^*)\bigr)$.

\begin{table*}[ht]
\centering
\caption{Performance comparison of validity across datasets with different density estimators.}
\label{tab:app-density-proxy-plus-nce}
\begin{tabular}{l ccccc}
\toprule
\textbf{Dataset} 
& \textbf{None} & \textbf{Disc2} & \textbf{KNN} & \textbf{LOF} & \textbf{NCE (ours)} \\
\midrule
Adult  & 0.815 & 0.823 & 0.878 & 0.886 & 0.901 \\
Blood  & 0.495 & 0.424 & 0.560 & 0.696 & 0.662 \\
Compas & 0.642 & 0.689 & 0.645 & 0.720 & 0.729 \\
HELOC  & 0.718 & 0.728 & 0.681 & 0.742 & 0.757 \\

\bottomrule
\end{tabular}
\end{table*}

Table~\ref{tab:app-density-proxy-plus-nce} shows that density signal generally improves validity compared to using no density regularization (\texttt{None}), indicating that discouraging low-density regions is beneficial for CE generation. However, off-the-shelf plug-in proxies (e.g., \texttt{disc2}, \texttt{KNN}, and \texttt{LOF}) exhibit noticeable dataset dependence. In contrast, the proposed conditional score $S(\cdot|y^*)$ (labeled as NCE) achieves the best validity on three out of four datasets and is overall more consistent. This suggests that a learned density-ratio signal tailored to the target class provides more reliable optimization guidance than generic neighborhood/outlier heuristics. Moreover, our method is naturally differentiable and incurs negligible overhead as it reuses the trained classifier, whereas KNN and LOF require nearest-neighbor searches at each ODE step.

\subsection{Consistency of NCE with other standard density estimation methods}
\label{sec:nce_density_consistency}
We validate whether our NCE-based density proxy (implemented via the density score) is consistent with widely used density/support estimators on real tabular datasets.
Concretely, we train a lightweight NCE discriminator to separate in-distribution samples from synthetic ``noise'' samples, and use the resulting score to quantify how noise-like a point is.

To assess agreement with classical density estimators, we compute Spearman's rank correlation $\rho$ between the NCE density proxy and each baseline score on the test set.
We focus on Spearman correlation because different estimators have incomparable scales, and rank consistency is the most meaningful notion of agreement across methods.

We compare against six standard density/support estimators spanning both local and global notions of support: (i) one-class SVM (OC-SVM), (ii) PCA+KDE (kernel density estimation after PCA), (iii) Mahalanobis distance under a global Gaussian model, (iv) Isolation Forest, (v) Elliptic Envelope (robust covariance), and (vi) a kNN radius-based log-density proxy. All baseline scores are oriented so that higher values indicate more in-distribution (denser) points.

As shown in Table~\ref{tab:ndensity_spearman}, the NCE density proxy displays meaningful rank-level agreement with standard estimators. While the best-matching baseline varies by dataset, the correlations remain significant throughout. The strongest agreement typically occurs with methods that capture neighborhood structure, such as kNN and PCA+KDE, or coarse global geometry like Mahalanobis distance.
Overall, these results suggest that the NCE proxy recovers a broadly compatible notion of in-distribution support, providing empirical justification for using it as a practical surrogate to identify low-support (tail) regions that are most relevant to robustness and boundary uncertainty.

\begin{table}[ht]
\centering
\caption{NCE density proxy and classical density estimators on real datasets (Spearman's $\rho$; higher is better).}
\small
\label{tab:ndensity_spearman}
% \resizebox{0.8\linewidth}{!}{
\begin{tabular}{lcccccc}
\toprule
\textbf{Dataset} 
& \textbf{OC-SVM} 
& \textbf{PCA+KDE} 
& \textbf{Mahalanobis} 
& \textbf{IsoForest} 
& \textbf{EllipticEnv} 
& \textbf{kNN} \\
\midrule
Adult & 0.2558 & 0.4015 & 0.5067 & 0.4317 & 0.3425 & 0.5276 \\
Compas & 0.8226 & 0.7348 & 0.6757 & 0.6301 & 0.6026 & 0.5994 \\
HELOC & 0.3957 & 0.2884 & 0.5673 & 0.5481 & 0.4891 & 0.3949 \\
Blood & 0.1296 & 0.6935 & 0.1933 & 0.4540 & 0.5088 & 0.6513 \\
\bottomrule
\end{tabular}
% }
\end{table}

\subsection{Trade-off between Cost and Validity}
\label{sec:trade_off}
\begin{figure*}[ht]
    \centering

    \begin{subfigure}[t]{0.33\textwidth}
        \centering
        \includegraphics[width=\textwidth]{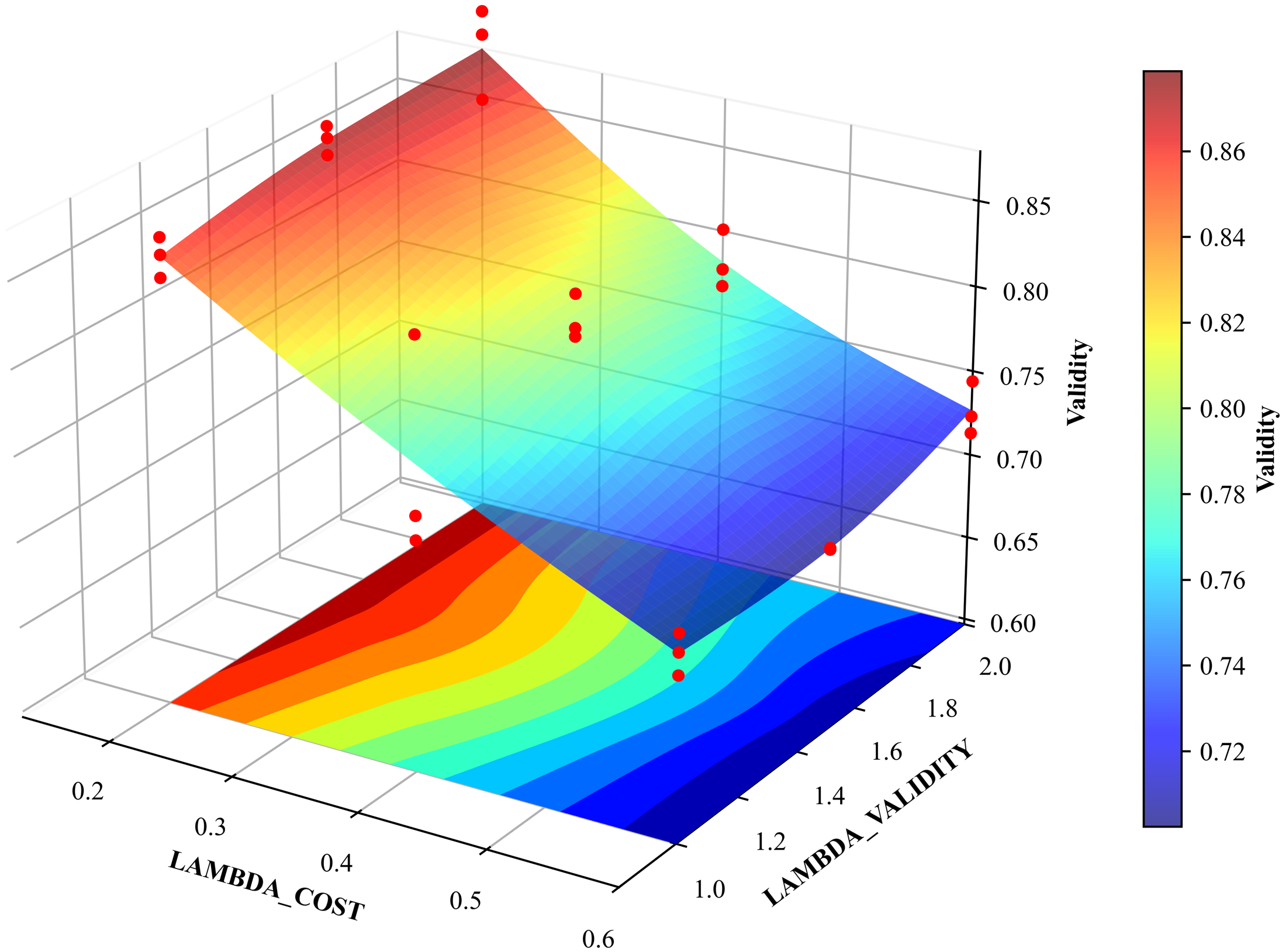}
        \caption{Validity on Adult}
    \end{subfigure}
    \hfill
    \begin{subfigure}[t]{0.33\textwidth}
        \centering
        \includegraphics[width=\textwidth]{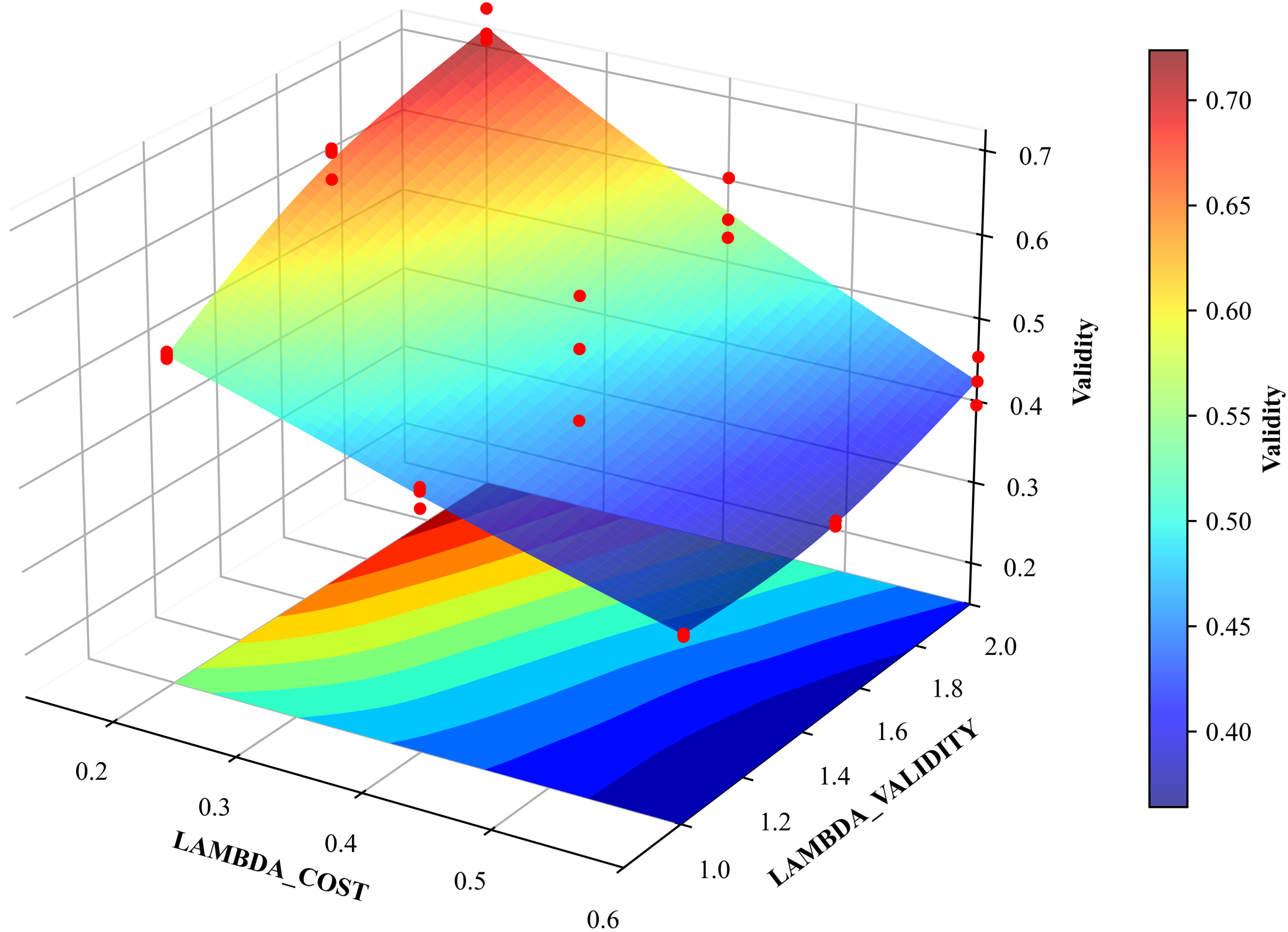}
        \caption{Validity on compas}
    \end{subfigure}
    \hfill
    \begin{subfigure}[t]{0.33\textwidth}
        \centering
        \includegraphics[width=\textwidth]{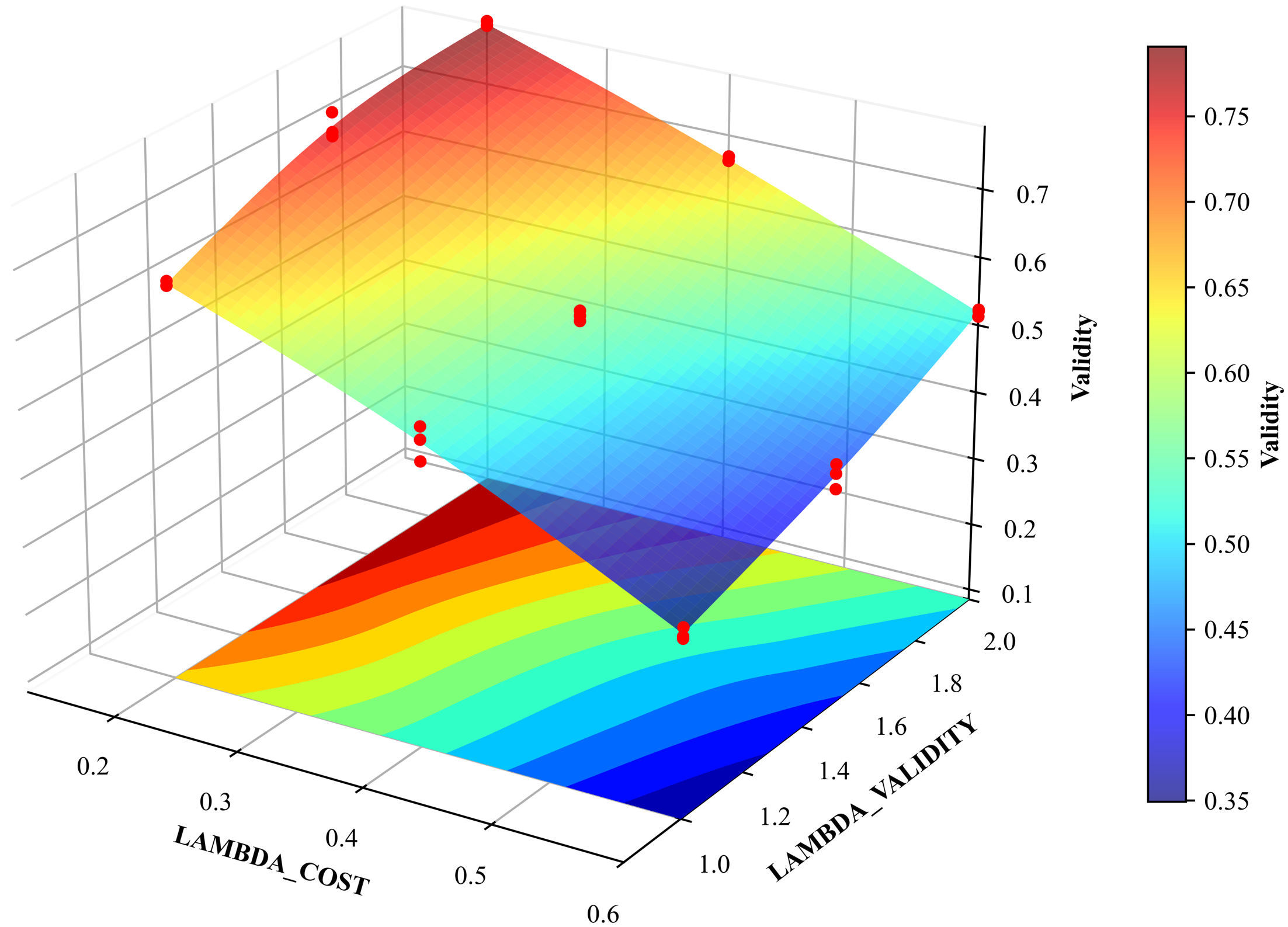}
        \caption{Validity on HELOC}
    \end{subfigure}

    \begin{subfigure}[t]{0.33\textwidth}
        \centering
        \includegraphics[width=\textwidth]{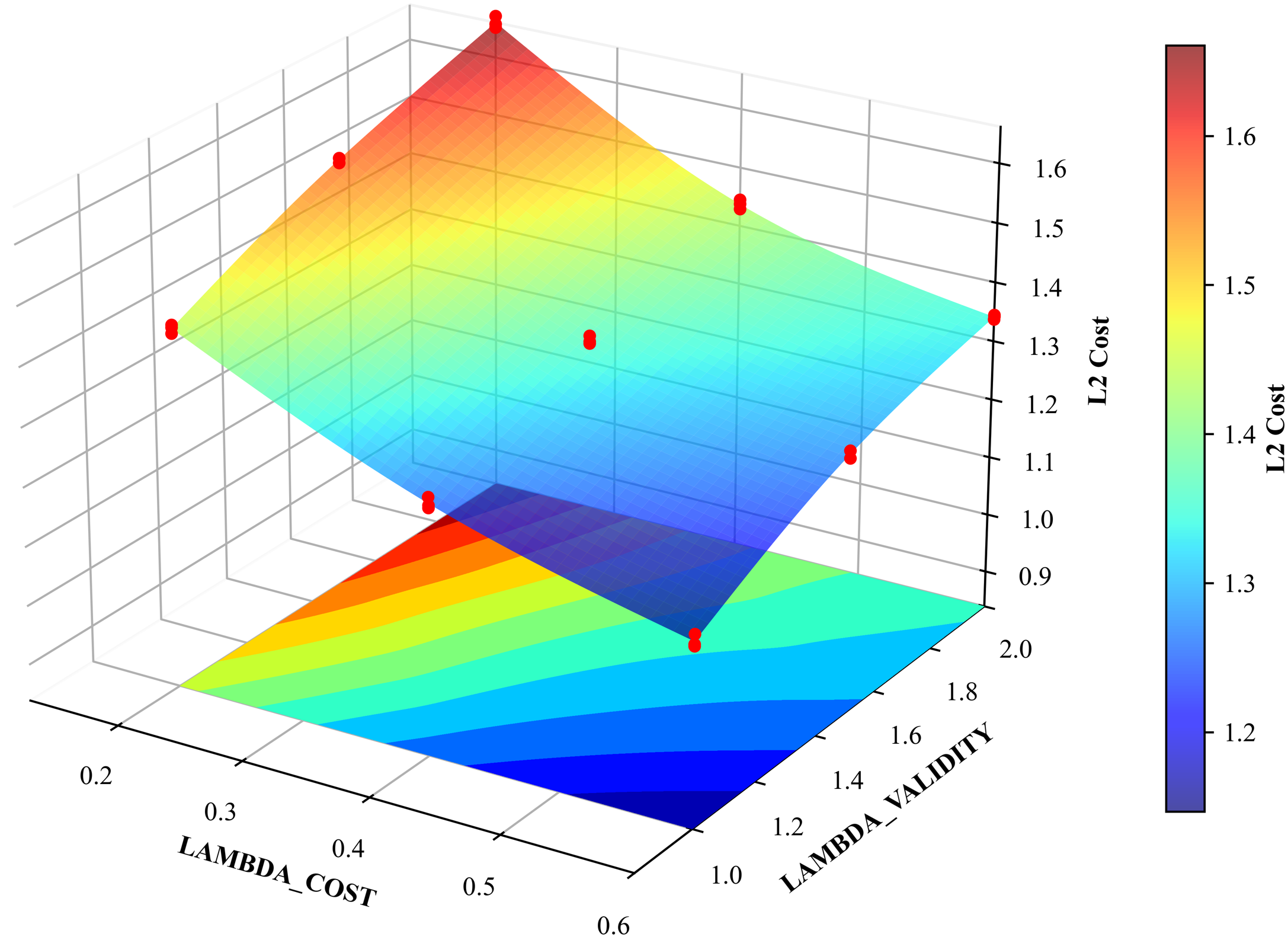}
        \caption{L2 cost on Adult}
    \end{subfigure}
    \hfill
    \begin{subfigure}[t]{0.33\textwidth}
        \centering
        \includegraphics[width=\textwidth]{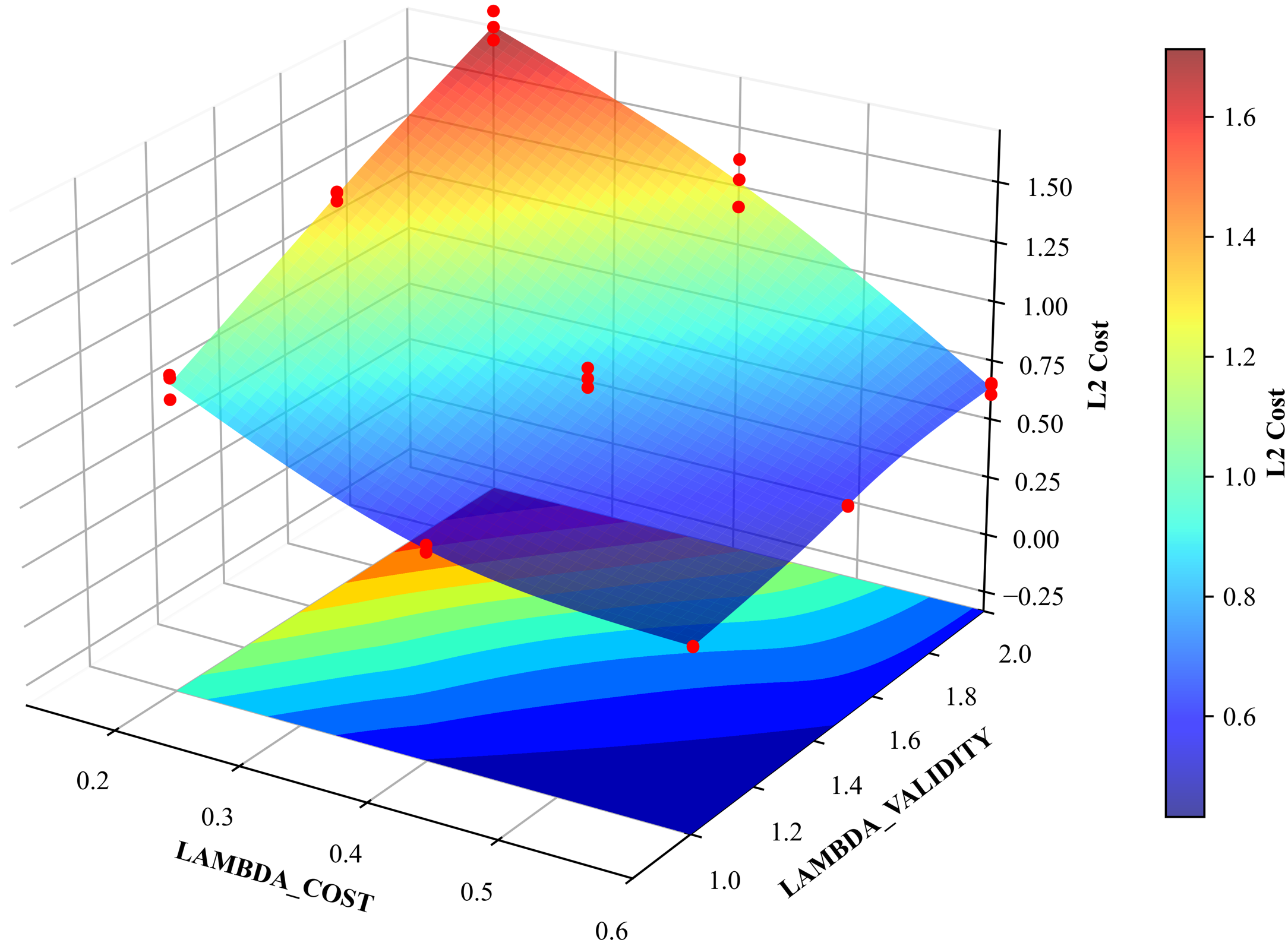}
        \caption{L2 cost on compas}
    \end{subfigure}
    \hfill
    \begin{subfigure}[t]{0.33\textwidth}
        \centering
        \includegraphics[width=\textwidth]{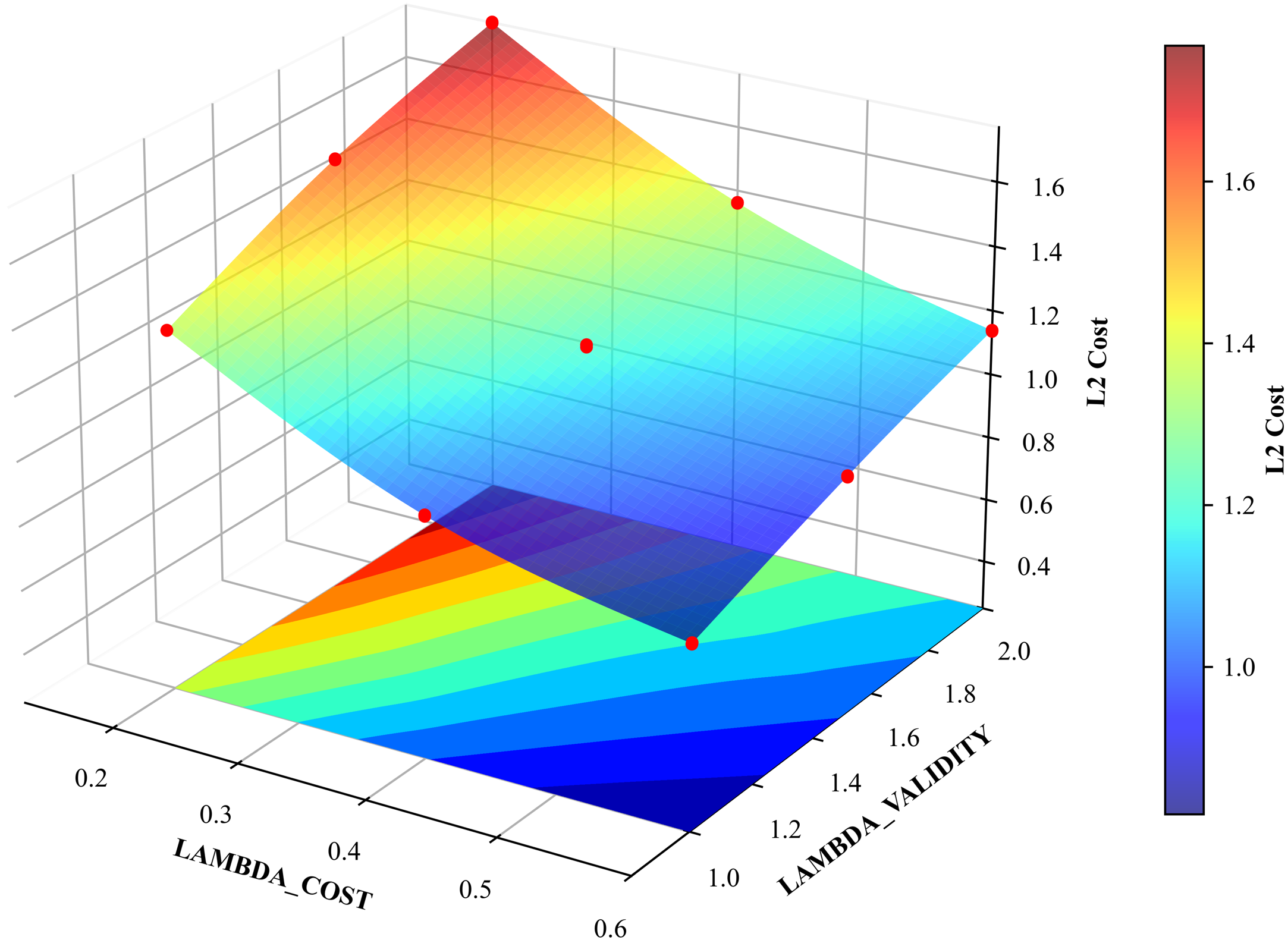}
        \caption{L2 cost on HELOC}
    \end{subfigure}

    \caption{Cost--validity trade-off surfaces under varying objective weights. We visualize the validity (top row) and $\ell_2$ cost (bottom row) of generated counterfactual explanations as functions of the objective weights $\lambda_{\text{valid}}$ and $\lambda_{\text{cost}}$
on the Adult, Compas, and HELOC datasets.}
    \label{fig:trade-off}
\end{figure*}

We examine how the objective weights on validity and cost shape the CE solutions.
We sweep $\lambda_{\mathrm{valid}}\in\{1,1.25,1.5,1.75,2\}$ and $\lambda_{\mathrm{cost}}\in\{0.2,0.3,0.4,0.5,0.6\}$ while keeping other training and inference settings fixed.

Across datasets, the sweep shows a consistent Pareto trade-off between validity and cost. 
Increasing $\lambda_{\mathrm{cost}}$ systematically reduces $\ell_2$ cost, typically at the expense of validity: stronger cost regularization discourages large moves needed to cross the decision boundary, reducing the flip rate. Conversely, increasing $\lambda_{\mathrm{valid}}$ shifts solutions toward higher validity, usually with higher $\ell_2$ cost, reflecting a stronger incentive to prioritize boundary crossing over proximity. For a fixed $(\lambda_{\mathrm{valid}},\lambda_{\mathrm{cost}})$, the spread induced by different \texttt{lambda\_den} values is generally smaller than the shifts caused by changing $\lambda_{\mathrm{valid}}$ or $\lambda_{\mathrm{cost}}$, supporting a clean separation of Density-related effects in a standalone experiment. Overall, the resulting scatter forms the expected trade-off surface: moving toward higher validity generally requires paying a higher cost, and vice versa.
A practical operating point can be chosen near the ``knee'' of this curve, where most of the validity gains are retained without incurring the steepest cost increase (often around mid-range $\lambda_{\mathrm{valid}}$ with $\lambda_{\mathrm{cost}}=0.4$ in our sweeps).

\begin{table}[ht]
\centering
\caption{Sensitivity analysis of NCE hyperparameters. Cost($\downarrow$) and Validity ($\uparrow$) are reported with mean$\pm$std over 5 different seeds.}
\label{tab:app-noise-sensitivity}
\footnotesize
\begin{tabular}{lclcccc}
\toprule
Parameter & & Value & \multicolumn{2}{c}{Adult} & \multicolumn{2}{c}{Compas} \\
\cmidrule(lr){4-5} \cmidrule(lr){6-7}
        & &       & Cost & Validity & Cost & Validity \\
\midrule
\multirow{5}{*}{$\tau$ (Ratio)} 
& & 0.1 & $1.529\pm.178$ & $0.859\pm.116$ & $1.172\pm.022$ & $0.682\pm.102$ \\
& & 0.2 & $1.597\pm.194$ & $0.901\pm.052$ & $1.235\pm.020$ & $0.729\pm.067$ \\
& & 0.3 & $1.580\pm.165$ & $0.887\pm.049$ & $1.294\pm.018$ & $0.710\pm.046$ \\
& & 0.5 & $1.533\pm.144$ & $0.873\pm.044$ & $1.318\pm.027$ & $0.757\pm.082$ \\
& & 1.0 & $1.507\pm.175$ & $0.895\pm.042$ & $1.363\pm.059$ & $0.744\pm.044$ \\
\midrule
\multirow{5}{*}{$C$ (Range)}
& & 0.2 & $3.742\pm.556$ & $0.662\pm.187$ & $0.694\pm.024$ & $0.567\pm.147$ \\
& & 0.5 & $2.934\pm.603$ & $0.727\pm.103$ & $1.054\pm.023$ & $0.653\pm.091$ \\
& & 1.0 & $1.857\pm.210$ & $0.892\pm.032$ & $1.093\pm.015$ & $0.684\pm.036$ \\
& & 2.0 & $1.355\pm.108$ & $0.883\pm.038$ & $1.180\pm.016$ & $0.726\pm.055$ \\
& & $\beta \cdot\max_{x\in\mathcal{D}_{\mathrm{train}}}\|x\|_{\infty}$ & $1.597\pm.194$ & $0.901\pm.052$ & $1.176\pm.017$ & $0.729\pm.067$ \\
\bottomrule
\end{tabular}
\end{table}

\subsection{Discussion of Noise (Ratio and Range)}
\label{sec:discussion_of_noise}
\noindent\textbf{Detailed Analysis.} 
In Section~\ref{sec:preliminaries}, we defined the trust region threshold relative to the reference noise. In our NCE implementation, we do not perform complex quantile estimation. Instead, the boundary is controlled directly by two hyperparameters: the noise ratio $\tau$ (which sets the strictness) and the clipping range $C$ which sets the noise density. Here, $\tau$ corresponds strictly to the sampling ratio $N_{\text{noise}}/N_{\text{data}}$. We derive this relationship and analyze the impact of $C$.

The NCE discriminator $f_\phi$ learns to distinguish between the data distribution $p_{\text{data}}(\bm{x})$ and the noise distribution $p_{\text{noise}}(\bm{x})$. The implicit decision boundary (where $f_\phi$ outputs 0.5) converges to the level set:
\begin{equation}
    p_{\text{data}}(\bm{x}) = \tau \cdot p_{\text{noise}}(\bm{x})
\end{equation}
In our framework, noise is sampled uniformly from a hypercube of side length $2C$ in $d$ dimensions. Thus, the noise density is constant:
\begin{equation}
    p_{\text{noise}}(\bm{x}) = \frac{1}{\text{Volume}} = \frac{1}{(2C)^d}
\end{equation}
Consequently, the implicitly learned density threshold is $ \tau \cdot (2C)^{-d}$. The corresponding rejection quantile $\alpha$ (the proportion of data excluded from the trust region) is strictly a function of $C$:
\begin{equation}
    \alpha(C) = \int_{\{\bm{x} : p_{\text{data}}(\bm{x}) < \tau(C)\}} p_{\text{data}}(\bm{x}) d\bm{x}
    \label{eq:alpha_func}
\end{equation}
This derivation proves that controlling the geometric parameter $C$ is mathematically equivalent to controlling the statistical threshold $\alpha$.
As the noise volume $(2C)^d$ increases, the threshold decreases, leading to a smaller $\alpha$. A small $C$ implies a large $\alpha$, causing the model to reject difficult samples (Survivorship Bias).

\noindent\textbf{Experiments.}
Our NCE-based density score $S(\bm{x}\mid y^*)$ is learned by augmenting the $K$ data classes with an additional uniform-noise class.
This construction introduces two practical hyperparameters: (i) the noise ratio ($\tau$), i.e., the fraction of noise samples used to train the $(K{+}1)$-way discriminator; and (ii) the noise sampling range ($C$), implemented by drawing uniform noise within a clipped box in the feature space.

To assess sensitivity, we vary one factor at a time while keeping all other settings fixed. Table~\ref{tab:app-noise-sensitivity} shows that DensityFlow is not very sensitive to \texttt{noise\_ratio}, except that overly small ratios (e.g., 0.1) can hurt performance due to insufficient noise samples for learning a stable NCE signal.
Regarding the noise range $C$, narrow ranges substantially degrade validity, while broader ranges improve results. This ensures that the noise distribution covers the entire support of the data with a small margin (buffer zone).

\begin{figure}[hbt]
    \centering
    \includegraphics[width=0.7\linewidth]{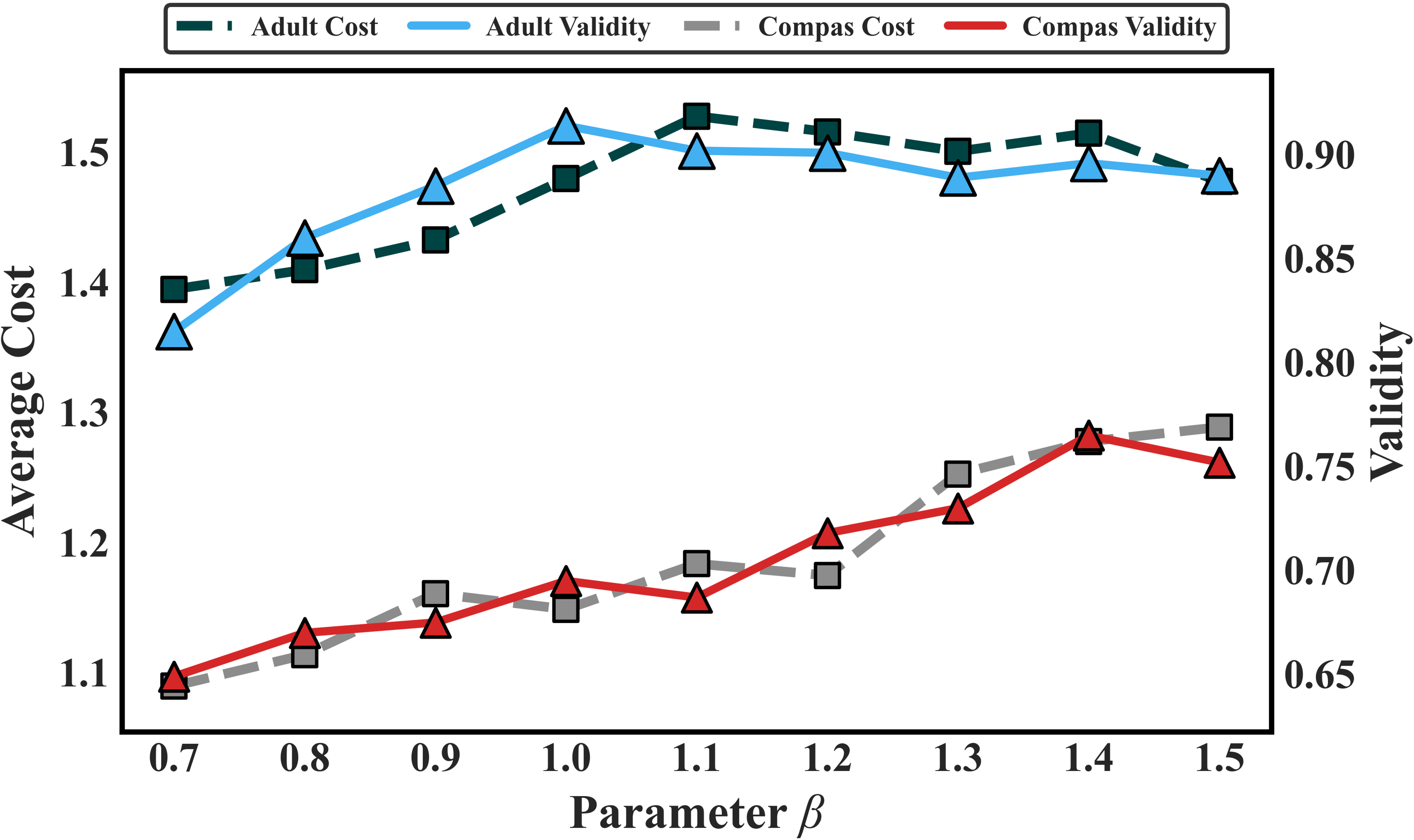}
    \caption{Sensitivity of $\beta$. We report both Cost (solid) and Validity (dashed). As $\beta$ increases, both metrics stabilize at a robust plateau.}
    \label{fig:beta_sensitivity}
\end{figure}
\noindent\textbf{Impact of the noise bound $C$.}
We analyze the sensitivity of the noise range $C$, which is controlled by the factor $\beta$ via $C = \beta \cdot \max_{x}\|x\|_\infty$. $C$ alters the noise density exponentially, while the noise ratio $\tau$ scales the density threshold linearly in Appendix~\ref{sec:discussion_of_noise}. As shown in Fig.~\ref{fig:beta_sensitivity}, initially, the low metrics at strict settings ($\beta < 0.9$) are artifacts of excluding difficult samples. As $\beta \in [1.0, 1.4]$, the cost rises to reflect the ``cost of robustness'', forcing the generator to traverse the trust regions. Validity also stabilizes at a high plateau, indicating that DensityFlow is robust to $\beta$ variations and insensitive to precise tuning. 

\begin{figure*}[ht]
    \centering
    \begin{subfigure}[t]{0.47\textwidth}
        \centering
        \includegraphics[width=\textwidth]{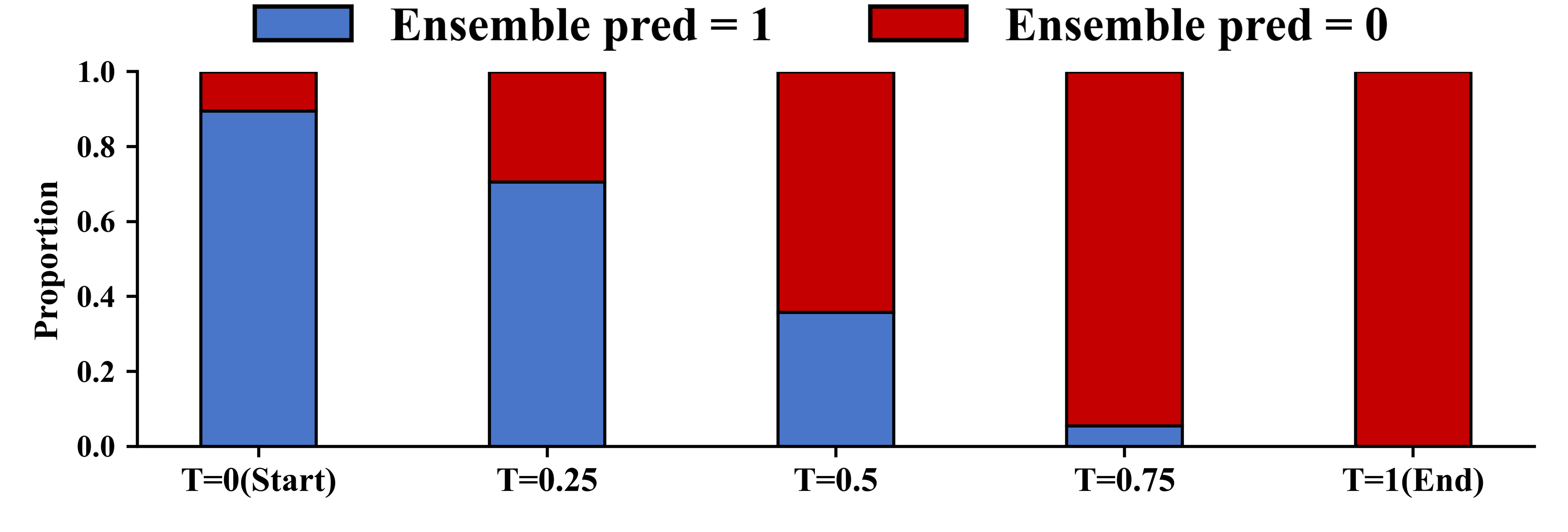}
        \caption{Adult}
        \label{fig:adult_ensemble}
    \end{subfigure}
    \hfill
    \begin{subfigure}[t]{0.47\textwidth}
        \centering
        \includegraphics[width=\textwidth]{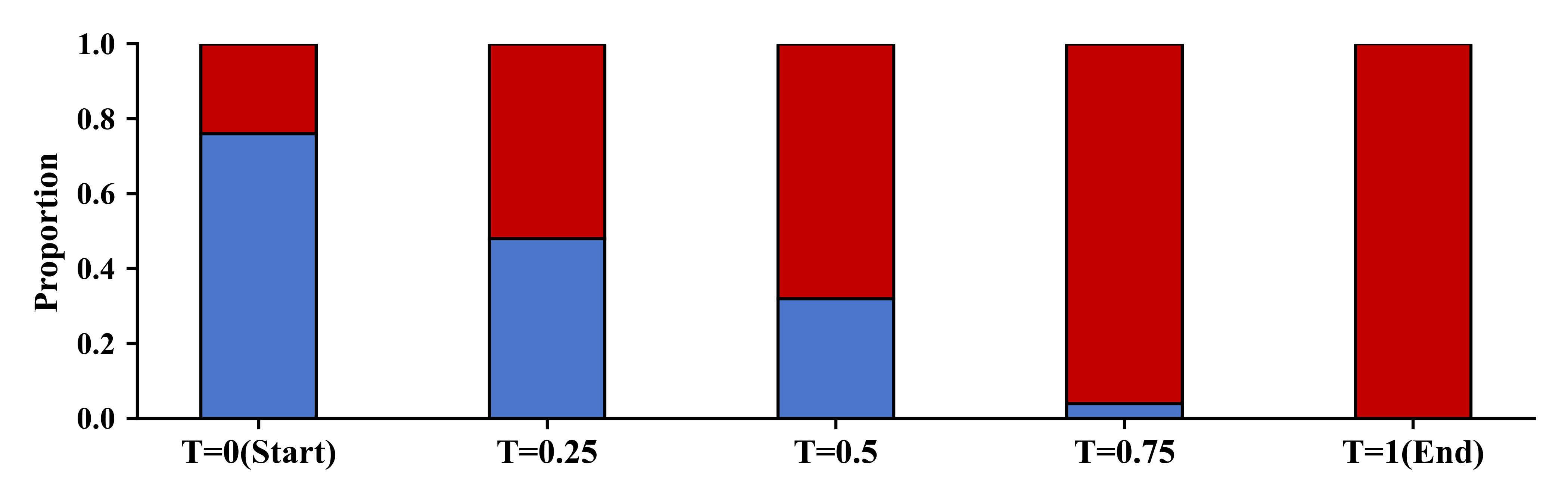}
        \caption{Compas}
        \label{fig:compas_ensemble}
    \end{subfigure}
    
    \begin{subfigure}[t]{0.47\textwidth}
        \centering
        \includegraphics[width=\textwidth]{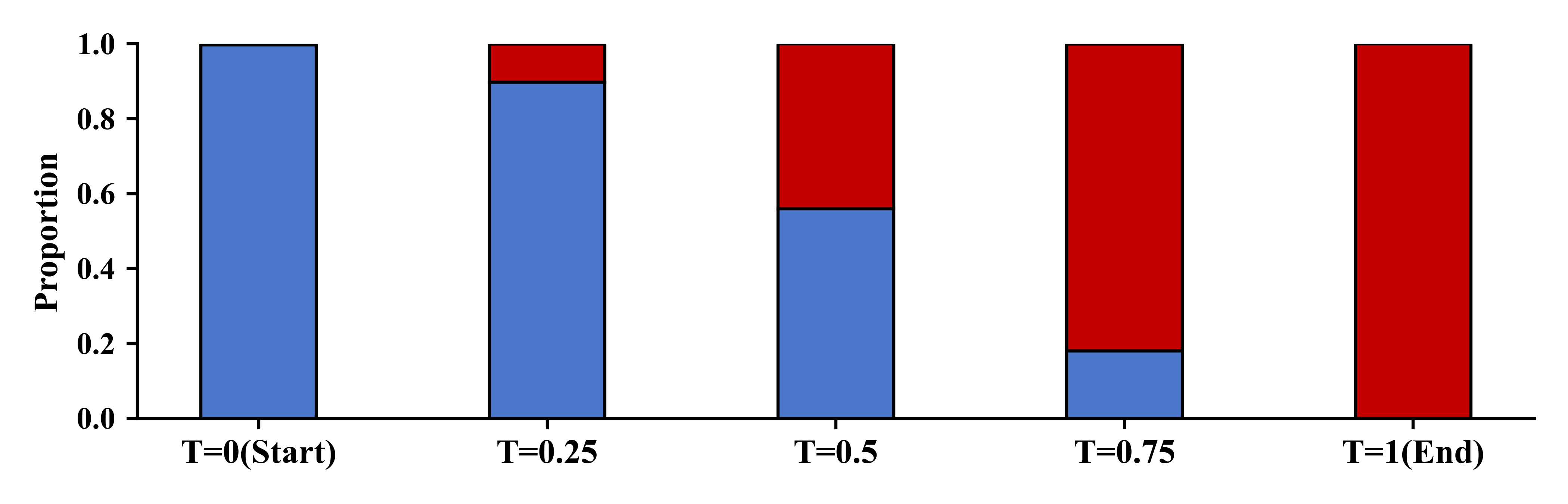}
        \caption{HELOC}
        \label{fig:heloc_ensemble}
    \end{subfigure}
    \hfill
    \begin{subfigure}[t]{0.47\textwidth}
        \centering
        \includegraphics[width=\textwidth]{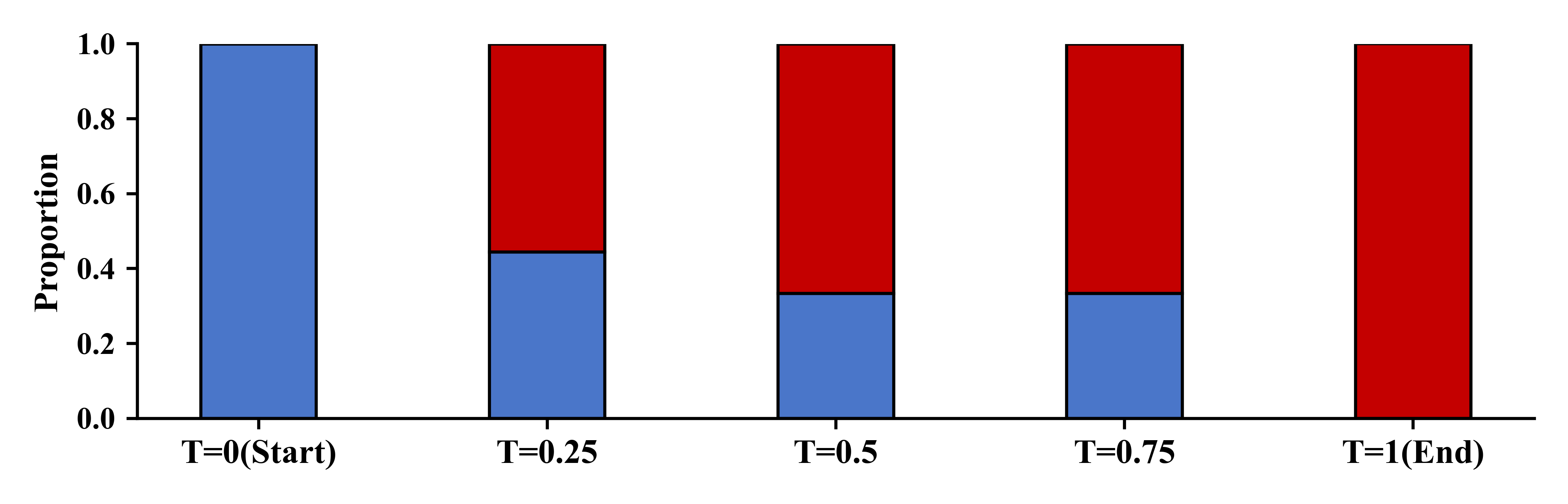}
        \caption{Blood}
        \label{fig:blood_ensemble}
    \end{subfigure}
    
    \caption{Ensemble majority-vote label proportions along the counterfactual trajectory generated by our flow model. For each dataset, we sample points at $T \in \{0,0.25,0.5,0.75,1\}$ and report the fraction predicted as class~1 (blue) vs.\ class~0 (red), illustrating a consensus--transition--consensus pattern from start to end.}
    \label{fig:Ensemble_Agreement}
\end{figure*}

\subsection{Ensemble Agreement Along the Trajectory}
Here, we include an intuition visualization. For test instances, we generate counterfactual trajectories with our DensityFlow and sample points at $T \in \{0, 0.25, 0.5, 0.75, 1\}$. At each $T$, we query the black-box ensemble and report the fraction of samples whose ensemble majority vote predicts class~1 (blue) versus class~0 (red), shown as a stacked bar plot over $T$.

Fig.~\ref{fig:Ensemble_Agreement} exhibits a consistent ``source-side consensus $\rightarrow$ mid-trajectory transition $\rightarrow$ target-side consensus'' pattern: predictions are largely stable near $T=0$, become more mixed around intermediate $T$ (reflecting the boundary-crossing transition band), and re-stabilize near $T=1$. This provides a compact visual sanity check for our narrative that counterfactual generation traverses an unstable region near the decision boundary before settling in a more stable region.

\subsection{Density Score vs.\ Ensemble Disagreement}
\label{sec:rq2}
We empirically assess Assumption~\ref{ass:density_variance} by testing whether our learned density score identifies regions with elevated ensemble (epistemic) disagreement. We use the class-conditional density score $S(\bm{x}\mid y^*)$, where larger values indicate \emph{higher} local data density. Ensemble disagreement is quantified by $\mathrm{Var}(p)=\mathrm{Var}_{m=1}^M\!\big(p_m(y^*\mid \bm{x})\big)$,
computed from per-model predicted probabilities queried from the ensemble $\mathcal{M}$.

On test points, we report (i) the Spearman correlation between \emph{negative} density and disagreement, i.e., $\rho_s(-S(\bm{x}\mid y^*),\mathrm{Var}(p))$,
and (ii) a tail concentration metric:
\[
\text{Tail}=
\frac{\mathbb{E}[\mathrm{Var}(p)\mid S(\bm{x}\mid y^*) \in \text{bottom }20\%\ (\text{lowest density})]}
     {\mathbb{E}[\mathrm{Var}(p)\mid S(\bm{x}\mid y^*) \in \text{top }20\%\ (\text{highest density})]}
\]
To probe a controlled density shift, we construct an interpolation path
$x(\alpha)=(1-\alpha)x_{\rm real}+\alpha x_{\rm noise}$ and report the end-to-end increase:
\[
\Delta\mathrm{Var}_{0\to1}=\mathbb{E}[\mathrm{Var}(p)\mid \alpha=1]-\mathbb{E}[\mathrm{Var}(p)\mid \alpha=0]
\]
This interpolation serves as a controlled density-shift probe and is not intended to preserve actionability. $[\cdot,\cdot]$ denote bootstrapped 95\% confidence intervals. Stronger alignment between \emph{lower} density and higher disagreement corresponds to: (i) positive $\rho_s(-S(\bm{x}\mid y^*),\mathrm{Var}(p))$, (ii) Tail $>1$, and (iii) positive $\Delta\mathrm{Var}_{0\to1}$.

\begin{table}[ht]
\centering
\small
\caption{Key statistics for whether lower density (measured by $-S(\bm{x}\mid y^*)$) aligns with higher ensemble disagreement.}
\label{tab:rq2_summary_sc}
% \resizebox{0.8\linewidth}{!}{
\begin{tabular}{lccc}
\toprule
\textbf{Dataset} &
$\rho_s(-S(\bm{x}\mid y^*),\mathrm{Var}(p)) $ &
Tail &
$\mathbf{\Delta} \mathrm{Var}_{\alpha:0\to1} $ \\
\midrule
Adult  & .207 [.187,.259] & 2.024 [1.571,2.649] & .033 [.031,.037]\\
Compas & .238 [.195,.286] & 1.626 [1.363,1.956] & .017 [.015,.018] \\
HELOC  & .216 [.171,.258] & 1.793 [1.592,2.039] & .026 [.024,.027] \\
Blood  & .187 [.125,.337] & 1.556 [1.121,2.056] & .012 [.011,.014] \\
\bottomrule
\end{tabular}
% }
\end{table}

Table~\ref{tab:rq2_summary_sc} shows consistent alignment between lower density and higher model disagreement: $\rho_s(-S(\bm{x}\mid y^*),\mathrm{Var}(p))$ is positive with 95\% CIs excluding $0$, Tail$>1$ indicates disagreement concentrates in the low-density tail, and $\Delta\mathrm{Var}_{0\to1}>0$ confirms disagreement increases along the real-to-noise (lower-density) interpolation.

\subsection{Running Time Comparison}
Here we report the end-to-end runtime of the full pipeline. Since the compared baselines are quite different in nature(e.g., search-based, generation-based, optimization-based), a unified theoretical complexity comparison would be less informative. The results in Table~\ref{tab:running_time} show that DensityFlow is slightly more expensive than some simpler baselines, which is expected given the additional modules. However, this cost is accompanied by substantially better robustness under MM.

\begin{table}[h]
\centering
\caption{Running time comparison (in seconds)}
\label{tab:running_time}
\begin{tabular}{lrrrrr}
\toprule
\textbf{Dataset/Method} & \textbf{Product\_mip} & \textbf{CeFlow} & \textbf{BetaRCE} & \textbf{Argument} & \textbf{DensityFlow} \\
\midrule
Spirals & 704.7 & 394.6 & 747.9 & 205.7 & 655.6 \\
HELOC & 335.1 & 881.5 & 427.3 & 647.7 & 734.0 \\
\bottomrule
\end{tabular}
\end{table}

\subsection{Evaluation on External Plausibility Metrics}
Following prior works~\cite{leofante2023counterfactual}, we add experiments on external plausibility metrics, LOF~\cite{breunig2000lof} and Isolation Forest~\cite{liu2008isolation}. The results in Table~\ref{tab:plausibility} show that DensityFlow achieves better plausibility under both external detectors.
At the same time, we would like to clarify the role of these metrics in our setting. Under MM, our primary question is whether a CE remains stably valid across a set of plausible models. Therefore, LOF or Isolation Forest can serve as useful external plausibility indicators, but they should not be treated as the gold standard for evaluation. 

\begin{table}[htbp]
\centering
\caption{Performance comparison on two plausibility metrics LOF($\downarrow$) $|$ ISOFOREST($\downarrow$).}
\label{tab:plausibility}
\begin{tabular}{lccccc}
\toprule
\textbf{Dataset/Method} & \textbf{Product\_mip} & \textbf{CeFlow} & \textbf{BetaRCE} & \textbf{Argument} & \textbf{DensityFlow} \\
\midrule
Spirals & 1.29 $|$ 0.53 & 1.25 $|$ 0.59 & 1.13 $|$ 0.49 & 1.16 $|$ 0.50 & \textbf{1.07} $|$ \textbf{0.48} \\
HELOC   & 1.54 $|$ 0.44 & 2.59 $|$ 0.48 & 3.17 $|$ 0.57 & 1.07 $|$ \textbf{0.42} & \textbf{1.04} $|$ \textbf{0.42} \\
\bottomrule
\end{tabular}
\end{table}

\subsection{Scalability to High-Dimensional Data}
\label{app:high_dim}
Both density estimation and continuous flow generation face inherent challenges when scaled to high-dimensional spaces due to the `curse of dimensionality'. In this part, we explore a primary direction to scale DensityFlow to high-dimensional data and discuss their respective advantages and limitations.

A straightforward approach to mitigate dimensionality is feature selection (FS). FS aims to identify the most relevant features for the prediction target to build more interpretable and robust models~\cite{tan2024graph}. We designed an FS-variant (DensityFlow-FS) that employs LassoNet~\cite{lemhadri2021lassonet} to identify the Top-K features most relevant to the target label. Only these selected features are fed into the continuous flow generation, while the remaining features are kept frozen. Table~\ref{tab:musk_fs} shows the performance of DensityFlow-FS on the Musk dataset (6598, 169) across varying $K$. The results show that extracting the Top-5 or Top-10 features achieves strong validity and low transport cost. However, as $K$ increases to 20, validity slightly drops while the cost increases, suggesting that filtering out irrelevant dimensions effectively reduces noise and stabilizes the ODE trajectory.

\begin{table}[h]
\centering
\caption{Performance of \textit{DensityFlow-FS} on the Musk dataset with different numbers of selected features.}
\label{tab:musk_fs}
\begin{tabular}{lcc}
\toprule
\textbf{Method} & \textbf{Cost ($\downarrow$)} & \textbf{Validity ($\uparrow$)} \\
\midrule
DensityFlow-FS (Top-5)  & 8.57  & 0.86 \\
DensityFlow-FS (Top-10) & 8.73  & 0.85 \\
DensityFlow-FS (Top-20) & 10.26 & 0.80 \\
\bottomrule
\end{tabular}
\end{table}

\textbf{Failure Mode.} While feature selection is pragmatic, it fundamentally assumes that the discarded `non-predictive' features are independent of the modified predictive features. As noted in the main text's limitations, this assumption breaks down if a causal or physical coupling exists. For instance, in a water quality assessment dataset, a feature like `conductivity' might lack predictive power for potability and thus be discarded by LassoNet. If our framework drastically alters the `pH' level to achieve a counterfactual flip, the physical environment demands a corresponding shift in conductivity. Leaving the coupled non-predictive feature unchanged results in a physically impossible, out-of-distribution (OOD) sample (e.g., `high pH but near-zero conductivity'). Resolving this requires integrating causal graphs or structural causal models into the flow constraints, which remains a promising direction for future work.

\subsection{Discussion on Counterfactual Diversity}
\label{app:diversity}
In this section, we discuss the diversity of DensityFlow. Methodologically, two designs in DensityFlow prevent it, to a certain extent, from collapsing into a single class prototype or a global density mode: \textbf{(i)} Inference-time parallel stochastic search. We generate a batch of candidate trajectories by injecting Gaussian perturbations into the initial query. This stochastic initialization forces the velocity field generator to explore distinct pathways across the data manifold. \textbf{(ii)} Balanced composite objective. Our optimization objective Eq.~\eqref{eq:objective} balances the density potential and transport cost. Consequently, the model generally does not blindly chase the absolute highest-density regions of the target class.

Here, we conduct a qualitative analysis on the Adult dataset. In the training set, high-income ($\ge 50\text{K}$) exhibit a severe gender imbalance, where females constitute a small minority subgroup (only $15\%$) compared to males ($85\%$). 

We sample test instances initially labeled as `$\text{income} < 50\text{K}$' with an equal male-to-female ratio ($50/50$) as queries, and generate CEs for target `$\ge 50\text{K}$'. As summarized in Table~\ref{tab:sex_distribution}, while the generated CEs show a mild, natural shift toward the dominant modality, a substantial proportion ($43.02\%$) of the female queries successfully find valid recourse strictly within the female minority subgroup, instead of being forced or corrupted into the dominant male mode.

\begin{table}[h]
\centering
\caption{Comparison of sex feature distribution between original queries and generated CEs on the Adult dataset.}
\label{tab:sex_distribution}
\begin{tabular}{lcc}
\toprule
\textbf{Phase} & \textbf{Male ($\%$)} & \textbf{Female ($\%$)} \\
\midrule
Queries     & 50.09\% & 49.91\% \\
Robust CEs  & 56.98\% & 43.02\% \\
\bottomrule
\end{tabular}
\end{table}

Table~\ref{tab:qualitative_cases} presents an example from this minority subgroup. The query is a 19-year-old female with a high school education, working 12 hours per week. The counterfactual explanation updates her profile to a 30-year-old female working full-time (40 hours per week) in a professional role within the federal government. This modification aligns with general intuition, demonstrating that the generated direction is typically reasonable.

\begin{table}[h]
\centering
\caption{Qualitative counterfactual recourse example for a female instance in the minority subgroup.}
\label{tab:qualitative_cases}
\begin{tabular}{lll}
\toprule
\textbf{Feature} & \textbf{Original Sample} & \textbf{Counterfactual} \\
\midrule
Age             & 19.0              & 30.0 \\
Workclass       & Private           & Federal-gov \\
Education       & HS-grad           & Bachelors (or Specialty) \\
Marital Status  & Never-married     & Married-civ-spouse \\
Occupation      & Other-service     & Prof-specialty \\
Relationship    & Own-child         & Wife \\
Race / Sex      & White Female      & White Female \\
Hours per week  & 12.0              & 40.0 \\
\midrule
\textbf{Income} & $\mathbf{< 50\textbf{K}}$ & $\mathbf{\ge 50\textbf{K}}$ \\
\bottomrule
\end{tabular}
\end{table}

\newpage
\section{Visualization on Synthetic Datasets}
\label{app:visualization}
\begin{figure*}[ht]
    \centering
    \begin{subfigure}[t]{0.24\textwidth}
        \centering
        \includegraphics[width=\textwidth]{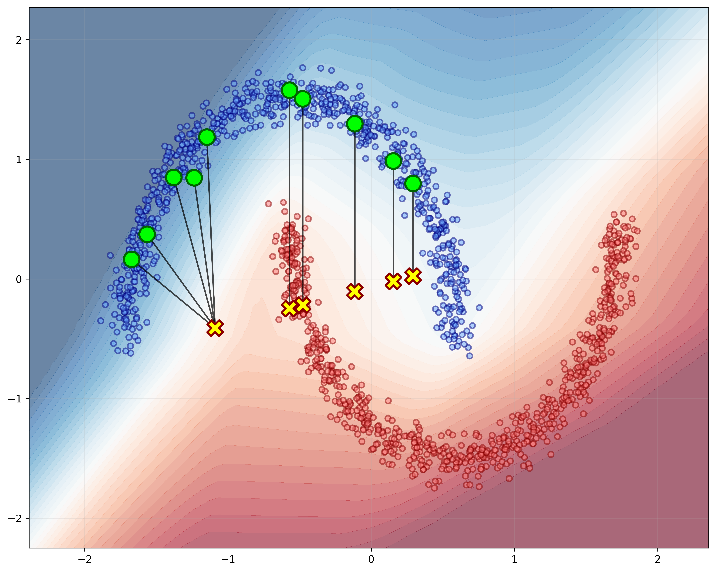}
        \caption{Product\_mip, Validity=0.492}
    \end{subfigure}
    \hfill
    \begin{subfigure}[t]{0.24\textwidth}
        \centering
        \includegraphics[width=\textwidth]{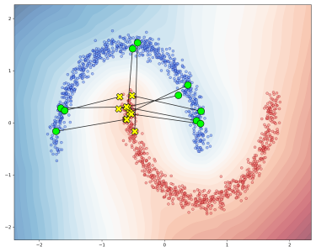}
        \caption{CeFlow, Validity=0.991}
    \end{subfigure}
    \hfill
    \begin{subfigure}[t]{0.24\textwidth}
        \centering
        \includegraphics[width=\textwidth]{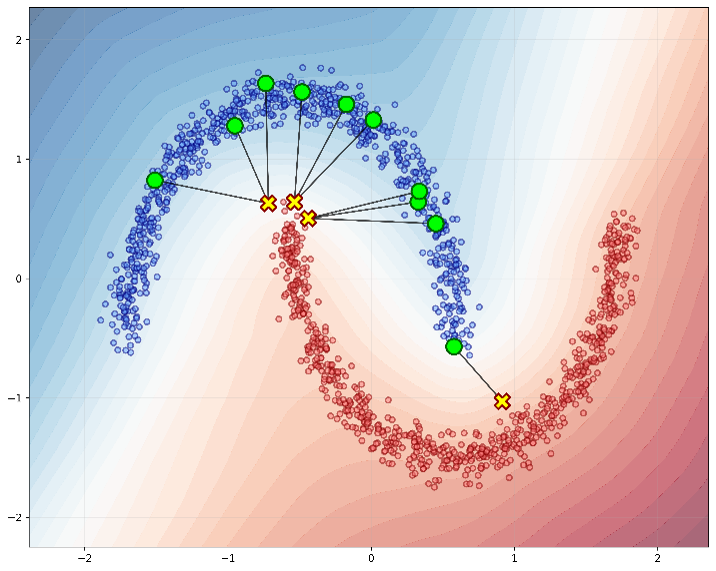}
        \caption{Argument, Validity=0.951}
    \end{subfigure}
    \hfill
    \begin{subfigure}[t]{0.24\textwidth}
        \centering
        \includegraphics[width=\textwidth]{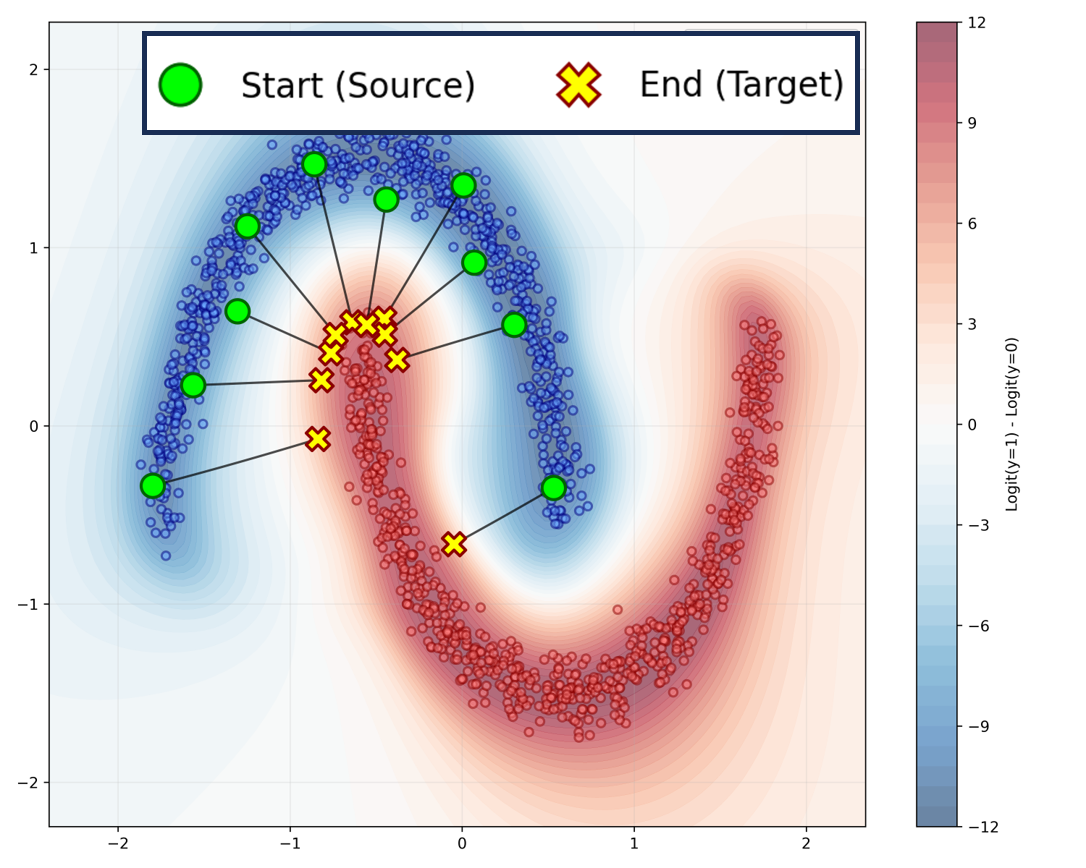}
        \caption{DensityFlow, Validity=0.997}
    \end{subfigure}

    \begin{subfigure}[t]{0.24\textwidth}
        \centering
        \includegraphics[width=\textwidth]{figure/visualization/spirals_product_mip.png}
        \caption{Product\_mip, Validity=0.482}
    \end{subfigure}
    \hfill
    \begin{subfigure}[t]{0.24\textwidth}
        \centering
        \includegraphics[width=\textwidth]{figure/visualization/spirals_ceflow.png}
        \caption{CeFlow, Validity=0.877}
    \end{subfigure}
    \hfill
    \begin{subfigure}[t]{0.24\textwidth}
        \centering
        \includegraphics[width=\textwidth]{figure/visualization/spirals_argumentative.png}
        \caption{Argument, Validity=0.943}
    \end{subfigure}
    \hfill
    \begin{subfigure}[t]{0.24\textwidth}
        \centering
        \includegraphics[width=\textwidth]{figure/visualization/spirals_densityflow.png}
        \caption{DensityFlow, Validity=0.972}
    \end{subfigure}

    \begin{subfigure}[t]{0.24\textwidth}
        \centering
        \includegraphics[width=\textwidth]{figure/visualization/circle_product_mip.png}
        \caption{Product\_mip, Validity=0.637}
    \end{subfigure}
    \hfill
    \begin{subfigure}[t]{0.24\textwidth}
        \centering
        \includegraphics[width=\textwidth]{figure/visualization/circle_ceflow.png}
        \caption{CeFlow, Validity=0.845}
    \end{subfigure}
    \hfill
    \begin{subfigure}[t]{0.24\textwidth}
        \centering
        \includegraphics[width=\textwidth]{figure/visualization/circle_argumentative.png}
        \caption{Argument, Validity=0.991}
    \end{subfigure}
    \hfill
    \begin{subfigure}[t]{0.24\textwidth}
        \centering
        \includegraphics[width=\textwidth]{figure/visualization/circle_densityflow.png}
        \caption{DensityFlow, Validity=0.994}
    \end{subfigure}

    \begin{subfigure}[t]{0.24\textwidth}
        \centering
        \includegraphics[width=\textwidth]{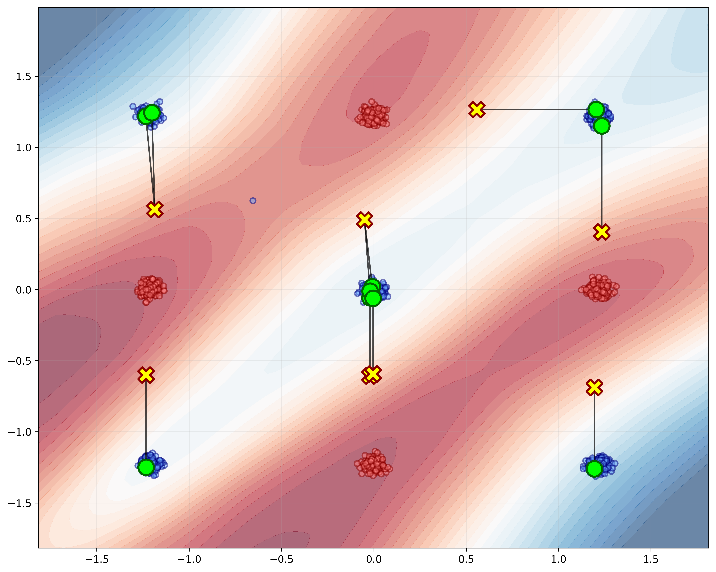}
        \caption{Product\_mip, Validity=0.332}
    \end{subfigure}
    \hfill
    \begin{subfigure}[t]{0.24\textwidth}
        \centering
        \includegraphics[width=\textwidth]{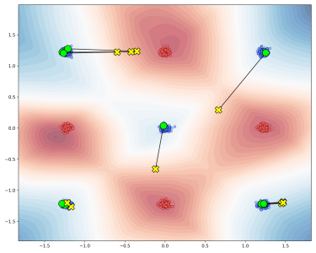}
        \caption{CeFlow, Validity=0.575}
    \end{subfigure}
    \hfill
    \begin{subfigure}[t]{0.24\textwidth}
        \centering
        \includegraphics[width=\textwidth]{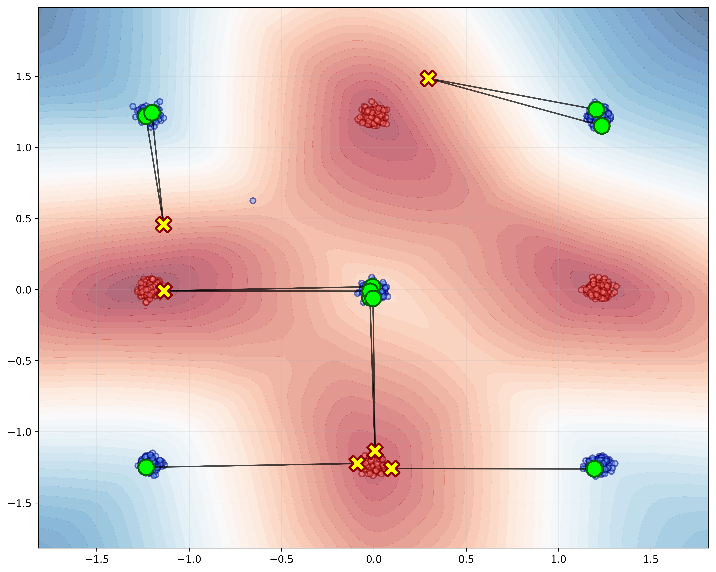}
        \caption{Argument, Validity=0.959}
    \end{subfigure}
    \hfill
    \begin{subfigure}[t]{0.24\textwidth}
        \centering
        \includegraphics[width=\textwidth]{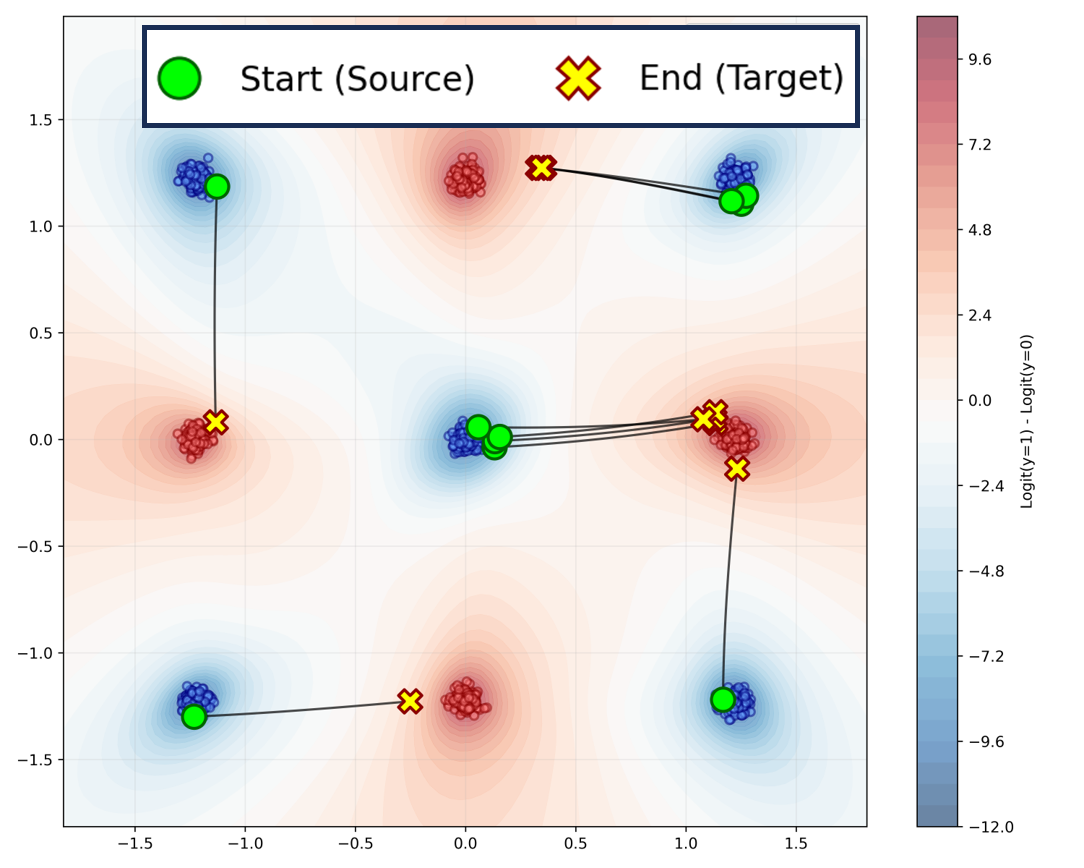}
        \caption{DensityFlow, Validity=0.964}
    \end{subfigure}
    
    \caption{Visualization results on four synthetic datasets.}
    \label{fig:moon_comparison}
\end{figure*}

% You can have as much text here as you want. The main body must be at most $8$
% pages long. For the final version, one more page can be added. If you want, you
% can use an appendix like this one.

% The $\mathtt{\backslash onecolumn}$ command above can be kept in place if you
% prefer a one-column appendix, or can be removed if you prefer a two-column
% appendix.  Apart from this possible change, the style (font size, spacing,
% margins, page numbering, etc.) should be kept the same as the main body.
%%%%%%%%%%%%%%%%%%%%%%%%%%%%%%%%%%%%%%%%%%%%%%%%%%%%%%%%%%%%%%%%%%%%%%%%%%%%%%%
%%%%%%%%%%%%%%%%%%%%%%%%%%%%%%%%%%%%%%%%%%%%%%%%%%%%%%%%%%%%%%%%%%%%%%%%%%%%%%%

\end{document}